\newtheorem{theorem}{Theorem}
\newtheorem{lemma}{Lemma}
\newtheorem{proposition}{Proposition}
\newtheorem{corollary}{Corollary}
\DeclareMathOperator*{\argmin}{arg\,min}
\setlist{leftmargin=*,itemsep=0pt}
\DeclareMathOperator*{\arginf}{arg\,inf}
\title{Extremal Domain Translation with \linebreak Neural Optimal Transport}
\author{
  Milena Gazdieva\thanks{Equal contribution}\\
  Skolkovo Institute of Science and Technology\\
  Moscow, Russia \\
  \texttt{milena.gazdieva@skoltech.ru} \\
  \And
  Alexander Korotin$^{*}$\\
  Skolkovo Institute of Science and Technology\\
  Artificial Intelligence Research Institute\\
  Moscow, Russia \\
  \texttt{a.korotin@skoltech.ru} \\
  \AND
  Daniil Selikhanovych\\
  Skolkovo Institute of Science and Technology\\
  Moscow, Russia\\
  \texttt{selikhanovychdaniil@gmail.com}
  \And
  Evgeny Burnaev \\
  Skolkovo Institute of Science and Technology\\
  Artificial Intelligence Research Institute\\
  Moscow, Russia\\
  \texttt{e.burnaev@skoltech.ru}
}
\begin{document}

\maketitle

\vspace{-7mm}
\begin{abstract}
\vspace{-2mm}
In many \textit{unpaired} image domain translation problems, e.g., style transfer or super-resolution, it is important to keep the translated image similar to its respective input image. We propose the extremal transport (ET) which is a mathematical formalization of the theoretically best possible unpaired translation between a pair of domains w.r.t. the given similarity function. Inspired by the recent advances in neural optimal transport (OT), we propose a scalable algorithm to approximate ET maps as a limit of partial OT maps. We test our algorithm on toy examples and on the unpaired image-to-image translation task. The code is publicly available at 
\begin{center}\vspace{-1mm}\small\url{https://github.com/milenagazdieva/ExtremalNeuralOptimalTransport}\end{center}
\end{abstract}
\vspace{-4mm}

\begin{figure}[h!]
\hspace{-10mm}
\begin{subfigure}[t]{0.432\textwidth}
  \begin{center}
     \includegraphics[width=1.1\linewidth]{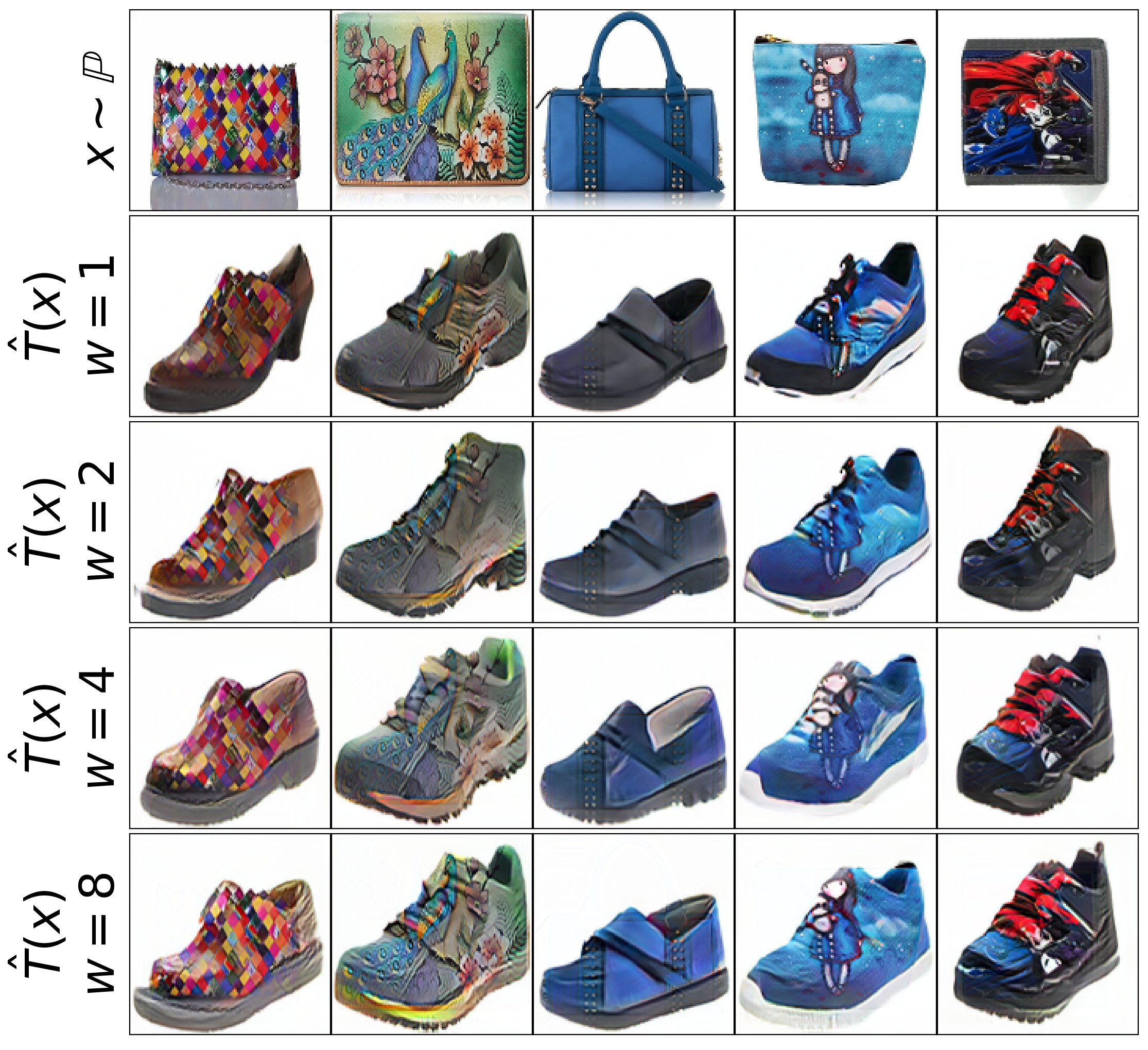}
  \end{center}
  \vspace{-2mm}
  \caption{\centering \textit{Handbag} $\rightarrow$ \textit{shoes} (128$\times$128).}
  \label{fig:handbag-shoes-it}
  \vspace{0.5mm}   
\end{subfigure}
\hspace{5mm}
\begin{subfigure}[t]{0.566\textwidth}
  \begin{center}
    \includegraphics[width=1.1\linewidth]{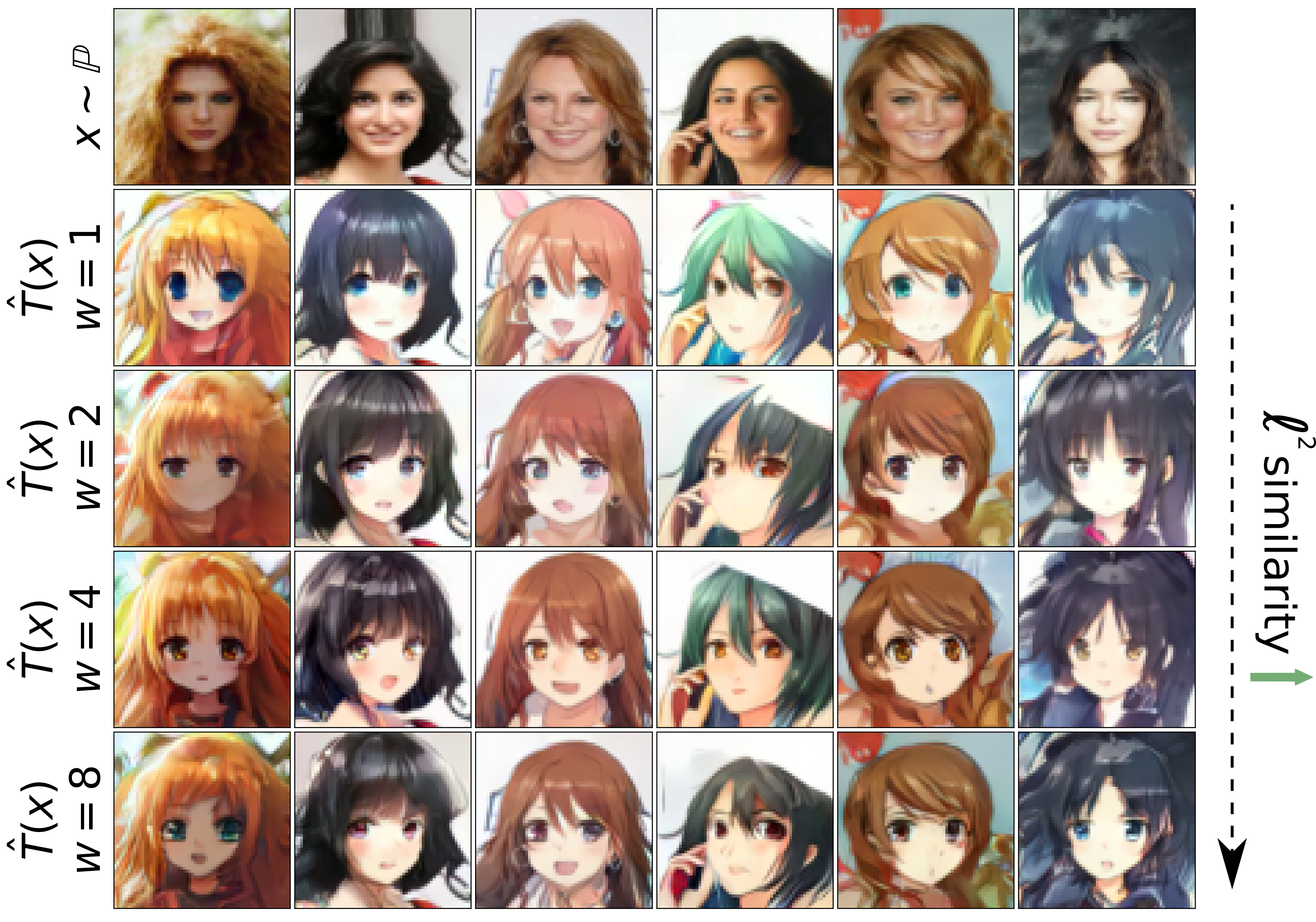}
  \end{center}
  \vspace{-3mm}
  \caption{\centering \textit{Celeba} (female) $\rightarrow$ \textit{anime} (64$\times$64).}
  \label{fig:celeba-anime-it}
\end{subfigure}
\vspace{-2mm}\hspace{-17mm}
\caption{\centering (Nearly) extremal transport with our Algorithm \ref{algorithm-it}. \linebreak Higher $w$ yields bigger similarity of $x$ and $T(x)$ in $\ell^{2}$.}
\vspace{-3mm}
\end{figure}

\vspace{-4.5mm}
\section{Introduction}
\vspace{-3mm}

The \textbf{unpaired} translation task \citep[Fig. 2]{zhu2017unpaired} is to find a map $x\mapsto T(x)$, usually a neural network, which transports the samples $x$ from the given source domain to the target domain. The key challenge here is that the \textbf{correspondence} between available data samples $x$ from the source and $y$ from target domains \textbf{is not given}. Thus, the task is ambiguous as there might exist multiple suitable $T$.

\begin{wrapfigure}{r}{0.5\textwidth}
\vspace{-5.9mm}
  \begin{center}
    \includegraphics[width=\linewidth]{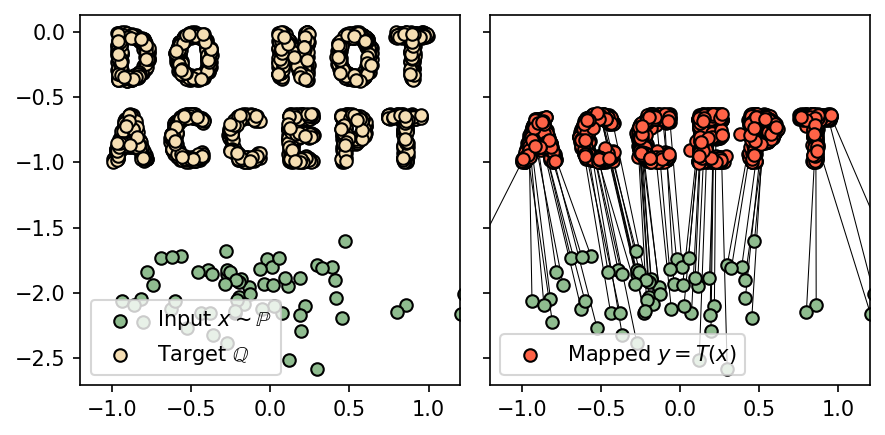}
  \end{center}
  \vspace{-5mm}
  \caption{\centering Learned transport map ($w=2$) in \textit{'Accept'} task.}
  \label{fig:accept-not-extremal}
  \vspace{-6mm}
\end{wrapfigure}
When solving the task, many methods regularize the translated samples $T(x)$ to inherit specific attributes of the respective input samples $x$. In the popular unpaired translation \citep[Fig. 9]{zhu2017unpaired} and enhancement \citep[Equation 3]{yuan2018unsupervised} tasks for images, it is common to use additional \textit{unsupervised} identity losses, e.g., $\|T(x)-x\|_{1}$, to make the translated output $T(x)$ be similar to the input images $x$. The same applies, e.g., to audio translation \cite{mathur2019mic2mic}. Therefore, the learning objectives of such methods usually have two components. The first component is the \textbf{domain loss} (main) enforcing the translated sample $T(x)$ to look like the samples $y$ from the target domain. The second component is the \textbf{similarity loss} (regularizer, \textit{optional}) stimulating the translated $T(x)$ to inherit certain attributes of input $x$.
A question arises: can one obtain the \textbf{maximal} similarity of $T(x)$ to $x$ but still ensure that $T(x)$ is indeed from the target domain? A straightforward "\textit{yes, just increase the weight of the similarity loss}"\ may work but only to a limited extent. We demonstrate this in Appendix \ref{sec-gans}.

\vspace{-1mm}
\textbf{Contributions.} In this paper, we propose the \textit{extremal transport} (ET, \wasyparagraph\ref{sec-perfect-ot}) which is a rigorous mathematical task formulation describing the theoretically best possible unpaired domain translation w.r.t. the given similarity function. We explicitly characterize ET maps and plans by establishing an intuitive connection to the nearest neighbors (NN). We show that ET maps can be learned as a limit (\wasyparagraph\ref{sec-way-to-perfection}) of specific partial optimal transport (OT) problem which we call \textit{incomplete transport} (IT, \wasyparagraph\ref{sec-not-perfect}). For IT, we derive the duality formula yielding an efficient computational algorithm (\wasyparagraph\ref{sec-algorithm}). We test our algorithm on toy 2D examples and high-dimensional unpaired image translation (\wasyparagraph\ref{sec-experiments}).

\vspace{-1mm}
\textbf{Notation.} We consider \textit{compact} Polish spaces ${(\mathcal{X},\|\!\cdot\!\|_{\mathcal{X}})}$, ${(\mathcal{Y},\|\!\cdot\!\|_{\mathcal{Y}})}$ and use $\mathcal{P}(\mathcal{X}), \mathcal{P}(\mathcal{Y})$ to denote the sets of Radon probability measures on them. We use $\mathcal{M}_{+}(\mathcal{X})\subset \mathcal{M}(\mathcal{X})$ to denote the sets of finite non-negative and finite signed (Radon) measures on $\mathcal{X}$, respectively. They both contain $\mathcal{P}(\mathcal{X})$ as a subset. For a non-negative $\mu\in\mathcal{M}_{+}(\mathcal{X})$, its support is denoted by $\text{Supp}(\mu)\subset\mathcal{X}$. It  is a closed set consisting of all points $x\in\mathcal{X}$ for which every open neighbourhood $A\ni x$ satisfies $\mu(A)>0$. We use $\mathcal{C}(\mathcal{X})$ to denote the set of continuous functions $\mathcal{X}\rightarrow\mathbb{R}$ equipped with $\|\cdot\|_{\infty}$ norm. Its dual space is $\mathcal{M}(\mathcal{X})$ equipped with the $\|\cdot\|_{1}$ norm. A sequence $\mu_{1},\mu_{2},\dots\in\mathcal{M}(\mathcal{X})$ is said to be weakly-* converging to $\mu^{*}\in\mathcal{M}(\mathcal{X})$ if for every $f\in\mathcal{C}(\mathcal{X})$ it holds that $\lim_{n\rightarrow\infty}\int_{\mathcal{X}}f(x)d\mu_{n}(x)=\int_{\mathcal{X}}f(x)d\mu^{*}(x)$. For a probability measure $\pi\in\mathcal{P}(\mathcal{X}\times\mathcal{Y})$, we use $\pi_{x}\in\mathcal{P}(\mathcal{X})$ and $\pi_{y}\in\mathcal{P}(\mathcal{Y})$ to denote its projections onto $\mathcal{X},\mathcal{Y}$, respectively. Disintegration of $\pi$ yields $d\pi(x,y)=d\pi_{x}(x)d\pi(y|x)$, where $\pi(y|x)$ denotes the conditional distribution of $y\in\mathcal{Y}$ for a given $x\in\mathcal{X}$. For $\mu,\nu\in\mathcal{M}(\mathcal{Y})$, we write $\mu\leq \nu$ if for all measurable $A\subset \mathcal{Y}$ it holds that $\mu(A)\leq \nu(A)$. For a measurable map $T:\mathcal{X}\rightarrow\mathcal{Y}$, we use $T\sharp$ to denote the associated pushforward operator $\mathcal{P}(\mathcal{X})\rightarrow\mathcal{P}(\mathcal{Y})$.

\vspace{-2mm}
\section{Background on Optimal Transport}
\vspace{-2mm}
In this section, we give an overview of the OT theory concepts related to our paper. For details on OT, we refer to \cite{santambrogio2015optimal,villani2008optimal,peyre2019computational}, partial OT - \cite{figalli2010optimal,caffarelli2010free}.

\vspace{-1mm}
\textbf{Standard OT formulation.} Let $c:\mathcal{X}\times\mathcal{Y}\rightarrow \mathbb{R}$ be a continuous cost function. For $\mathbb{P}\in\mathcal{P} (\mathcal{X})$, ${\mathbb{Q}\in\mathcal{P}(\mathcal{Y})}$, the OT cost between them is given by
\vspace{-1.5mm}
\begin{equation}
\text{Cost}(\mathbb{P},\mathbb{Q})\stackrel{def}{=}\inf_{T\sharp\mathbb{P}=\mathbb{Q}}\int_{\mathcal{X}}c\big(x,T(x)\big)d\mathbb{P}(x),
\label{ot-monge}
\end{equation}
where $\inf$ is taken over measurable $T:\mathcal{X}\rightarrow \mathcal{Y}$ pushing $\mathbb{P}$ to $\mathbb{Q}$ (transport maps), see Fig. \ref{fig:ot-map-def}. Problem  \eqref{ot-monge} is called the \textit{Monge's OT} problem, and its minimizer $T^{*}$ is called an \textit{OT map}.

In some cases, there may be no minimizer $T^{*}$ of \eqref{ot-monge}. Therefore, it is common to consider \textit{Kantorovich's} relaxation:
\vspace{-3.7mm}
\begin{eqnarray}
\text{Cost}(\mathbb{P},\mathbb{Q})\stackrel{def}{=}\inf_{\pi\in\Pi(\mathbb{P},\mathbb{Q})}\int_{\mathcal{X}\times\mathcal{Y}}c(x,y)d\pi(x,y),
\label{ot-kantorovich}
\end{eqnarray}

\vspace{-2mm}where $\inf$ is taken over $\pi\in\mathcal{P}(\mathcal{X}\times\mathcal{Y})$ satisfying $\pi_{x}=\mathbb{P}$ and $\pi_{y}=\mathbb{Q}$, respectively. A minimizer $\pi^{*}\in\Pi(\mathbb{P},\mathbb{Q})$ in \eqref{ot-kantorovich} always exists and is called an \textit{OT plan}. A widely used example of OT cost for $\mathcal{X}=\mathcal{Y}=\mathbb{R}^d$ is the Wasserstein-1 distance ($\mathbb{W}_1$), i.e., OT cost \eqref{ot-kantorovich} for $c(x,y)=\|x-y\|$.

To provide an intuition behind \eqref{ot-kantorovich}, we disintegrate $d\pi(x,y)\!=\!d\pi_{x}(x)d\pi(y|x)$:
\vspace{-1.1mm}
\begin{equation}
\inf_{\pi\in\Pi(\mathbb{P},\mathbb{Q})}\int_{\mathcal{X}}\bigg\lbrace\int_{\mathcal{Y}}c(x,y)d\pi(y|x)\bigg\rbrace\underbrace{d\mathbb{P}(x)}_{=d\pi_{x}(x)},
\label{ot-kantorovich-weak}
\end{equation}

\vspace{-2mm}i.e., \eqref{ot-kantorovich} can be viewed as an extension of \eqref{ot-monge} allowing to \textit{split} the mass of input points $x\!\sim\!\mathbb{P}$ (Fig. \ref{fig:ot-plan-def}). With mild assumptions on $\mathbb{P},\mathbb{Q}$, the OT cost value \eqref{ot-kantorovich} coincides with \eqref{ot-monge}, see \citep[Theorem 1.33]{santambrogio2015optimal}.

\begin{figure}[!t]
\centering
\vspace{-3mm}
\begin{subfigure}{0.44\textwidth}
    \begin{center}
    \includegraphics[width=0.98\linewidth]{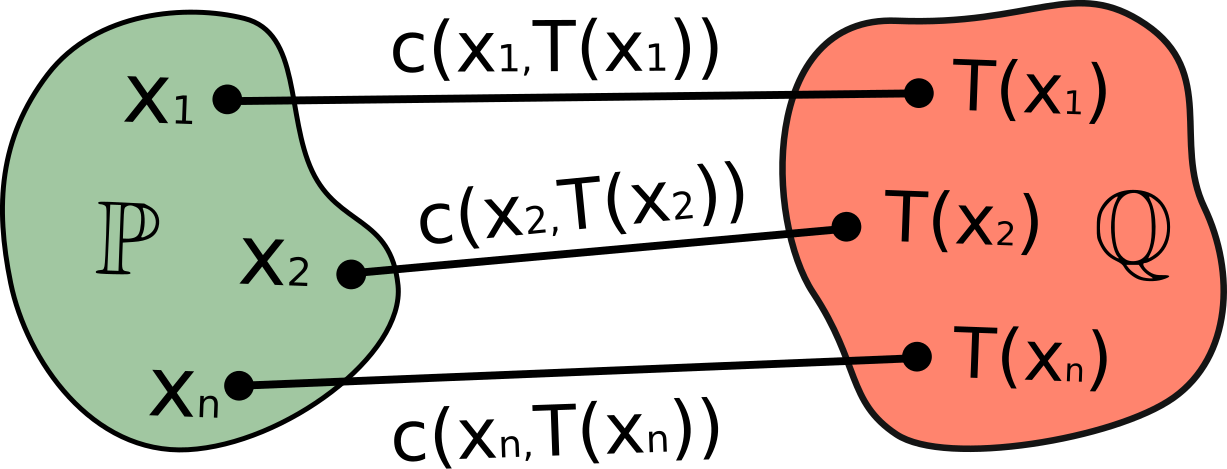}
    \end{center}
    \caption{Monge's OT formulation.}
    \label{fig:ot-map-def}
\end{subfigure}
\hspace{2mm}\vrule\hspace{2mm}
\begin{subfigure}{0.44\textwidth}
    \includegraphics[width=0.98\linewidth]{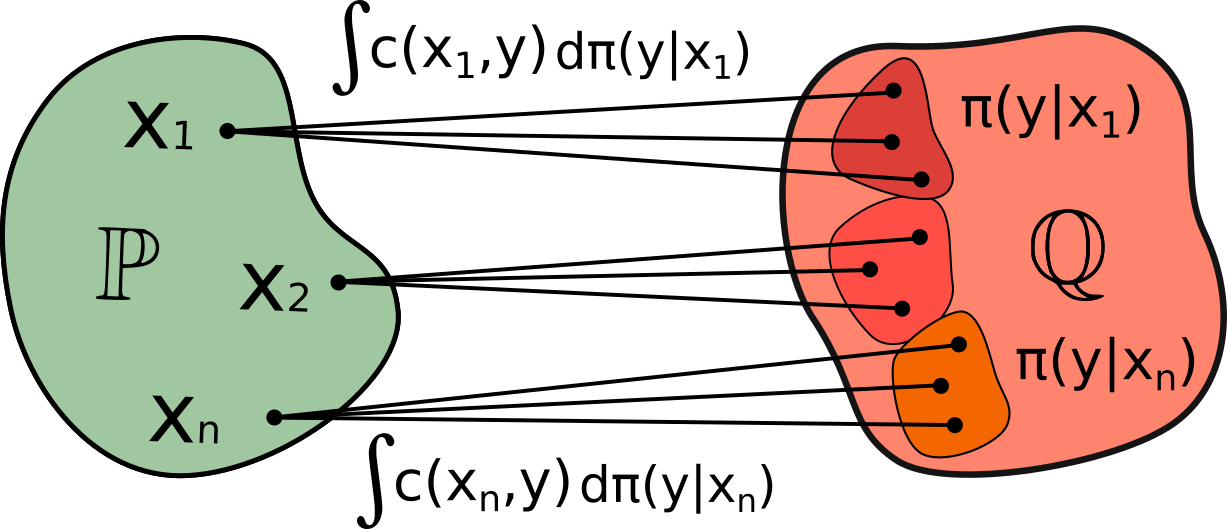}
    \caption{Kantorovich's OT formulation.}
    \label{fig:ot-plan-def}
\end{subfigure}
\vspace{-2mm}
\caption{Classic optimal transport (OT) formulations.}
\vspace{-5mm}
\end{figure}

\vspace{-0.5mm}\textbf{Partial OT formulation.} Let $w_{0},\!w_1\!\geq\! m\!\geq\! 0$. We consider
\vspace{-1mm}
\begin{eqnarray}
\inf_{\substack{m\pi_{x}\leq w_0\mathbb{P}\\ m\pi_{y}\leq w_1\mathbb{Q}}}\int_{\mathcal{X}\times\mathcal{Y}}c(x,y)d\big[m\pi\big](x,y),
\label{ot-partial}
\end{eqnarray}
where $\inf$ is taken over $\pi\in\mathcal{P}(\mathcal{X}\times\mathcal{Y})$ satisfying the inequality constraints $m\pi_{x}\leq w_0\mathbb{P}$ and $m\pi_{y}\leq w_1\mathbb{Q}$. Minimizers $\pi^{*}$ of \eqref{ot-partial} are called \textit{partial OT plans} (Fig. \ref{fig:partial-ot-plan-def}).

\begin{wrapfigure}{r}{0.44\textwidth}
\centering
\vspace{-5mm}
\includegraphics[width=0.98\linewidth]{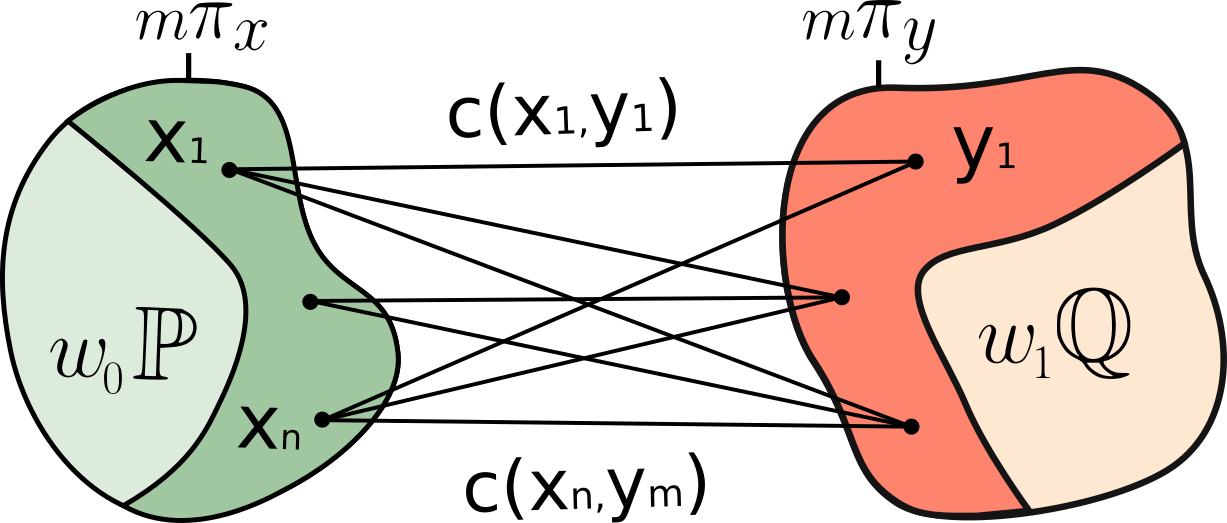}
\caption{\centering Partial optimal transport \linebreak formulation.}
\label{fig:partial-ot-plan-def}
\vspace{-6mm}
\end{wrapfigure}
Here the inputs are two measures $w_{0}\mathbb{P}$ and $w_1\mathbb{Q}$ with masses $w_0$ and $w_{1}$. Intuitively, we need to match a $\frac{m}{w_{0}}$-th fraction $m\pi_{x}$ of the first measure $w_0\mathbb{P}$ with a $\frac{m}{w_{1}}$-th fraction $m\pi_{y}$ of the second measure $w_1\mathbb{Q}$ (Fig. \ref{fig:partial-ot-plan-def}); choosing $\pi_{x},\pi_{y}$ is also a part of this problem. The key difference from problem \eqref{ot-partial} is that the constraints are \textit{inequalities}. In the particular case $m=w_0=w_1$, problem \eqref{ot-partial} reduces to \eqref{ot-kantorovich} as the inequality constraints can be replaced by equalities.

\vspace{-2mm}
\section{Main Results}
\vspace{-1.5mm}
\label{sec-main-results}
First, we formulate the extremal transport (ET) problem (\wasyparagraph\ref{sec-perfect-ot}). Next, we prove that ET maps can be recovered as a limit of incomplete transport (IT) maps (\wasyparagraph\ref{sec-not-perfect}, \ref{sec-way-to-perfection}). Then we propose an algorithm to solve the IT problem (\wasyparagraph\ref{sec-algorithm}). We provide the proofs for all the theorems in Appendix \ref{sec-proofs}.

\vspace{-2.mm}
\subsection{Extremal Transport Problem}
\label{sec-perfect-ot}
\vspace{-1.mm}

Popular unpaired translation methods, e.g., \citep[\wasyparagraph 3.1]{zhu2017unpaired} and \citep[\wasyparagraph 3]{huang2018multimodal}, de-facto assume that available samples $x, y$ from the input and output domains come from the data distributions $\mathbb{P},\mathbb{Q}\in\mathcal{P}(\mathcal{X}),\mathcal{P}(\mathcal{Y})$. As a result, in their optimization objectives, the \textit{domain loss} compares the translated $T(x)\sim T\sharp\mathbb{P}$ and target samples $y\sim\mathbb{Q}$ by using a metric for comparing \textit{probability measures}, e.g., GAN loss \cite{goodfellow2014generative}. Thus, the target \textit{domain} is identified with the probability measure $\mathbb{Q}$.

We pick a different approach to define what the \textit{domain} is. We still assume that the available data comes from data distributions, i.e., $x\sim\mathbb{P},y\sim \mathbb{Q}$.
However, we say that the target domain is the part of $\mathcal{Y}$ where the probability mass of $\mathbb{Q}$ lives.\footnote{Following the standard manifold hypothesis \citep{fefferman2016testing}, real data distribution $\mathbb{Q}$ is usually supported on a small-dimensional manifold $M=$ Supp$(\mathbb{Q})\subset [-1,1]^{D}$ occupying a \textbf{tiny} part of the ambient space $[-1,1]^{D}$.} Namely, it is $\text{Supp}(\mathbb{Q})\subset \mathcal{Y}$. We say that a map $T$ translates the domains if $\text{Supp}(T\sharp\mathbb{P})\!\subset\!\text{Supp}(\mathbb{Q})$. This requirement is weaker than the usual $T\sharp\mathbb{P}\!=\!\mathbb{Q}$.
Assume that $c(x,y)$ is a function estimating the dissimilarity between $x,y$. We would like to pick $T(x)\in\text{Supp}(\mathbb{Q})$ which is \textit{maximally} similar to $x$ in terms of $c(x,y)$. This preference of $T$ can be formalized as follows:
\vspace{-2.5mm}
\begin{equation}
\text{Cost}_{\infty}(\mathbb{P},\mathbb{Q})\stackrel{def}{=}\inf_{\substack{\text{Supp}(T\sharp\mathbb{P}) \subset\text{Supp}(\mathbb{Q})}}\int_{\mathcal{X}}c\big(x,T(x)\big)d\mathbb{P}(x),
\label{perfect-ot-monge}
\end{equation}

\vspace{-2.5mm}where the $\inf$ is taken over measurable ${T:\mathcal{X}\rightarrow\mathcal{Y}}$ which map the probability mass of $\mathbb{P}$ to $\text{Supp}(\mathbb{Q})$. We say that \eqref{perfect-ot-monge} is the (Monge's) \textit{extremal transport} (ET) problem.

Problem \eqref{perfect-ot-monge} is \textbf{atypical} for the common OT framework. For example, the usual measure-preserving constraint $T\sharp\mathbb{P}=\mathbb{Q}$ in \eqref{ot-monge} is replaced with $\text{Supp}(T\sharp\mathbb{P})\subset\text{Supp}(\mathbb{Q})$ which is more tricky. Importantly, measure $\mathbb{Q}$ can be replaced with any other ${\mathbb{Q}'\in\mathcal{P}(\mathcal{Y})}$ with the same support yielding the same $\inf$. Below we analyse the minimizers $T^{*}$ of \eqref{perfect-ot-monge}. We define $c^{*}(x)\stackrel{def}{=}\min_{y\in\text{Supp}(\mathbb{Q})}c(x,y)$. Here the $\min$ is indeed attained (for all $x\in\mathcal{X}$) because $c(x,y)$ is continuous and $\text{Supp}(\mathbb{Q})\subset\mathcal{Y}$ is a compact set. The value $c^{*}(x)$ can be understood as the \textit{lowest} possible transport cost when mapping the mass of point $x$ to the support of $\mathbb{Q}$. For any admissible $T$ in \eqref{perfect-ot-monge}, it holds ($\mathbb{P}$-almost surely):
\begin{equation}
     c^{*}(x)=\min_{y\in\text{Supp}(\mathbb{Q})}c(x,y)\leq c\big(x,T(x)\big).
    \label{lower-bound-single}
\end{equation}
\begin{proposition}[Continuity of $c^{*}$]\vspace{-1mm}
It holds that $c^{*}\in\mathcal{C}(\mathcal{X})$.
\label{continuity-nn}
\end{proposition}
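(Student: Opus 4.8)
The plan is to reduce the statement to uniform continuity of $c$ on a compact set. Since $\mathcal{X}$ and $\mathcal{Y}$ are compact and $\text{Supp}(\mathbb{Q})\subset\mathcal{Y}$ is closed, $\text{Supp}(\mathbb{Q})$ is itself compact, hence so is the product $\mathcal{X}\times\text{Supp}(\mathbb{Q})$. The cost $c$ is continuous on this compact space and therefore uniformly continuous there. Concretely, fix $\varepsilon>0$; uniform continuity yields $\delta>0$ such that $\|x-x'\|_{\mathcal{X}}<\delta$ implies $|c(x,y)-c(x',y)|<\varepsilon$ simultaneously for all $y\in\text{Supp}(\mathbb{Q})$.

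Next I would convert this into a modulus of continuity for $c^{*}$. Take any $x,x'\in\mathcal{X}$ with $\|x-x'\|_{\mathcal{X}}<\delta$ and pick $y'\in\text{Supp}(\mathbb{Q})$ attaining $c^{*}(x')=c(x',y')$; such a minimizer exists because $c(x',\cdot)$ is continuous and $\text{Supp}(\mathbb{Q})$ is compact, exactly as noted just before the proposition. Then $c^{*}(x)\leq c(x,y')\leq c(x',y')+\varepsilon=c^{*}(x')+\varepsilon$. Swapping the roles of $x$ and $x'$ gives $c^{*}(x')\leq c^{*}(x)+\varepsilon$, so $|c^{*}(x)-c^{*}(x')|\leq\varepsilon$. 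As $\varepsilon>0$ was arbitrary, $c^{*}$ is (uniformly) continuous on $\mathcal{X}$, i.e. $c^{*}\in\mathcal{C}(\mathcal{X})$.

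I do not expect a genuine obstacle here: the argument is routine once one observes that the relevant domain is compact. The only points deserving care are (i) that $\text{Supp}(\mathbb{Q})$ is compact, which follows from it being a closed subset of the compact space $\mathcal{Y}$, and (ii) that the pointwise minima defining $c^{*}$ are attained, which is already established in the text. An alternative, equally elementary route avoids uniform continuity and argues sequentially: for $x_{n}\to x$, choose minimizers $y_{n}\in\text{Supp}(\mathbb{Q})$ with $c^{*}(x_{n})=c(x_{n},y_{n})$, extract a convergent subsequence $y_{n_{k}}\to y_{\infty}\in\text{Supp}(\mathbb{Q})$ by compactness, and use joint continuity of $c$ together with the defining inequality $c^{*}(x_{n})\leq c(x_{n},y)$ for each fixed $y$ to squeeze $\lim_{k}c^{*}(x_{n_{k}})=c^{*}(x)$; combined with a standard subsequence argument this gives $c^{*}(x_{n})\to c^{*}(x)$. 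Either way the claim follows.
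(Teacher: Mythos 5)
Your proof is correct and takes essentially the same approach as the paper: both rely on uniform continuity of $c$ on a compact product set and transfer the modulus to $c^{*}$ by comparing minimizers. The paper works on $\mathcal{X}\times\mathcal{Y}$ with an explicit modulus of continuity $\omega$ and cites \citep[Box 1.8]{santambrogio2015optimal} for the transfer step, whereas you restrict to $\mathcal{X}\times\text{Supp}(\mathbb{Q})$ and spell that step out, but the argument is the same.
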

\vspace{-1.5mm}As a consequence of Proposition \ref{continuity-nn}, we see that $c^{*}$ is measurable. We integrate \eqref{lower-bound-single} w.r.t. $x\sim\mathbb{P}$ and take $\inf$ over all feasible $T$. This yields a lower bound on $\text{Cost}_{\infty}(\mathbb{P},\mathbb{Q})$:
\vspace{-1.5mm}
\begin{equation}
\int_{\mathcal{X}}c^{*}(x)d\mathbb{P}(x)\leq\overbrace{\inf_{\substack{\text{Supp}(T\sharp\mathbb{P})\\ \subset\text{Supp}(\mathbb{Q})}}\int_{\mathcal{X}}c\big(x,T(x)\big)d\mathbb{P}(x)}^{\text{Cost}_{\infty}(\mathbb{P},\mathbb{Q})}.
\label{lower-bound-cost}
\end{equation}

\vspace{-3.3mm}There exists admissible $T$ making \eqref{lower-bound-cost} the equality. Indeed, let $\text{NN}(x)\stackrel{def}{=}\{y\in\text{Supp}(\mathbb{Q})$ s.t. $c(x,y)=c^{*}(x)\}$ be the set of points $y$ which attain $\min$ in the definition of $c^{*}$. These points are the closest to $x$ points in $\mathbb{Q}$ w.r.t. the cost $c(x,y)$. We call them the \textit{nearest neighbors} of $x$. From this perspective, we see that \eqref{lower-bound-cost} turns to equality if and only if $T(x)\!\in\!\text{NN}(x)$ holds for $\mathbb{P}$-almost all $x\in\mathcal{X}$, i.e., $T$ maps points $x\!\sim\!\mathbb{P}$ to their nearest neighbors in $\text{Supp}(\mathbb{Q})$. We need to make sure that such \textit{measurable} $T$ exists (Fig. \ref{fig:ot-map-perfect-def}).

\begin{theorem}[Existence of ET maps]
    \label{theorem-existence-perfect}
    There exists at least one measurable map $T^{*}:\mathcal{X}\rightarrow \mathcal{Y}$ minimizing \eqref{perfect-ot-monge}. For $\mathbb{P}$-almost all $x\in\mathcal{X}$ it holds that $T^{*}(x)\in\text{\normalfont NN}(x)$. Besides,
    \vspace{-1.5mm} 
    \begin{equation*}
    \text{\normalfont Cost}_{\infty}(\mathbb{P},\mathbb{Q})=\int_{\mathcal{X}}c^{*}(x)d\mathbb{P}(x).
    \end{equation*}
\end{theorem}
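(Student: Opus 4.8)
The plan is to combine the lower bound \eqref{lower-bound-cost} already derived with an explicit construction of a measurable minimizer. The bound gives $\int_{\mathcal{X}}c^{*}(x)\,d\mathbb{P}(x)\leq\text{Cost}_{\infty}(\mathbb{P},\mathbb{Q})$, so it suffices to exhibit one admissible $T^{*}$ attaining this value; and since $c(x,T(x))\geq c^{*}(x)$ holds $\mathbb{P}$-a.s. for every admissible $T$, equality in \eqref{lower-bound-cost} forces $T^{*}(x)\in\text{NN}(x)$ for $\mathbb{P}$-a.e. $x$. Thus the whole theorem reduces to the question: does there exist a measurable $T^{*}:\mathcal{X}\to\mathcal{Y}$ with $T^{*}(x)\in\text{NN}(x)$ for all (equivalently, $\mathbb{P}$-a.e.) $x$? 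Given such $T^{*}$, admissibility in \eqref{perfect-ot-monge} is immediate: $T^{*}(x)\in\text{Supp}(\mathbb{Q})$ everywhere implies $(T^{*}\sharp\mathbb{P})(\text{Supp}(\mathbb{Q}))=1$, and as $\text{Supp}(\mathbb{Q})$ is closed this yields $\text{Supp}(T^{*}\sharp\mathbb{P})\subset\text{Supp}(\mathbb{Q})$. Then $\int_{\mathcal{X}}c(x,T^{*}(x))\,d\mathbb{P}(x)=\int_{\mathcal{X}}c^{*}(x)\,d\mathbb{P}(x)$ matches the lower bound, so $T^{*}$ minimizes \eqref{perfect-ot-monge} and $\text{Cost}_{\infty}(\mathbb{P},\mathbb{Q})$ equals this integral.

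To obtain the measurable selection, I would first record the regularity of the correspondence $x\mapsto\text{NN}(x)$. For each $x$, the set $\text{NN}(x)=\{y\in\text{Supp}(\mathbb{Q}):c(x,y)\leq c^{*}(x)\}$ is nonempty (the $\min$ defining $c^{*}$ is attained, by continuity of $c(x,\cdot)$ and compactness of $\text{Supp}(\mathbb{Q})$) and compact (a closed subset of the compact set $\text{Supp}(\mathbb{Q})$). Moreover, its graph $\Gamma=\{(x,y):y\in\text{Supp}(\mathbb{Q}),\ c(x,y)-c^{*}(x)\leq 0\}\subset\mathcal{X}\times\mathcal{Y}$ is closed, since $\text{Supp}(\mathbb{Q})$ is closed and $(x,y)\mapsto c(x,y)-c^{*}(x)$ is continuous (here Proposition \ref{continuity-nn} is used to know $c^{*}$ is continuous). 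A compact-valued correspondence with closed graph into the compact metric space $\mathcal{Y}$ is upper hemicontinuous, hence measurable, so by the Kuratowski--Ryll-Nardzewski measurable selection theorem it admits a Borel-measurable selector $T^{*}:\mathcal{X}\to\mathcal{Y}$ with $T^{*}(x)\in\text{NN}(x)$ for every $x\in\mathcal{X}$. Alternatively, one can construct $T^{*}$ by hand: fix a countable dense sequence in $\text{Supp}(\mathbb{Q})$ and define $T^{*}(x)$ via the standard iterated ``first dense point lying in the current shrinking set'' procedure applied to $\text{NN}(x)$, which yields a measurable map.

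The main obstacle is precisely this measurable-selection step: $\text{NN}(x)$ is genuinely multivalued in general (e.g.\ points equidistant from several parts of $\text{Supp}(\mathbb{Q})$), so no closed-form choice is available, and one must verify enough measurability of $x\mapsto\text{NN}(x)$ to legitimately invoke a selection theorem — which is why the continuity of $c^{*}$ (giving closedness of the graph $\Gamma$) is the essential ingredient. Everything else — the lower bound, admissibility of $T^{*}$, and the evaluation of $\text{Cost}_{\infty}$ — follows from the short chain of (in)equalities above. I would also note that the construction is insensitive to replacing $\mathbb{Q}$ by any $\mathbb{Q}'\in\mathcal{P}(\mathcal{Y})$ with $\text{Supp}(\mathbb{Q}')=\text{Supp}(\mathbb{Q})$, consistent with the remark made just before the theorem.
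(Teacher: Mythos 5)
Your proposal is correct and takes essentially the same approach as the paper: both reduce the theorem to exhibiting a measurable selector $T^{*}$ from the argmin correspondence $x\mapsto\text{NN}(x)$, and both invoke a standard measurable selection theorem to conclude. The only cosmetic difference is that the paper cites the Measurable Maximum Theorem \citep[Theorem 18.19]{aliprantis2006infinite} as a black box, whereas you unpack the measurability of the correspondence (compact values, closed graph, upper hemicontinuity) and then invoke Kuratowski--Ryll-Nardzewski; the two routes are interchangeable here.
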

\vspace{-3mm}We say that $\text{Cost}_{\infty}(\mathbb{P},\mathbb{Q})$ is the \textit{extremal cost} because one can not obtain smaller cost when moving the mass of $\mathbb{P}$ to $\text{Supp}(\mathbb{Q})$. In turn, we say that minimizers $T^{*}$ are \textit{ET maps}.

\begin{figure}[!t]
\begin{subfigure}{0.44\textwidth}
    \begin{center}
        \includegraphics[width=0.98\linewidth]{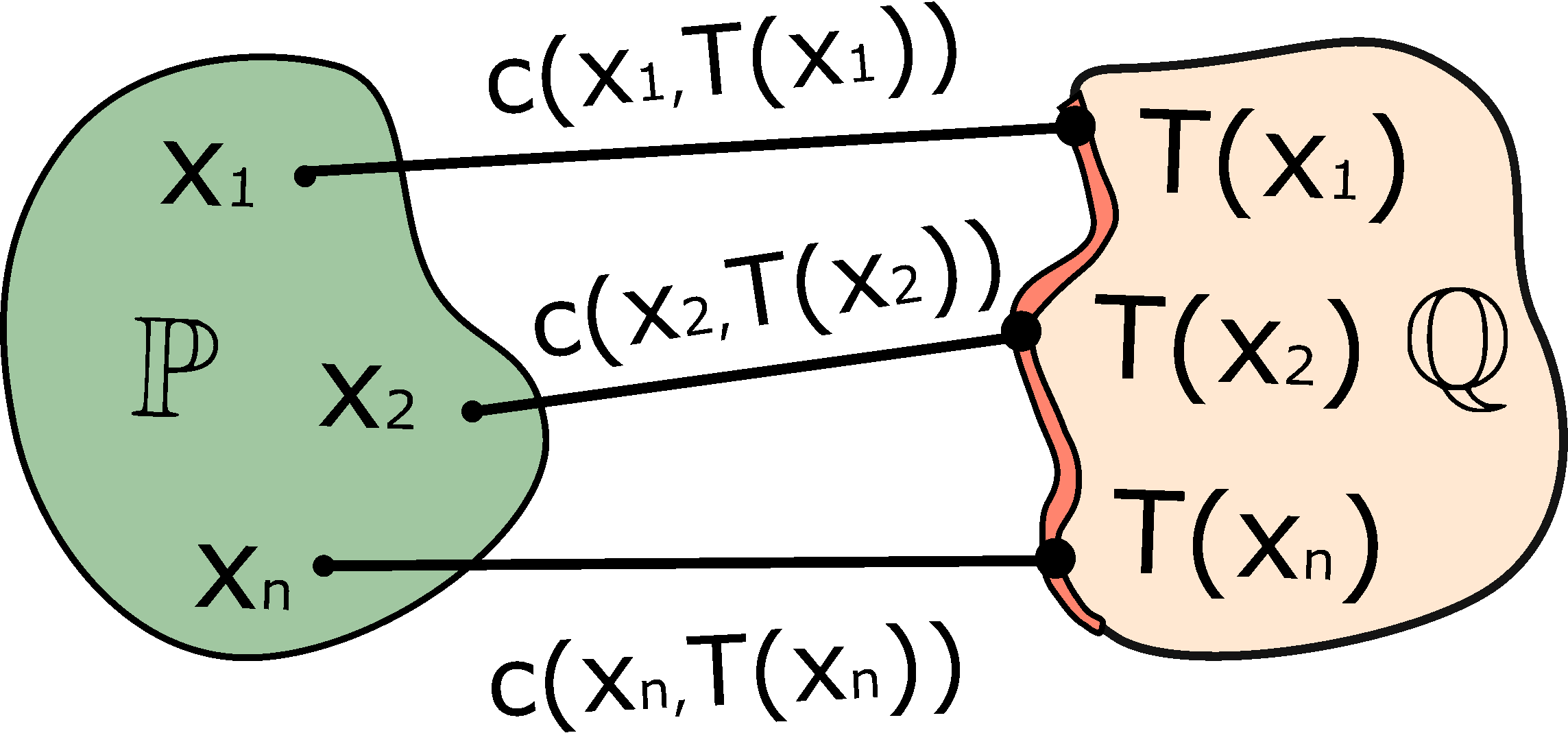}
    \end{center}
    \vspace{-2mm}
    \caption{Monge's ET formulation.}
    \label{fig:ot-map-perfect-def}
\end{subfigure}
\hspace{2mm}\vrule\hspace{2mm}
\centering
\begin{subfigure}{0.44\textwidth}
    \begin{center}
        \includegraphics[width=0.98\linewidth]{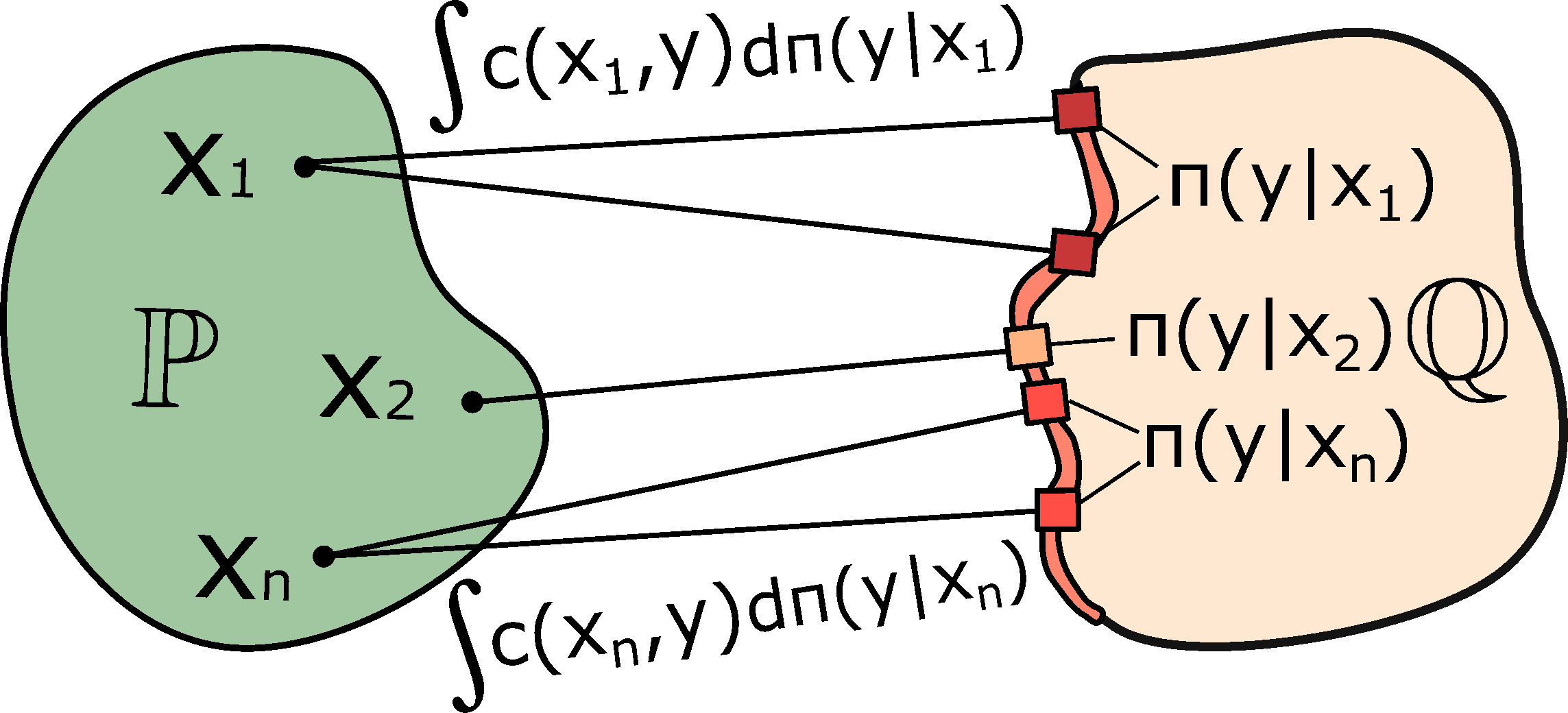} 
    \end{center}
    \vspace{-2mm}
    \caption{Kantorovich's ET formulation.}
    \label{fig:ot-plan-perfect-def} 
\end{subfigure}
\vspace{-1.5mm}
\caption{Extremal transport (ET) formulations.}
\vspace{-5.5mm}
\end{figure}

One may extend the ET problem \eqref{perfect-ot-kantorovich} in the Kantorovich's manner by allowing the mass splitting and stochastic plans:
\vspace{-2.3mm}
\begin{equation}
\text{Cost}_{\infty}(\mathbb{P},\mathbb{Q})\stackrel{def}{=}\inf_{\pi\in\Pi^{\infty}(\mathbb{P},\mathbb{Q})}\int_{\mathcal{X}\times\mathcal{Y}}c(x,y)d\pi(x,y),
\label{perfect-ot-kantorovich}
\end{equation}
where $\Pi^{\infty}(\mathbb{P},\mathbb{Q})$ are probability measures $\pi\in\mathcal{P}(\mathcal{X}\times\mathcal{Y})$ s.t. $\pi_{x}=\mathbb{P}$ and $\text{Supp}(\pi_{y})\subset\text{Supp}(\mathbb{Q})$. To understand the structure of minimizers in \eqref{perfect-ot-kantorovich}, it is more convenient to disintegrate $d\pi(x,y)=d\pi(y|x)d\pi_{x}(x)$:
\vspace{-3.0mm}
\begin{equation}
    \text{Cost}_{\infty}(\mathbb{P},\mathbb{Q})=\hspace{-4mm}\inf_{\pi\in\Pi^{\infty}(\mathbb{P},\mathbb{Q})}\!\int_{\mathcal{X}}\!\int_{\mathcal{Y}}\!\!c(x,y)d\pi(y|x)\!\underbrace{d\mathbb{P}(x)}_{=d\pi_x(x)}\!.
    \label{perfect-ot-kantorovich-disintegrated}
\end{equation}

\vspace{-2.5mm}Thus, computing \eqref{perfect-ot-kantorovich} boils down to computing a family of conditional measures $\pi(\cdot|x)$ minimizing \eqref{perfect-ot-kantorovich}. As in \eqref{lower-bound-cost}, for any $\pi\in\Pi^{\infty}(\mathbb{P},\mathbb{Q})$, it holds (for $\mathbb{P}$-almost all $x\in\mathcal{X}$) that
\vspace{-1.5mm}
\begin{equation}
     c^{*}(x)=\min_{y\in\text{Supp}(\mathbb{Q})}c(x,y)\leq \int_{\mathcal{Y}}c(x,y)d\pi(y|x)
    \label{lower-bound-single-kant}
\end{equation}

\vspace{-2.5mm}because $\pi$ redistributes the mass of $\mathbb{P}$ to $\text{Supp}(\mathbb{Q})$.
By integrating \eqref{lower-bound-single-kant} w.r.t. $x\sim\mathbb{P}=\pi_x$ and taking $\inf$ over all admissible plans $\pi$, we derive that $\int_{\mathcal{X}}c^{*}(x)d\mathbb{P}(x)$ is a lower bound for \eqref{perfect-ot-kantorovich}. In particular, the bound is tight for $\pi^{*}(y|x)=\delta_{T^{*}(x)}$, where $T^{*}$ is the ET map from our Theorem \ref{theorem-existence-perfect}. Therefore, the value \eqref{perfect-ot-kantorovich} is the same as \eqref{perfect-ot-monge} but possibly admits more minimizers. We call the minimizers $\pi^{*}$ of \eqref{perfect-ot-kantorovich} the \textit{ET plans} (Fig. \ref{fig:ot-plan-perfect-def}).

From \eqref{lower-bound-single-kant} and the definition of $\text{NN}(x)$, we see that minimizers $\pi^{*}$ are the plans for which $\pi^{*}(y|x)$ redistributes the mass of $x$ among the nearest neighbors $y\in\text{NN}(x)$ of $x$ (for $\mathbb{P}$-almost all $x\in\mathcal{X}$). As a result, $\pi^{*}(y|x)$ can be viewed as a \textit{stochastic nearest neighbor assignment} between the probability mass of $\mathbb{P}$ and the support of $\mathbb{Q}$.

\vspace{-2.5mm}
\subsection{Incomplete Transport Problem}
\label{sec-not-perfect}
\vspace{-2mm}

In practice, solving extremal problem \eqref{perfect-ot-monge} is challenging because it is hard to enforce $\text{Supp}(T\sharp\mathbb{P})\subset \text{Supp}(\mathbb{Q})$. To avoid enforcing this constraint, we replace it and consider the following problem with finite parameter $w\geq 1$:
\vspace{-4.0mm}
\begin{equation}
\text{Cost}_{w}(\mathbb{P},\mathbb{Q})\stackrel{def}{=}\inf_{T\sharp\mathbb{P}\leq w\mathbb{Q}}\int_{\mathcal{X}}c\big(x,T(x)\big)d\mathbb{P}(x).
\label{not-perfect-ot-monge}
\end{equation}
We call \eqref{not-perfect-ot-monge} Monge's \textit{incomplete transport} (IT) problem (Fig. \ref{fig:ot-map-imperfect-def}). With the increase of $w$, admissible maps $T$ obtain more ways to redistribute the mass of $\mathbb{P}$ among $\text{Supp}(\mathbb{Q})$. Informally, when $w\rightarrow \infty$, the constraint $T\sharp\mathbb{P}\leq w\mathbb{Q}$ in \eqref{not-perfect-ot-monge} tends to the constraint $\text{Supp}(T\sharp\mathbb{P})\subset \text{Supp}(\mathbb{Q})$ in \eqref{perfect-ot-monge}, i.e., \eqref{not-perfect-ot-monge} itself tends to ET problem \eqref{perfect-ot-monge}. We will formalize this statement a few paragraphs later (in \wasyparagraph\ref{sec-way-to-perfection}).

As in \eqref{ot-monge}, problem \eqref{not-perfect-ot-monge} may have no minimizer $T^{*}$ or even may have the empty feasible set. Therefore, it is natural to relax problem \eqref{not-perfect-ot-monge} in the Kantorovich's manner:
\vspace{-1.5mm}
\begin{equation}
\text{Cost}_{w}(\mathbb{P},\mathbb{Q})\stackrel{def}{=}\inf_{\pi\in\Pi^{w}(\mathbb{P},\mathbb{Q})}\int_{\mathcal{X}\times\mathcal{Y}}c(x,y)d\pi(x,y),
\label{not-perfect-ot-kantorovich}
\end{equation}
where the $\inf$ is taken over the set $\Pi^{w}(\mathbb{P},\mathbb{Q})$ of probability measures $\pi\in\mathcal{P}(\mathcal{X}\times\mathcal{Y})$ whose first marginal is $\pi_{x}=\mathbb{P}$, and the second marginal satisfies $\pi_{y}\leq w\mathbb{Q}$ (Fig. \ref{fig:ot-plan-imperfect-def}).

\begin{figure}[!t]
\centering
\begin{subfigure}{0.44\textwidth}
    \centering
    \includegraphics[width=0.98\linewidth]{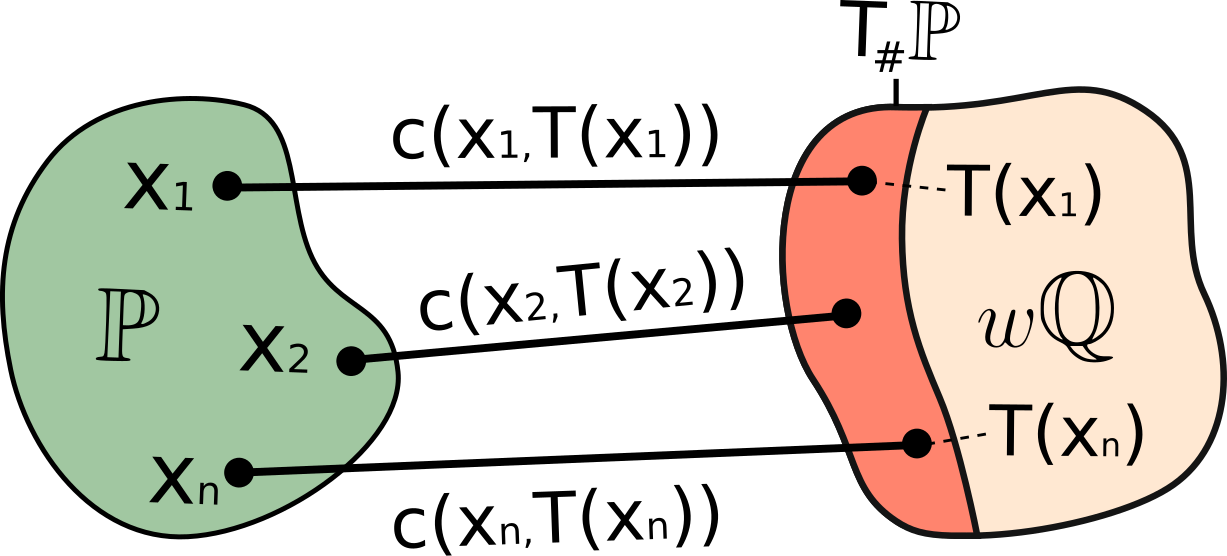}
    \vspace{-1mm}
\caption{Monge's IT formulation.}
    \label{fig:ot-map-imperfect-def}
\end{subfigure}
\hspace{2mm}\vrule\hspace{2mm}
\begin{subfigure}{0.44\textwidth}
    \centering
    \includegraphics[width=0.98\linewidth]{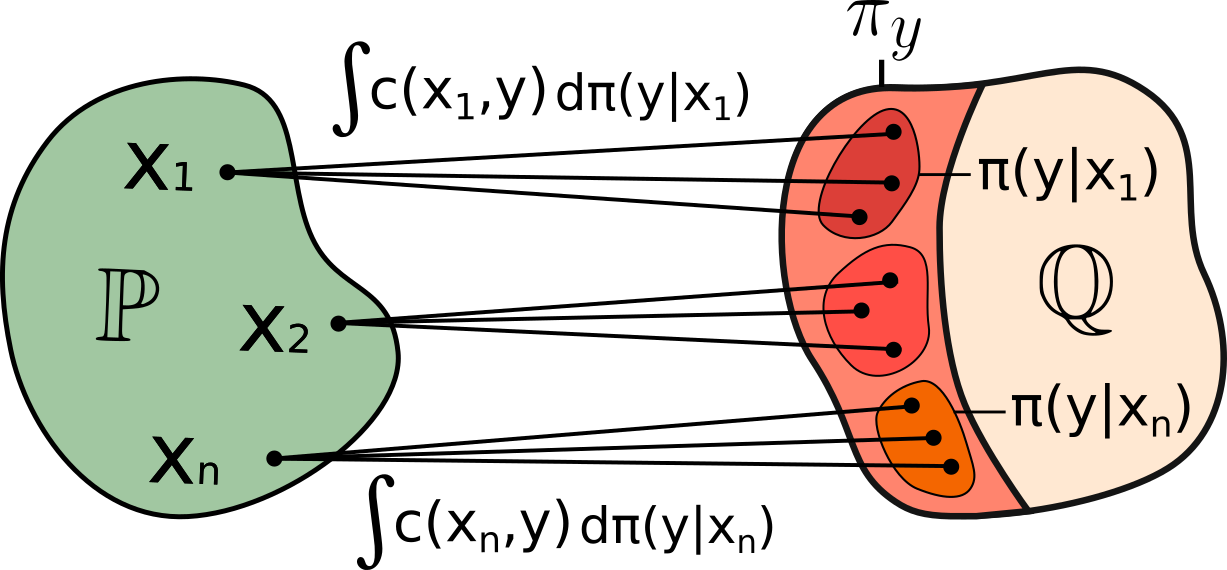}
    \vspace{-1mm}
    \caption{Kantorovich's IT formulation.}
    \label{fig:ot-plan-imperfect-def}
\end{subfigure}
\vspace{-2mm}
\caption{Incomplete transport (IT) formulations.}
\vspace{-4mm}
\end{figure}

We note that IT problem \eqref{not-perfect-ot-kantorovich} is a special case of partial OT \eqref{ot-partial} with $w_{0}=m=1$ and $w_{1}=w$. In \eqref{not-perfect-ot-kantorovich}, one may actually replace $\inf$ with $\min$, see our proposition below.

\begin{proposition}[Existence of IT plans] Problem \eqref{not-perfect-ot-kantorovich} admits at least one minimizer $\pi^{*}\in\Pi^{w}(\mathbb{P},\mathbb{Q})$.
\label{theorem-existence}\end{proposition}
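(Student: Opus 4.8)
The plan is to prove existence of a minimizer for the incomplete transport problem \eqref{not-perfect-ot-kantorovich} by the direct method of the calculus of variations: exhibit a feasible point (so the infimum is finite), take a minimizing sequence $\pi_n \in \Pi^w(\mathbb{P},\mathbb{Q})$, extract a weakly-* convergent subsequence $\pi_n \rightharpoonup \pi^*$, and verify that $\pi^*$ is again feasible and that the cost functional is weakly-* lower semicontinuous (in fact continuous here since $c$ is continuous and the spaces are compact), whence $\pi^*$ attains the infimum.

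First I would check the feasible set $\Pi^w(\mathbb{P},\mathbb{Q})$ is nonempty: since $w\ge 1$, the product measure $\mathbb{P}\otimes\mathbb{Q}$ has first marginal $\mathbb{P}$ and second marginal $\mathbb{Q}\le w\mathbb{Q}$, so it belongs to $\Pi^w(\mathbb{P},\mathbb{Q})$; hence the infimum in \eqref{not-perfect-ot-kantorovich} is at most $\int_{\mathcal{X}\times\mathcal{Y}}c\,d(\mathbb{P}\otimes\mathbb{Q}) < \infty$ (finite because $c$ is continuous on the compact $\mathcal{X}\times\mathcal{Y}$, hence bounded), and it is bounded below by $\min_{\mathcal{X}\times\mathcal{Y}} c > -\infty$ for the same reason. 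Next, take a minimizing sequence $\{\pi_n\}\subset \Pi^w(\mathbb{P},\mathbb{Q})\subset \mathcal{P}(\mathcal{X}\times\mathcal{Y})$. Because $\mathcal{X}\times\mathcal{Y}$ is a compact Polish space, $\mathcal{P}(\mathcal{X}\times\mathcal{Y})$ is weakly-* (i.e.\ narrowly) compact by Prokhorov's theorem, so some subsequence $\pi_{n_k}\rightharpoonup \pi^*$ for a probability measure $\pi^*\in\mathcal{P}(\mathcal{X}\times\mathcal{Y})$.

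The two things that remain are passing the constraints and the cost to the limit. For the cost: since $c\in\mathcal{C}(\mathcal{X}\times\mathcal{Y})$ and $\mathcal{X}\times\mathcal{Y}$ is compact, $c$ is a bounded continuous test function, so by the definition of weak-* convergence $\int c\,d\pi_{n_k}\to \int c\,d\pi^*$; thus $\int c\,d\pi^* = \lim_k \int c\,d\pi_{n_k} = \text{Cost}_w(\mathbb{P},\mathbb{Q})$, the infimum value. For the marginal constraints I would argue that both $\pi\mapsto\pi_x$ and $\pi\mapsto\pi_y$ are weakly-* continuous from $\mathcal{P}(\mathcal{X}\times\mathcal{Y})$ to $\mathcal{P}(\mathcal{X})$ and $\mathcal{P}(\mathcal{Y})$ respectively (test against $f(x)$, resp.\ $g(y)$, viewed as functions on the product). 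Hence $(\pi^*)_x = \lim_k (\pi_{n_k})_x = \mathbb{P}$. For the inequality $(\pi^*)_y\le w\mathbb{Q}$, I would test against nonnegative $g\in\mathcal{C}(\mathcal{Y})$: $\int g\,d(\pi^*)_y = \lim_k \int g\,d(\pi_{n_k})_y \le w\int g\,d\mathbb{Q}$, and since this holds for all $g\ge 0$ in $\mathcal{C}(\mathcal{Y})$, by the Riesz representation / regularity of the measures it upgrades to $(\pi^*)_y(A)\le w\mathbb{Q}(A)$ for all Borel $A$, i.e.\ $(\pi^*)_y\le w\mathbb{Q}$. Therefore $\pi^*\in\Pi^w(\mathbb{P},\mathbb{Q})$ is feasible and attains the infimum, so it is a minimizer.

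The only mildly delicate step is the last one — promoting the test-function inequality $\int g\,d(\pi^*)_y\le w\int g\,d\mathbb{Q}$ for all $g\ge 0$ in $\mathcal{C}(\mathcal{Y})$ to the set-wise inequality of measures; this is a standard fact (approximate $\mathbf{1}_A$ from above/below by continuous functions using outer/inner regularity of Radon measures on the compact metric space $\mathcal{Y}$, or equivalently note that $w\mathbb{Q}-(\pi^*)_y$ is a signed measure whose integral against every nonnegative continuous function is nonnegative, hence it is a nonnegative measure). Everything else is routine compactness-plus-continuity. An alternative, even shorter route is to observe that $\Pi^w(\mathbb{P},\mathbb{Q})$ is itself weakly-* closed (as an intersection of the weakly-* closed sets $\{\pi:\pi_x=\mathbb{P}\}$ and $\{\pi:\pi_y\le w\mathbb{Q}\}$) and contained in the weakly-* compact set $\mathcal{P}(\mathcal{X}\times\mathcal{Y})$, hence weakly-* compact; then the weakly-* continuous functional $\pi\mapsto\int c\,d\pi$ attains its minimum on it.
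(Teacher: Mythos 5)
Your proposal is correct and takes essentially the same approach as the paper's proof; in fact, your ``alternative, even shorter route'' at the end (show $\Pi^w(\mathbb{P},\mathbb{Q})$ is weakly-* closed and sits inside the weakly-* compact $\mathcal{P}(\mathcal{X}\times\mathcal{Y})$, then apply Weierstrass) is exactly the structure of the paper's argument. The only cosmetic differences are that the paper invokes Banach--Alaoglu rather than Prokhorov for weak-* compactness, and it isolates your ``upgrade'' step (promoting the inequality of integrals against non-positive continuous test functions to the set-wise inequality $\pi_y\le w\mathbb{Q}$) into a separate Distinctness Lemma, proved by a density argument in $\mathcal{L}^1(|\mu-\nu|)$ very close in spirit to the regularity argument you sketch.
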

\vspace{-2mm}We say that minimizers of \eqref{not-perfect-ot-kantorovich} are \textit{IT plans}. In the general case, Kantorovich's  \textit{IT cost} \eqref{not-perfect-ot-kantorovich} always lower bounds Monge's counterpart \eqref{not-perfect-ot-monge}. Below we show that they coincide in the practically most interesting Euclidean case.

\begin{proposition}[Equivalence of Monge's, Kantorovich's IT costs]
    Let $\mathcal{X},\mathcal{Y}\subset\mathbb{R}^{D}$ be two compact sets, $\mathbb{P}\in\mathcal{P}(\mathcal{X})$ be atomless, $\mathbb{Q}\!\in\!\mathcal{P}(\mathcal{Y})$. Then Monge's \eqref{not-perfect-ot-monge} and Kantorovich's \eqref{not-perfect-ot-kantorovich} IT costs coincide.\label{equivalence-monge-kantorovich}
\end{proposition}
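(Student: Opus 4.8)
The plan is to reduce the incomplete transport problem to a family of \emph{ordinary} optimal transport problems with a fixed target marginal, and then invoke the classical equivalence of Monge's and Kantorovich's formulations for atomless source measures. Throughout I write $\text{Cost}^{\text{K}}_{w}$, $\text{Cost}^{\text{M}}_{w}$ for the values in \eqref{not-perfect-ot-kantorovich} and \eqref{not-perfect-ot-monge}, and $\text{Cost}^{\text{K}}$, $\text{Cost}^{\text{M}}$ for the ordinary costs \eqref{ot-kantorovich}, \eqref{ot-monge}.

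\textbf{Step 1 (slicing over admissible second marginals).} I would introduce
$\mathcal{S}\stackrel{def}{=}\{\nu\in\mathcal{P}(\mathcal{Y})\,:\,\nu\leq w\mathbb{Q}\}$,
the set of probability measures that may appear as the second marginal $\pi_y$ of a plan $\pi\in\Pi^{w}(\mathbb{P},\mathbb{Q})$, equivalently as $T\sharp\mathbb{P}$ for an admissible map $T$ in \eqref{not-perfect-ot-monge}. Then I would establish the two identities
\begin{equation*}
\text{Cost}^{\text{K}}_{w}(\mathbb{P},\mathbb{Q})=\inf_{\nu\in\mathcal{S}}\text{Cost}^{\text{K}}(\mathbb{P},\nu),\qquad\qquad \text{Cost}^{\text{M}}_{w}(\mathbb{P},\mathbb{Q})=\inf_{\nu\in\mathcal{S}}\text{Cost}^{\text{M}}(\mathbb{P},\nu).
\end{equation*}
Both are elementary. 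For "$\geq$": any $\pi\in\Pi^{w}(\mathbb{P},\mathbb{Q})$ has $\pi_y\in\mathcal{S}$ and $\pi\in\Pi(\mathbb{P},\pi_y)$, so $\int c\,d\pi\geq\text{Cost}^{\text{K}}(\mathbb{P},\pi_y)\geq\inf_{\nu\in\mathcal{S}}\text{Cost}^{\text{K}}(\mathbb{P},\nu)$; likewise any admissible $T$ has $T\sharp\mathbb{P}\in\mathcal{S}$ and pushes $\mathbb{P}$ to $T\sharp\mathbb{P}$, so its cost dominates $\text{Cost}^{\text{M}}(\mathbb{P},T\sharp\mathbb{P})$. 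For "$\leq$": for every $\nu\in\mathcal{S}$ one has $\Pi(\mathbb{P},\nu)\subseteq\Pi^{w}(\mathbb{P},\mathbb{Q})$ (since $\pi_y=\nu\leq w\mathbb{Q}$), and every $T$ with $T\sharp\mathbb{P}=\nu$ is admissible in \eqref{not-perfect-ot-monge}; taking infima gives the reverse inequalities.

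\textbf{Step 2 (fixed–marginal equivalence and conclusion).} For each fixed $\nu\in\mathcal{S}$, the measures $\mathbb{P},\nu$ live on compact subsets of $\mathbb{R}^{D}$, $\mathbb{P}$ is atomless, and $c$ is continuous (hence bounded), so the standard equivalence of Monge's and Kantorovich's OT problems applies: $\text{Cost}^{\text{M}}(\mathbb{P},\nu)=\text{Cost}^{\text{K}}(\mathbb{P},\nu)$ — this is exactly the statement already used in \wasyparagraph 2 (see \citep[Theorem 1.33]{santambrogio2015optimal}), the relevant "mild assumption" being that $\mathbb{P}$ has no atoms; in particular $\{T:\,T\sharp\mathbb{P}=\nu\}\neq\varnothing$ since an atomless Borel probability on a Polish space can be pushed onto any target. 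Since the two families $\{\text{Cost}^{\text{M}}(\mathbb{P},\nu)\}_{\nu\in\mathcal{S}}$ and $\{\text{Cost}^{\text{K}}(\mathbb{P},\nu)\}_{\nu\in\mathcal{S}}$ agree term by term, their infima coincide, and combining with Step 1 yields $\text{Cost}^{\text{M}}_{w}(\mathbb{P},\mathbb{Q})=\text{Cost}^{\text{K}}_{w}(\mathbb{P},\mathbb{Q})$, which is the claim.

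\textbf{Where the difficulty sits.} The only genuine subtlety is the \emph{inequality} marginal constraint $\pi_y\leq w\mathbb{Q}$. A naive "density of Monge maps among Kantorovich plans" argument would take an optimal IT plan $\pi^{*}$ (which exists by Proposition \ref{theorem-existence}) and approximate it by maps $T_n$ with $(\mathrm{id},T_n)\sharp\mathbb{P}\rightharpoonup\pi^{*}$ and matching costs, but $T_n\sharp\mathbb{P}$ need not satisfy $T_n\sharp\mathbb{P}\leq w\mathbb{Q}$, so the approximants may fall outside the feasible set and the argument stalls. The slicing of Step 1 is precisely what sidesteps this: once the second marginal is frozen to some $\nu\in\mathcal{S}$, \emph{every} Monge map onto $\nu$ is automatically IT-admissible, the inequality constraint plays no further role, and the whole statement is outsourced to the known atomless-source equivalence. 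A secondary, purely technical point is the nonemptiness of the Monge feasible set for a generic $\nu\in\mathcal{S}$, which again rests only on $\mathbb{P}$ being atomless.
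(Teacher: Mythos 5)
Your proof is correct and takes essentially the same route as the paper: both reduce the IT problem to an ordinary OT problem with a frozen second marginal and then invoke the atomless-source Monge--Kantorovich equivalence of \citep[Theorem 1.33]{santambrogio2015optimal}. The only stylistic difference is that the paper works directly with the marginal $\pi^{*}_{y}$ of an optimal IT plan (whose existence is Proposition~\ref{theorem-existence}), whereas you parametrize the reduction over all admissible second marginals $\nu\in\mathcal{S}$; the substance is identical.
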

\vspace{-2mm}However, it is not guaranteed that $\inf$ in Monge's problem \eqref{not-perfect-ot-monge} is attained even in the Euclidean case. Still for general Polish spaces $\mathcal{X},\mathcal{Y}$ it is clear that if there exists a deterministic IT plan in Kantorovich's problem \eqref{not-perfect-ot-kantorovich} of the form $\pi^{*}=[\text{id}_{\mathcal{X}},T^{*}]$, then $T^{*}$ is an IT map in \eqref{not-perfect-ot-monge}, and the IT Monge's \eqref{not-perfect-ot-monge} and Kantorovich's \eqref{not-perfect-ot-kantorovich} costs coincide. Henceforth, for simplicity, we assume that $\mathcal{X},\mathcal{Y},c,\mathbb{P},\mathbb{Q}$ are such that \eqref{not-perfect-ot-monge} and \eqref{not-perfect-ot-kantorovich} coincide, e.g., those from Prop. \ref{equivalence-monge-kantorovich}.

IT problem \eqref{not-perfect-ot-kantorovich} can be viewed as an interpolation between OT \eqref{ot-kantorovich} and ET problems \eqref{perfect-ot-kantorovich}. Indeed, when $w=1$, the constraint $\pi_{y}\leq\mathbb{Q}$ is equivalent to $\pi_{y}=\mathbb{Q}$ as there is only one \textit{probability} measure which is $\leq \mathbb{Q}$, and it is $\mathbb{Q}$ itself. Thus, IT \eqref{not-perfect-ot-kantorovich} with $w=1$ coincides with OT \eqref{ot-kantorovich}. In the next section, we show that for $w\rightarrow \infty$ one recovers ET from IT.

\vspace{-2mm}
\subsection{Link between Incomplete and Extremal Transport}
\label{sec-way-to-perfection}
\vspace{-1mm}

Now we connect incomplete \eqref{not-perfect-ot-kantorovich} and extremal \eqref{perfect-ot-kantorovich} transport  tasks.
\begin{theorem}[IT costs converge to the ET cost when ${w\rightarrow\infty}$]\label{theorem-cost-behaviour} Function $w\mapsto \text{\normalfont Cost}_{w}(\mathbb{P},\mathbb{Q})$ is convex, non-increasing in $w\in[1,+\infty)$ and
\begin{equation*}
    \lim_{w\rightarrow \infty}\text{\normalfont Cost}_{w}(\mathbb{P},\mathbb{Q})=\text{\normalfont Cost}_{\infty}(\mathbb{P},\mathbb{Q}).
\end{equation*}
\end{theorem}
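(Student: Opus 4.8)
The plan is to prove the three assertions separately, treating the limit as the only substantive one.

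\emph{Monotonicity and convexity.} The feasible sets are nested: for $1\le w\le w'$ the constraint $\pi_{y}\le w\mathbb{Q}$ implies $\pi_{y}\le w'\mathbb{Q}$, so $\Pi^{w}(\mathbb{P},\mathbb{Q})\subseteq\Pi^{w'}(\mathbb{P},\mathbb{Q})$ and hence $\text{Cost}_{w'}(\mathbb{P},\mathbb{Q})\le\text{Cost}_{w}(\mathbb{P},\mathbb{Q})$; thus $w\mapsto\text{Cost}_{w}(\mathbb{P},\mathbb{Q})$ is non-increasing, and since $c$ is continuous on the compact set $\mathcal{X}\times\mathcal{Y}$ it is bounded below, so $\lim_{w\to\infty}\text{Cost}_{w}(\mathbb{P},\mathbb{Q})$ exists and is finite. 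For convexity, fix $w_{0},w_{1}\ge1$, $\lambda\in[0,1]$ and feasible $\pi_{0}\in\Pi^{w_{0}}(\mathbb{P},\mathbb{Q})$, $\pi_{1}\in\Pi^{w_{1}}(\mathbb{P},\mathbb{Q})$; the mixture $\pi_{\lambda}=(1-\lambda)\pi_{0}+\lambda\pi_{1}$ has $x$-marginal $\mathbb{P}$ and $y$-marginal $\le((1-\lambda)w_{0}+\lambda w_{1})\mathbb{Q}$, hence lies in $\Pi^{(1-\lambda)w_{0}+\lambda w_{1}}(\mathbb{P},\mathbb{Q})$, and since $\pi\mapsto\int c\,d\pi$ is linear, taking infima over $\pi_{0}$ and $\pi_{1}$ independently gives $\text{Cost}_{(1-\lambda)w_{0}+\lambda w_{1}}(\mathbb{P},\mathbb{Q})\le(1-\lambda)\text{Cost}_{w_{0}}(\mathbb{P},\mathbb{Q})+\lambda\text{Cost}_{w_{1}}(\mathbb{P},\mathbb{Q})$.

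\emph{The limit.} For the lower bound, let $\pi\in\Pi^{w}(\mathbb{P},\mathbb{Q})$ with $w<\infty$; if $y_{0}\notin\text{Supp}(\mathbb{Q})$ then $y_{0}$ has an open neighbourhood $A$ with $\mathbb{Q}(A)=0$, so $\pi_{y}(A)\le w\mathbb{Q}(A)=0$ and $y_{0}\notin\text{Supp}(\pi_{y})$; hence $\text{Supp}(\pi_{y})\subseteq\text{Supp}(\mathbb{Q})$, i.e.\ $\Pi^{w}(\mathbb{P},\mathbb{Q})\subseteq\Pi^{\infty}(\mathbb{P},\mathbb{Q})$, so $\text{Cost}_{w}(\mathbb{P},\mathbb{Q})\ge\text{Cost}_{\infty}(\mathbb{P},\mathbb{Q})$ and this survives the limit. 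For the reverse bound, fix $\varepsilon>0$; by Theorem~\ref{theorem-existence-perfect} there is a measurable ET map $T^{*}$ with $T^{*}(x)\in\text{NN}(x)$ for $\mathbb{P}$-a.e.\ $x$ and $\text{Cost}_{\infty}(\mathbb{P},\mathbb{Q})=\int_{\mathcal{X}}c^{*}(x)\,d\mathbb{P}(x)$. By uniform continuity of $c$ on $\mathcal{X}\times\mathcal{Y}$ pick $\delta>0$ with $|c(x,y)-c(x,y')|\le\varepsilon$ whenever $\|y-y'\|_{\mathcal{Y}}<\delta$, and for $z\in\text{Supp}(\mathbb{Q})$ let $B_{\delta}(z)$ be the open $\delta$-ball. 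Since $\mathbb{Q}(B_{\delta}(z))>0$ by definition of support, $z\mapsto\mathbb{Q}(B_{\delta}(z))$ is lower semicontinuous, and $\text{Supp}(\mathbb{Q})$ is compact, the quantity $m_{\delta}:=\min_{z\in\text{Supp}(\mathbb{Q})}\mathbb{Q}(B_{\delta}(z))$ lies in $(0,1]$. Now define the Markov kernel $K(z,\cdot):=\mathbb{Q}(\,\cdot\,\cap B_{\delta}(z))/\mathbb{Q}(B_{\delta}(z))$ and the coupling $d\pi_{\delta}(x,y):=dK(T^{*}(x),y)\,d\mathbb{P}(x)$. Then $(\pi_{\delta})_{x}=\mathbb{P}$; for Borel $A$, $K(T^{*}(x),A)\le\mathbb{Q}(A)/\mathbb{Q}(B_{\delta}(T^{*}(x)))\le\mathbb{Q}(A)/m_{\delta}$ for $\mathbb{P}$-a.e.\ $x$, so $(\pi_{\delta})_{y}\le(1/m_{\delta})\mathbb{Q}$ and $\pi_{\delta}\in\Pi^{1/m_{\delta}}(\mathbb{P},\mathbb{Q})$; and since $K(T^{*}(x),\cdot)$ is supported in $B_{\delta}(T^{*}(x))$ while $c(x,T^{*}(x))=c^{*}(x)$, the choice of $\delta$ gives $\int_{\mathcal{Y}}c(x,y)\,dK(T^{*}(x),y)\le c^{*}(x)+\varepsilon$, whence $\int c\,d\pi_{\delta}\le\text{Cost}_{\infty}(\mathbb{P},\mathbb{Q})+\varepsilon$. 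Consequently $\lim_{w\to\infty}\text{Cost}_{w}(\mathbb{P},\mathbb{Q})\le\text{Cost}_{1/m_{\delta}}(\mathbb{P},\mathbb{Q})\le\text{Cost}_{\infty}(\mathbb{P},\mathbb{Q})+\varepsilon$, and letting $\varepsilon\to0$ and combining with the lower bound completes the argument.

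\emph{Main obstacle.} The soft parts are genuinely routine; the step that needs care is the construction of $\pi_{\delta}$ in the reverse inequality: establishing $m_{\delta}>0$ via lower semicontinuity of $z\mapsto\mathbb{Q}(B_{\delta}(z))$ on the compact support (continuity from below for open balls plus attainment of the minimum), and verifying that $\pi_{\delta}$ is a genuine Radon probability measure, which reduces to measurability of $z\mapsto K(z,\cdot)$ (joint measurability of $(z,y)\mapsto\mathbf{1}_{B_{\delta}(z)}(y)$ and Fubini) together with measurability of $T^{*}$ from Theorem~\ref{theorem-existence-perfect}.
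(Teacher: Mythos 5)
Your proof is correct, and the monotonicity, convexity, and lower-bound ($\Pi^{w}\subseteq\Pi^{\infty}$) parts coincide with the paper's argument. For the upper bound in the limit — the only substantive step — you take a genuinely different route. The paper works with the graph $\mathcal{S}=\{(x,y):y\in\text{NN}(x)\}$, shows it is compact, covers it by finitely many open rectangles $U_{x_n}\times V_{y_n}$ on which $c(x',y')-c^{*}(x')<\epsilon$, disjointifies the $U_n$, and builds the explicit piecewise-product plan $\pi=\sum_n \mathbb{P}\vert_{U_n'}\times\mathbb{Q}\vert_{V_n}/\mathbb{Q}(V_n)$, which lies in $\Pi^{w}$ for $w=\sum_n\mathbb{P}(U_n')/\mathbb{Q}(V_n)$. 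You instead exploit the metric structure more directly: pick a single $\delta$ from uniform continuity of $c$, show by Fatou that $z\mapsto\mathbb{Q}(B_{\delta}(z))$ is lower semicontinuous hence bounded below by some $m_{\delta}>0$ on the compact $\text{Supp}(\mathbb{Q})$, and push $\mathbb{P}$ forward through the Markov kernel $K(T^{*}(x),\cdot)$, getting a plan in $\Pi^{1/m_{\delta}}$ with cost at most $\text{Cost}_{\infty}+\epsilon$. Both arguments hinge on the same two pillars (compactness plus local density of $\mathbb{Q}$ near nearest neighbors), but your mollification-by-a-kernel construction is arguably slicker and avoids the finite-subcover bookkeeping; the paper's construction, in exchange, does not presuppose the existence of the measurable ET map $T^{*}$ (though Theorem~\ref{theorem-existence-perfect} has already been proved at that point, so this is not a real dependency issue) and yields an explicit expression for the threshold $w$ in terms of the cover. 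The measurability concerns you flag at the end (joint measurability of $(z,y)\mapsto\mathbf{1}_{B_{\delta}(z)}(y)$, Fubini, measurability of $T^{*}$) are the right ones to worry about and resolve as you say.
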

\vspace{-2mm}A natural subsequent question here is whether IT plans in \eqref{not-perfect-ot-kantorovich} converge to ET plans \eqref{perfect-ot-kantorovich} when $w\rightarrow\infty$. Our following result sheds the light on this question.
\begin{theorem}[IT plans converge to ET plans when ${w\rightarrow\infty}$] Consider $w_{1},w_{2},w_3,\dots \geq 1$ satisfying ${\lim_{n\rightarrow \infty}w_{n}=\infty}$. Let $\pi^{w_n}\in \Pi^{w_{n}}(\mathbb{P},\mathbb{Q})$ be a sequence of IT plans solving \eqref{not-perfect-ot-kantorovich} with $w=w_{n}$, respectively. Then it has a (weakly-*) converging sub-sequence. Every such sub-sequence of IT plans converges to an ET plan $\pi^{*}\in\Pi^{\infty}(\mathbb{P},\mathbb{Q})$.
\label{theorem-limiting}
\end{theorem}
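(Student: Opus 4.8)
The plan is to combine a soft compactness argument with the cost--convergence result of Theorem~\ref{theorem-cost-behaviour}. Since $\mathcal{X}\times\mathcal{Y}$ is a compact Polish space, $\mathcal{P}(\mathcal{X}\times\mathcal{Y})$ is sequentially weakly-* compact: it is a weakly-* closed subset of the unit ball of $\mathcal{M}(\mathcal{X}\times\mathcal{Y})=\mathcal{C}(\mathcal{X}\times\mathcal{Y})^{*}$ (Banach--Alaoglu), and total mass $1$ is preserved in the limit because the constant function $1$ is continuous. Hence the sequence $\pi^{w_{n}}$ has a weakly-* converging sub-sequence; fix an arbitrary such sub-sequence $\pi^{w_{n_{k}}}\to\pi^{*}$ with $\pi^{*}\in\mathcal{P}(\mathcal{X}\times\mathcal{Y})$. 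It then remains to show that (i) $\pi^{*}\in\Pi^{\infty}(\mathbb{P},\mathbb{Q})$ and (ii) $\pi^{*}$ is optimal for \eqref{perfect-ot-kantorovich}.

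For (i), the first-marginal constraint is immediate: testing weak-* convergence against functions $f(x)$ depending on $x$ only gives $\pi^{*}_{x}=\lim_{k}\pi^{w_{n_{k}}}_{x}=\mathbb{P}$. The support constraint is the step that genuinely uses the structure of the IT problem. I would first note that $\mu\leq w\mathbb{Q}$ forces $\text{Supp}(\mu)\subset\text{Supp}(\mathbb{Q})$, since any point outside $\text{Supp}(\mathbb{Q})$ has an open neighbourhood of zero $\mathbb{Q}$-mass and hence of zero $\mu$-mass. Consequently each $\pi^{w_{n}}$ is concentrated on the \emph{fixed closed} set $F=\mathcal{X}\times\text{Supp}(\mathbb{Q})$, i.e. $\pi^{w_{n}}(F)=1$ for all $n$. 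By the portmanteau property for weak-* convergence of probability measures, $\pi^{*}(F)\geq\limsup_{k}\pi^{w_{n_{k}}}(F)=1$, so $\pi^{*}$ is concentrated on $F$, which is exactly $\text{Supp}(\pi^{*}_{y})\subset\text{Supp}(\mathbb{Q})$. Thus $\pi^{*}\in\Pi^{\infty}(\mathbb{P},\mathbb{Q})$.

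For (ii), I would use that $c$ is continuous on the compact set $\mathcal{X}\times\mathcal{Y}$, hence bounded and a legitimate test function, so weak-* convergence yields $\int c\,d\pi^{w_{n_{k}}}\to\int c\,d\pi^{*}$. The left-hand side equals $\text{Cost}_{w_{n_{k}}}(\mathbb{P},\mathbb{Q})$ because each $\pi^{w_{n_{k}}}$ is an IT plan, and by Theorem~\ref{theorem-cost-behaviour} (monotone convergence of $\text{Cost}_{w}$ to $\text{Cost}_{\infty}$, which passes to any sub-sequence with $w_{n_{k}}\to\infty$) this tends to $\text{Cost}_{\infty}(\mathbb{P},\mathbb{Q})$. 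Hence $\int c\,d\pi^{*}=\text{Cost}_{\infty}(\mathbb{P},\mathbb{Q})$; since $\pi^{*}\in\Pi^{\infty}(\mathbb{P},\mathbb{Q})$ is feasible and attains the value \eqref{perfect-ot-kantorovich}, it is by definition an ET plan (that this common value is indeed the minimum and is attained follows from Theorem~\ref{theorem-existence-perfect} and the discussion after it).

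The main obstacle is passing the support constraint to the limit: $\text{Supp}(\pi^{w_{n}}_{y})\subset\text{Supp}(\mathbb{Q})$ holds for every $n$, but one cannot argue directly about the support of the weak-* limit, since $\mu\mapsto\text{Supp}(\mu)$ is not weak-* continuous. The fix is to re-encode the constraint as concentration on the fixed closed set $F=\mathcal{X}\times\text{Supp}(\mathbb{Q})$, where the portmanteau inequality for closed sets applies. Everything else is a routine assembly of Banach--Alaoglu compactness, boundedness/continuity of the cost, and the already-established cost convergence of Theorem~\ref{theorem-cost-behaviour}.
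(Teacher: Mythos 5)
Your proof is correct and follows the same overall skeleton as the paper's (Banach--Alaoglu compactness, first-marginal constraint, cost convergence via Theorem~\ref{theorem-cost-behaviour}), but you handle the support constraint differently, and your route is cleaner.

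The paper argues by contradiction: assuming some $y^{*}\in\text{Supp}(\pi^{*}_{y})\setminus\text{Supp}(\mathbb{Q})$, it constructs the explicit tent function $h(y)=\max\{0,\epsilon-\|y-y^{*}\|_{\mathcal{Y}}\}$ supported in a small ball $U$ disjoint from $\text{Supp}(\mathbb{Q})$, then observes $\int h\,d\pi^{w_n}_{y}=0$ for all $n$ while $\int h\,d\pi^{*}_{y}>0$, contradicting weak-* convergence. You instead re-encode the constraint as $\pi^{w_n}(F)=1$ for the fixed closed set $F=\mathcal{X}\times\text{Supp}(\mathbb{Q})$ and invoke the portmanteau inequality $\pi^{*}(F)\geq\limsup_{k}\pi^{w_{n_{k}}}(F)=1$. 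These are logically equivalent (the tent-function construction is essentially a by-hand proof of the closed-set portmanteau inequality in this special case), but your version is shorter, names the standard tool, and avoids the slight awkwardness of arguing about supports of weak-* limits, which is exactly the issue you correctly flag. One thing your argument buys is clarity: the reader immediately sees that the only non-routine observation is that the support constraint $\pi_y\leq w\mathbb{Q}$ can be replaced by the $w$-independent concentration constraint $\pi(F)=1$, which \emph{is} stable under weak-* limits. The paper's version buys self-containedness at the cost of a page of calculation. Either is acceptable; I would lean toward yours.
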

In general, there may be sub-sequences of IT plans converging to different ET plans $\pi^{*}\in\Pi^{\infty}(\mathbb{P},\mathbb{Q})$. However, our following corollary shows that with the increase of weight $w$, elements of \textbf{any} sub-sequence become closer to the \textbf{set} of ET plans.

\begin{corollary}[IT plans become closer to the set of ET plans when ${w\rightarrow\infty}$]
For all $\varepsilon>0$ $ \exists w(\varepsilon)\in[1,\infty)$ such that $\forall w \geq w(\varepsilon)$ and $\forall$ IT plan $\pi^w\in\Pi_w(\mathbb{P}, \mathbb{Q})$ solving Kantorovich's IT problem \eqref{not-perfect-ot-kantorovich}, there exists an ET plan $\pi^*$ which is $\varepsilon$-close to $\pi^w$ in $\mathbb{W}_1$, i.e., $\mathbb{W}_1(\pi^*, \pi^w)\leq \varepsilon$.
\label{corollary-it-converge-et}
\end{corollary}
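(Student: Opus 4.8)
The plan is to derive the corollary as a short compactness argument on top of Theorem~\ref{theorem-limiting}. Two standard preliminary facts are needed. First, since $\mathcal{X},\mathcal{Y}$ are compact Polish spaces, the product $\mathcal{X}\times\mathcal{Y}$ is compact, so on $\mathcal{P}(\mathcal{X}\times\mathcal{Y})$ the metric $\mathbb{W}_1$ metrizes the weak-* topology (see, e.g., \citep{villani2008optimal,santambrogio2015optimal}); in particular, weak-* convergence of probability measures on $\mathcal{X}\times\mathcal{Y}$ is equivalent to convergence in $\mathbb{W}_1$. Second, the set $\Pi^{\infty}(\mathbb{P},\mathbb{Q})$ of ET plans is non-empty: by Theorem~\ref{theorem-existence-perfect} there exists an ET map $T^{*}$, and $\pi^{*}=[\text{id}_{\mathcal{X}},T^{*}]\sharp\mathbb{P}\in\Pi^{\infty}(\mathbb{P},\mathbb{Q})$.

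With these in hand, I would argue by contradiction. Suppose the corollary fails: there is $\varepsilon>0$ such that, taking the role of $w(\varepsilon)$ to be each $n\in\mathbb{N}$ in turn, one can find $w_{n}\geq n$ and an IT plan $\pi^{w_{n}}\in\Pi^{w_{n}}(\mathbb{P},\mathbb{Q})$ solving \eqref{not-perfect-ot-kantorovich} for $w=w_{n}$ with $\mathbb{W}_1(\pi^{*},\pi^{w_{n}})>\varepsilon$ for \emph{every} ET plan $\pi^{*}\in\Pi^{\infty}(\mathbb{P},\mathbb{Q})$. Since $w_{n}\to\infty$, Theorem~\ref{theorem-limiting} applies to the sequence $(\pi^{w_{n}})_{n}$: it admits a weakly-* convergent subsequence $(\pi^{w_{n_{k}}})_{k}$, and its limit $\bar\pi$ belongs to $\Pi^{\infty}(\mathbb{P},\mathbb{Q})$. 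By the equivalence of weak-* and $\mathbb{W}_1$ convergence on $\mathcal{P}(\mathcal{X}\times\mathcal{Y})$, we get $\mathbb{W}_1(\pi^{w_{n_{k}}},\bar\pi)\to 0$. But choosing $\pi^{*}=\bar\pi$ in the displayed bound gives $\mathbb{W}_1(\bar\pi,\pi^{w_{n_{k}}})>\varepsilon$ for all $k$, contradicting $\mathbb{W}_1(\pi^{w_{n_{k}}},\bar\pi)\to 0$. Hence the corollary holds. Equivalently, setting $d(\pi)\stackrel{def}{=}\inf_{\pi^{*}\in\Pi^{\infty}(\mathbb{P},\mathbb{Q})}\mathbb{W}_1(\pi,\pi^{*})$, the claim is precisely that $\sup_{\pi^{w}}d(\pi^{w})\to 0$ as $w\to\infty$ (the $\sup$ over IT plans solving \eqref{not-perfect-ot-kantorovich} for the given $w$), and the argument above shows this $\sup$ cannot fail to vanish.

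The main (and essentially only) difficulty is already discharged upstream, in Theorem~\ref{theorem-limiting}: the nontrivial content is the tightness/sequential-compactness of families of IT plans as $w\to\infty$ and the identification of every subsequential weak-* limit as a member of $\Pi^{\infty}(\mathbb{P},\mathbb{Q})$. Given that, the present corollary is pure soft analysis — a negation of the $\varepsilon$–$w(\varepsilon)$ statement produces a sequence to which Theorem~\ref{theorem-limiting} directly applies, and compactness of $\mathcal{X}\times\mathcal{Y}$ upgrades weak-* convergence to $\mathbb{W}_1$ convergence, closing the contradiction. The only points to double-check while writing are that $\Pi^{\infty}(\mathbb{P},\mathbb{Q})\neq\varnothing$ (so that "$\varepsilon$-close ET plan" is meaningful) and that the chosen metric on $\mathcal{X}\times\mathcal{Y}$ underlying $\mathbb{W}_1$ indeed induces the product topology, both of which are immediate here.
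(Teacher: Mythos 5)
Your proof is correct and follows essentially the same route as the paper's: negate the statement to extract a sequence $w_n\to\infty$ of IT plans uniformly $\varepsilon$-far from all ET plans, invoke Theorem~\ref{theorem-limiting} to obtain a weakly-* convergent subsequence with limit in $\Pi^{\infty}(\mathbb{P},\mathbb{Q})$, and use that $\mathbb{W}_1$ metrizes weak-* convergence on the compact $\mathcal{X}\times\mathcal{Y}$ to reach a contradiction. The extra observations you flag (non-emptiness of $\Pi^{\infty}(\mathbb{P},\mathbb{Q})$, reformulation via the distance-to-set function) are sound but not needed beyond what the paper already records.
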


Providing a stronger convergence result here is challenging, and we leave this theoretical question open for future studies. Our Theorems \ref{theorem-cost-behaviour}, \ref{theorem-limiting} and Corollary \ref{corollary-it-converge-et} suggest that to obtain a fine approximation of an ET plan ($w=\infty$), one may use an IT plan for sufficiently large finite $w$. In Appendix \ref{sec-gt-solution}, we \textit{\underline{empirically demonstrate}} this observation through an experiment where the ground-truth deterministic ET plan is analytically known. Below we develop a neural algorithm to compute IT plans.

\vspace{-2mm}
\subsection{Computational Algorithm for Incomplete Transport}
\label{sec-algorithm}
\vspace{-1mm}

To begin with, for IT \eqref{not-perfect-ot-kantorovich}, we derive the dual problem.
\begin{theorem}[Dual problem for IT] It holds
\vspace{-1mm}\begin{equation}
    \text{\normalfont Cost}_{w}(\mathbb{P},\mathbb{Q})\!=\!\max_{f\leq 0}\!\int_{\mathcal{X}}f^{c}(x)d\mathbb{P}(x)\!+w\!\!\int_{\mathcal{Y}}\!f(y)d\mathbb{Q}(y),
    \label{not-perfect-ot-dual}
\end{equation}
where the $\max$ is taken over non-positive $f\in\mathcal{C}(\mathcal{Y})$ and $f^{c}(x)\stackrel{def}{=}\min\limits_{y\in\mathcal{Y}}\big\lbrace c(x,y)-f(y)\big\rbrace$.
\label{theorem-duality}
\end{theorem}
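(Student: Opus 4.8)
The plan is to derive \eqref{not-perfect-ot-dual} by reducing the Kantorovich IT problem \eqref{not-perfect-ot-kantorovich} to an ordinary \emph{balanced} optimal transport problem on an enlarged space and then invoking classical Kantorovich duality. Augment the source space by one isolated point, $\mathcal{X}^{+}\stackrel{def}{=}\mathcal{X}\sqcup\{\widehat{x}\}$ (still compact Polish), and extend the cost to $\widehat{c}:\mathcal{X}^{+}\times\mathcal{Y}\to\mathbb{R}$ by $\widehat{c}(x,y)=c(x,y)$ for $x\in\mathcal{X}$ and $\widehat{c}(\widehat{x},y)=0$; since $\widehat{x}$ is isolated and $c$ is continuous on the compact $\mathcal{X}\times\mathcal{Y}$, the function $\widehat{c}$ is continuous and bounded. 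Put $\mathbb{P}^{+}\stackrel{def}{=}\mathbb{P}+(w-1)\delta_{\widehat{x}}$; then $\mathbb{P}^{+}$ and $w\mathbb{Q}$ both have total mass $w$ (here $w\geq 1$ is what makes the weight $w-1$ non-negative).

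First I would prove the value identity $\text{Cost}_{w}(\mathbb{P},\mathbb{Q})=\inf_{\pi'\in\Pi(\mathbb{P}^{+},w\mathbb{Q})}\int_{\mathcal{X}^{+}\times\mathcal{Y}}\widehat{c}\,d\pi'$. Given an admissible $\pi\in\Pi^{w}(\mathbb{P},\mathbb{Q})$, the ``slack'' $\rho\stackrel{def}{=}w\mathbb{Q}-\pi_{y}\geq 0$ has mass $w-1$; gluing $\pi$ with the independent coupling $(w-1)\delta_{\widehat{x}}\otimes\tfrac{\rho}{w-1}$ (trivial when $w=1$) yields $\pi'\in\Pi(\mathbb{P}^{+},w\mathbb{Q})$ with $\int\widehat{c}\,d\pi'=\int c\,d\pi$. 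Conversely, the restriction of any $\pi'\in\Pi(\mathbb{P}^{+},w\mathbb{Q})$ to $\mathcal{X}\times\mathcal{Y}$ has $\mathcal{X}$-marginal $\mathbb{P}$ and $\mathcal{Y}$-marginal equal to $w\mathbb{Q}$ minus a non-negative measure, hence lies in $\Pi^{w}(\mathbb{P},\mathbb{Q})$, and it carries the same cost because the part of $\pi'$ on $\{\widehat{x}\}\times\mathcal{Y}$ costs $0$. This gives the identity. Now classical Kantorovich duality on the compact spaces $\mathcal{X}^{+},\mathcal{Y}$ with the continuous cost $\widehat{c}$ (see e.g.\ \citep{santambrogio2015optimal,villani2008optimal}) supplies strong duality \emph{with attainment}: there is a continuous pair $(\varphi,\psi)$ with $\varphi(x)+\psi(y)\leq\widehat{c}(x,y)$ and $\text{Cost}_{w}(\mathbb{P},\mathbb{Q})=\int_{\mathcal{X}}\varphi\,d\mathbb{P}+(w-1)\varphi(\widehat{x})+w\int_{\mathcal{Y}}\psi\,d\mathbb{Q}$.

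It then remains to rewrite this in the claimed form. The constraints at $\widehat{x}$ read $\psi(y)\leq-\varphi(\widehat{x})=:-a$ for all $y$, so $f\stackrel{def}{=}\psi+a\in\mathcal{C}(\mathcal{Y})$ satisfies $f\leq 0$; the remaining constraints give $\varphi(x)\leq c(x,y)-f(y)+a$ for all $y$, i.e.\ $\varphi(x)\leq a+f^{c}(x)$, and $f^{c}$ is continuous by the argument of Proposition~\ref{continuity-nn} (the same proof, with $\min$ over the compact $\mathcal{Y}$). Because \eqref{not-perfect-ot-dual} is a maximization, replacing $\varphi$ on $\mathcal{X}$ by $a+f^{c}$ keeps all constraints and does not decrease the objective; the terms in $a$ then cancel ($a+(w-1)a-wa=0$), leaving exactly $\int_{\mathcal{X}}f^{c}\,d\mathbb{P}+w\int_{\mathcal{Y}}f\,d\mathbb{Q}$, so the asserted $\max$ is attained at this $f$ and equals $\text{Cost}_{w}(\mathbb{P},\mathbb{Q})$. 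For completeness I would also record the easy reverse (weak-duality) inequality: for any $f\leq 0$ and any $\pi\in\Pi^{w}(\mathbb{P},\mathbb{Q})$, from $c(x,y)\geq f^{c}(x)+f(y)$ and $\pi_{y}\leq w\mathbb{Q}$ (using $f\leq 0$) one gets $\int c\,d\pi\geq\int f^{c}\,d\mathbb{P}+w\int f\,d\mathbb{Q}$, so $\text{Cost}_{w}(\mathbb{P},\mathbb{Q})$ dominates the supremum. The main obstacle is the first, bookkeeping-heavy step: choosing the augmentation so that the extended cost is \emph{continuous} (not merely measurable) while the value is exactly preserved, since only then does off-the-shelf Kantorovich duality apply and deliver both the equality and a maximizer; once that is in place, the rest is routine. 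An alternative to the augmentation is a direct Fenchel--Rockafellar / Sion minimax argument on the Lagrangian of the linear problem \eqref{not-perfect-ot-kantorovich}, with a non-negative multiplier $g=-f\in\mathcal{C}(\mathcal{Y})$, $g\geq 0$, for the inequality constraint $\pi_{y}\leq w\mathbb{Q}$; it reaches the same formula and is worth cross-checking against the reduction above.
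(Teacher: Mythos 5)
Your proof is correct and takes a genuinely different route from the paper's. The paper works with the Lagrangian directly: it encodes the inequality constraint $\pi_{y}\leq w\mathbb{Q}$ through the indicator functional $I(\pi)=\sup_{f\leq 0}\int f\,d(w\mathbb{Q}-\pi_{y})$, swaps $\min_{\pi}$ and $\sup_{f}$ using a minimax theorem on the convex compact set $\Pi(\mathbb{P})$, uses disintegration plus a measurable selection to identify the inner minimum with $\int f^{c}\,d\mathbb{P}$, and then establishes attainment of the outer $\sup$ by hand via a double $c$-transform regularization and Arzel\`a--Ascoli. You instead use the classical ``dummy point'' reduction to balanced OT: adjoin an isolated point $\widehat{x}$ with zero cost, set $\mathbb{P}^{+}=\mathbb{P}+(w-1)\delta_{\widehat{x}}$ so both marginals have mass $w$, prove the exact value identity $\text{Cost}_{w}(\mathbb{P},\mathbb{Q})=\inf_{\Pi(\mathbb{P}^{+},w\mathbb{Q})}\int\widehat{c}$, invoke off-the-shelf Kantorovich duality with attainment for the continuous cost $\widehat{c}$ on compact spaces, and then massage the optimal pair $(\varphi,\psi)$ into the constrained form $f=\psi+\varphi(\widehat{x})\leq 0$, $\varphi\mapsto\varphi(\widehat{x})+f^{c}$, with the additive constants cancelling. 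What your approach buys is concision and the ability to import strong duality and dual attainment as a black box, avoiding a separate minimax theorem and the compactness argument for the maximizer; its cost is that the augmentation is an indirection (and needs the observation that the extended cost is genuinely continuous because $\widehat{x}$ is isolated), whereas the paper's direct Lagrangian route exposes $f$ explicitly as a multiplier for the inequality constraint, which is exactly what motivates the saddle-point algorithm built from \eqref{main-objective}. Both proofs are complete; in particular your weak-duality paragraph (using $\pi_{y}\leq w\mathbb{Q}$ together with $f\leq 0$ to get $\int f\,d\pi_{y}\geq w\int f\,d\mathbb{Q}$) correctly closes the equality, and your step-by-step check that the rewritten potential pair remains feasible and only increases the objective is the right way to pass from the unconstrained Kantorovich pair to a non-positive $f$.
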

\vspace{-1mm}We call the function $f$ \textit{potential}. In the definition of $f^{c}$, $\min$ is attained because $c,f$ are continuous and $\mathcal{Y}$ is compact. The function $f^c$ is called  the $c$-transform of $f$.

The difference of formula \eqref{not-perfect-ot-dual} from usual $c$-transform-based duality formulas for OT \eqref{ot-kantorovich}, see  \citep[\wasyparagraph 1.2]{santambrogio2015optimal}, \citep[\wasyparagraph 5]{villani2008optimal}, is that $f$ is required to be non-positive and the second term is multiplied by $w\geq 1$.
We rewrite the term $\int_{\mathcal{X}}f^{c}(x)d\mathbb{P}(x)$ in \eqref{not-perfect-ot-dual}:
\begin{eqnarray}
    \int_{\mathcal{X}}f^{c}(x)d\mathbb{P}(x)=\int_{\mathcal{X}}\min\limits_{y\in\mathcal{Y}}\big\lbrace c(x,y)-f(y)\big\rbrace d\mathbb{P}(x)=
    \inf\limits_{T:\mathcal{X}\rightarrow\mathcal{Y}}\int_{\mathcal{X}}\big\lbrace c\big(x,T(x)\big)-f(y)\big\rbrace d\mathbb{P}(x).
    \label{inf-rockafellar}
\end{eqnarray}
 Here we use the interchange between the integral and $\inf$ \citep[Theorem 3A]{rockafellar1976integral}; in \eqref{inf-rockafellar} the $\inf$ is taken over measurable maps. Since $(x,y)\mapsto c(x,y)-f(y)$ is a continuous function on a compact set, it admits a measurable selection $T(x)\in \argmin_{y\in\mathcal{Y}}\big\lbrace c(x,y)-f(y)\big\rbrace$ minimizing \eqref{inf-rockafellar}, see \citep[Theorem 18.19]{aliprantis2006infinite}. Thus, $\inf$ can be replaced by $\min$. We combine \eqref{inf-rockafellar} and \eqref{not-perfect-ot-dual} and obtain an equivalent saddle point problem:
 \vspace{-1mm}
 \begin{eqnarray}
    \text{Cost}_{w}(\mathbb{P},\mathbb{Q})=\max_{f\leq 0}\min_{T:\mathcal{X}\rightarrow\mathcal{Y}}\mathcal{L}(f,T),
     \label{main-objective}
 \end{eqnarray}
 \vspace{-1mm}where the functional $\mathcal{L}(f,T)$ is defined by
 \begin{eqnarray}
 \mathcal{L}(f,T)\stackrel{def}{=}\int_{\mathcal{X}}c\big(x,T(x)\big)d\mathbb{P}(x)-
\int_{\mathcal{X}}f\big(T(x)\big)d\mathbb{P}(x)+w\int_{\mathcal{Y}}f(y)d\mathbb{Q}(y).
\label{L-def}
 \end{eqnarray}

\vspace{-2.5mm}Functional $\mathcal{L}(f,T)$ can be viewed as a Lagrangian with $f\leq 0$ being a multiplier for the constraint ${T\sharp\mathbb{P}-w\mathbb{Q}\leq 0}$. By solving \eqref{main-objective}, one may obtain IT maps.

\begin{algorithm}[t!]
\SetInd{0.5em}{0.3em}
    {
        \SetAlgorithmName{Algorithm}{empty}{Empty}
        \SetKwInOut{Input}{Input}
        \SetKwInOut{Output}{Output}
        \Input{distributions
        $\mathbb{P},\mathbb{Q}$ accessible by samples; mapper $T_{\theta}\!:\!\mathcal{X}\!\rightarrow\!\mathcal{Y}$; potential $f_{\psi}:\!\mathcal{X}\!\rightarrow\!\mathbb{R}_{-}$;\\
        transport cost $c:\mathcal{X}\times\mathcal{Y}\rightarrow\mathbb{R}$; weight $w\geq 1$;
        number $K_{T}$ of inner iterations;\\
        }
        \Output{approximate IT map  $(T_{\theta})_{\#}\mathbb{P}\leq w\mathbb{Q}$\;}
        }
        
        \Repeat{not converged}{
            Sample batches $X\sim\mathbb{P}$, $Y\!\sim\! \mathbb{Q}$\;
            $\mathcal{L}_{f}\leftarrow  w \cdot \frac{1}{|Y|}\sum\limits_{y\in Y}f_{\psi}(y) - \frac{1}{|X|}\sum\limits_{x\in X}f_{\psi}\big(T_{\theta}(x)\big)$\;
            Update $\psi$ by using $\frac{\partial \mathcal{L}_{f}}{\partial \psi}$ to maximize $\mathcal{L}_{f}$\;
            
            \For{$k_{T} = 1,2, \dots, K_{T}$}{
                Sample batch $X\sim \mathbb{P}$\;
                ${\mathcal{L}_{T}\leftarrow\frac{1}{|X|}\sum\limits_{x\in X}\big[c
                \big(x, T_{\theta}(x)\big)- f_{\psi}\big(T_{\theta}(x)\big)\big]}$\;
            Update $\theta$ by using $\frac{\partial \mathcal{L}_{T}}{\partial \theta}$ to minimize $\mathcal{L}_{T}$\;
            }
        }
        \caption{ Procedure to compute the IT map between $\mathbb{P}$ and $\mathbb{Q}$ for transport cost $c(x,y)$ and weight $w$.}
        \label{algorithm-it}
\end{algorithm}

\begin{theorem}[IT maps are contained in optimal saddle points]\label{not-perfect-in-saddle} Let $f^{*}$ be any maximizer in \eqref{not-perfect-ot-dual}. If $\pi^{*}\in\Pi^{w}(\mathbb{P},\mathbb{Q})$ is a deterministic IT plan, i.e., it solves \eqref{not-perfect-ot-kantorovich} and has the form $\pi^{*}=[\text{\normalfont id}_{\mathcal{X}},T^{*}]\sharp\mathbb{P}$ for some measurable ${T^{*}:\mathcal{X}\rightarrow \mathcal{Y}}$, then
\vspace{-2mm}
\begin{equation*}
    T^{*}\in \argmin_{T:\mathcal{X}\rightarrow \mathcal{Y}}\mathcal{L}(f^{*},T).
\end{equation*}
\end{theorem}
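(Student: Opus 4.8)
Throughout write $V:=\text{Cost}_{w}(\mathbb{P},\mathbb{Q})$ for the common value of the primal \eqref{not-perfect-ot-kantorovich} and the dual \eqref{not-perfect-ot-dual}. The plan is to combine strong duality (Theorem \ref{theorem-duality}) with a complementary-slackness computation. First I would record that for \emph{any} non-positive $f\in\mathcal{C}(\mathcal{Y})$ the interchange of integral and infimum already used in \eqref{inf-rockafellar} gives $\min_{T:\mathcal{X}\to\mathcal{Y}}\mathcal{L}(f,T)=\int_{\mathcal{X}}f^{c}(x)\,d\mathbb{P}(x)+w\int_{\mathcal{Y}}f(y)\,d\mathbb{Q}(y)$, the minimum being attained at a measurable selection $T(x)\in\argmin_{y}\{c(x,y)-f(y)\}$. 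Specializing to the maximizer $f=f^{*}$ and using the duality identity \eqref{not-perfect-ot-dual}, this yields $\min_{T}\mathcal{L}(f^{*},T)=V$. Hence it suffices to prove $\mathcal{L}(f^{*},T^{*})\leq V$.

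To bound $\mathcal{L}(f^{*},T^{*})$ from above I would unwind \eqref{L-def}. Since $\pi^{*}=[\text{id}_{\mathcal{X}},T^{*}]\sharp\mathbb{P}$ solves \eqref{not-perfect-ot-kantorovich}, its transport cost equals $V$, and because $\pi^{*}$ is deterministic this cost is exactly $\int_{\mathcal{X}}c\big(x,T^{*}(x)\big)d\mathbb{P}(x)$. Moreover $T^{*}\sharp\mathbb{P}=\pi^{*}_{y}$, so by the change-of-variables formula $\int_{\mathcal{X}}f^{*}\big(T^{*}(x)\big)d\mathbb{P}(x)=\int_{\mathcal{Y}}f^{*}(y)\,d\pi^{*}_{y}(y)$. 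Substituting into \eqref{L-def},
\[
\mathcal{L}(f^{*},T^{*})=V-\int_{\mathcal{Y}}f^{*}\,d\pi^{*}_{y}+w\int_{\mathcal{Y}}f^{*}\,d\mathbb{Q}=V+\int_{\mathcal{Y}}f^{*}\,d\big(w\mathbb{Q}-\pi^{*}_{y}\big).
\]
The feasibility constraint $\pi^{*}_{y}\leq w\mathbb{Q}$ means $w\mathbb{Q}-\pi^{*}_{y}\in\mathcal{M}_{+}(\mathcal{Y})$, and $f^{*}\leq 0$, so the last integral is $\leq 0$ and therefore $\mathcal{L}(f^{*},T^{*})\leq V$.

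Chaining the two bounds gives $V=\min_{T}\mathcal{L}(f^{*},T)\leq\mathcal{L}(f^{*},T^{*})\leq V$, so $\mathcal{L}(f^{*},T^{*})=\min_{T}\mathcal{L}(f^{*},T)$, i.e., $T^{*}\in\argmin_{T:\mathcal{X}\to\mathcal{Y}}\mathcal{L}(f^{*},T)$; as a by-product one reads off the complementary slackness $\int_{\mathcal{Y}}f^{*}\,d(w\mathbb{Q}-\pi^{*}_{y})=0$. The argument is short, and there is no deep obstacle: the only delicate points are the measure-theoretic bookkeeping — that a deterministic optimal plan has cost $\int_{\mathcal{X}}c(x,T^{*}(x))d\mathbb{P}(x)$, the pushforward change-of-variables identity, and the fact that $w\mathbb{Q}-\pi^{*}_{y}$ is a genuine non-negative measure so that integrating the non-positive $f^{*}$ against it has a definite sign. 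These three facts are exactly where the hypotheses (non-positivity of $f^{*}$, feasibility and determinism of $\pi^{*}$, and strong duality from Theorem \ref{theorem-duality}) enter.
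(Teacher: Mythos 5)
Your proof is correct and follows essentially the same route as the paper's: you establish $\min_{T}\mathcal{L}(f^{*},T)=\text{Cost}_{w}(\mathbb{P},\mathbb{Q})$ via strong duality and the $c$-transform interchange, then bound $\mathcal{L}(f^{*},T^{*})$ above by $\text{Cost}_{w}(\mathbb{P},\mathbb{Q})$ using the pushforward identity and the sign of $\int f^{*}\,d(w\mathbb{Q}-\pi^{*}_{y})$, exactly the paper's chain of (in)equalities rearranged into two explicit one-sided bounds. The paper cites Lemma \ref{lemma-distinctness} to justify the sign of that integral whereas you observe directly that $w\mathbb{Q}-\pi^{*}_{y}$ is a non-negative measure and $f^{*}\leq 0$, which is a harmless (indeed cleaner) shortcut.
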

\vspace{-3mm}Our Theorem \ref{not-perfect-in-saddle} states that in \textit{some} optimal saddle points $(f^{*},T^{*})$ of \eqref{main-objective} it holds that $T^{*}$ is the IT map between $\mathbb{P},\mathbb{Q}$. In general, the $\arginf_{T}$ set for an optimal $f^{*}$ might contain not only IT maps $T^{*}$, but other functions as well (\textit{fake solutions}), see limitations in Appendix \ref{sec-limitations}.

We solve the optimization problem \eqref{main-objective} by approximating the map $T$ and potential $f$ with neural networks $T_{\theta}$ and $f_{\psi}$, respectively. To make $f_{\psi}$ non-positive, we use ${x\mapsto -|x|}$ as the last layer. The nets are trained using random batches from $\mathbb{P},\mathbb{Q}$ and stochastic gradient ascent-descent. We detail the optimization procedure in Algorithm \ref{algorithm-it}.

\vspace{-1.5mm}
\section{Related work}
\label{sec-related-work}
\vspace{-1mm}
\textbf{OT in generative models.} A popular way to apply OT in generative models is to use the \textbf{OT cost} as the loss function to update the generator \cite{arjovsky2017wasserstein, gulrajani2017improved,genevay2018learning}, see \cite{korotin2022kantorovich} for a survey. These methods are \textit{not relevant} to our study as they do not learn an OT map but only compute the OT cost.

\vspace{-0.5mm}Recent works \cite{korotin2023neural,korotin2023kernel, rout2022generative,fan2023neural,asadulaev2022neural,gazdieva2022unpaired,gushchin2023entropic} are the most related to our study. These papers show the possibility to learn the \textbf{OT maps} (or plans) via solving saddle point optimization problems derived from the standard $c$-transform-based duality formulas for OT. The underlying principle of our objective \eqref{main-objective} is analogous to theirs. The key \textbf{difference} is that they consider OT problems \eqref{ot-monge}, \eqref{ot-kantorovich} and enforce the \textit{equality} costraints, e.g., $T\sharp\mathbb{P}=\mathbb{Q}$, while our approach enforces the \textit{inequality} constraint $T\sharp\mathbb{P}\leq w\mathbb{Q}$ allowing to \textit{partially} align the measures. \textit{We provide a detailed discussion of \underline{relation} with these works as well as with the fundamental OT \eqref{ot-kantorovich} and  partial OT \eqref{ot-partial} literature }\cite{figalli2010optimal,caffarelli2010free,santambrogio2015optimal} in Appendix \ref{sec-proofs}, see bibliographic remarks after the proofs of Theorems \ref{theorem-cost-behaviour}, \ref{theorem-duality} and \ref{not-perfect-in-saddle}.

For completeness, we also mention other existing neural OT methods \cite{genevay2016stochastic,seguy2018large,makkuva2020optimal,
daniels2021score,korotin2019wasserstein}. These works are less related to our work because they either underperform compared to the above-mentioned saddle point methods, see \cite{korotin2021neural} for evaluation, or they solve specific OT formulations, e.g., entropic OT \citep[\wasyparagraph 4]{peyre2019computational}, which are not relevant to our study.

\vspace{-0.5mm}The papers \cite{yang2018scalable,lubeck2022neural,eyring2022modeling} are slightly more related to our work. They propose neural methods for unbalanced OT \cite{chizat2017unbalanced} which can also be used to \textit{partially} align measures. As we will see in Appendix \ref{sec-toy-2d-comparison}, UOT is hardly suitable for ET \eqref{perfect-ot-kantorovich} as it is not easy to control how it spreads the probability mass. Besides, these methods consider OT between small-dimensional datasets or in latent spaces. It is not clear whether they scale to high-dimensions, e.g., images. 

\vspace{-1mm}\textbf{Discrete OT methods}, including partial OT \citep{chapel2020partial}, are not relevant to us, see Appendix \ref{sec-discrete} for details.

\vspace{-0.5mm}
\textbf{Unpaired domain translation} \cite{alotaibi2020deep} is a generic problem which includes image super-resolution \cite{chen2022real}, inpainting \cite{zhang2022image}, style translation \cite{jing2019neural} tasks, etc. Hence, we do not mention all of the existing approaches but focus on their common main features instead. In many applications, it is important to preserve semantic information during the translation, e.g., the image content. In most cases, to do this it is sufficient to use convolutional neural networks. They preserve the image content thanks to their design which is targeted to only locally change the image \cite{de2021cyclegan}.
However, in some of the tasks additional image properties must be kept, e.g., image colors in super-resolution. Typical approaches to such problems are based on GANs and use additional \textit{similarity} losses, e.g., the basic CycleGAN \cite{zhu2017unpaired} enforces $\ell^1$ similarity. Such methods are mostly related to our work. However, without additional modifications most of these approaches only partially maintain the image properties, see \citep[Figure 5]{li2023parsing}. As we show in experiments (Appendix \ref{sec-gans}), IT achieves better similarity than popular CycleGAN, StarGAN-v2 \cite{choi2020stargan} methods, both default and modified to better preserve the image content.

\vspace{-2mm}
\section{Evaluation}
\label{sec-experiments}
\vspace{-2mm}
In \wasyparagraph\ref{sec-toy-exp}, we provide illustrative toy 2D examples. In \wasyparagraph\ref{sec-exp-translation}, we evaluate our method on the unpaired image-to-image translation task. 
Technical \underline{\textit{training details}} (architectures, learning rates, etc.) are given in Appendix \ref{sec-exp-details}. The code is written using \texttt{PyTorch} framework and is publicly available at 

\begin{center}
    \url{https://github.com/milenagazdieva/ExtremalNeuralOptimalTransport}.
\end{center}

\vspace{-1mm}\textbf{Transport costs.} We experiment with the quadratic cost $c(x,y)\!=\!\ell^{2}(x,y)$ as this cost already provides reasonable performance. We slightly abuse the notation and use $\ell^{2}$ to denote the squared error \textit{normalized} by the dimension.
Experiments with the \underline{\textit{perceptual cost}} are given in Appendix \ref{sec-exp-perceptual}.

\vspace{-2mm}
\subsection{Toy 2D experiments}
\vspace{-2mm}

\label{sec-toy-exp}

In this section, we provide \textit{'Wi-Fi'} and \textit{'Accept'} examples in 2D to show how the choice of $w$ affects the fraction of the target measure $\mathbb{Q}$ to which the probability mass of the input $\mathbb{P}$ is mapped. In both cases, measure $\mathbb{P}$ is \textit{Gaussian}. In Appendix \ref{sec-toy-2d-comparison}, we demonstrate how \textit{\underline{other methods}} perform on these \textit{'Wi-Fi'} and \textit{'Accept'} toy examples.

In \textit{'Wi-Fi'} experiment (Fig. \ref{fig:wifi-ours}), target $\mathbb{Q}$ contains 3 arcs. We provide the learned IT maps for $w\in[1,\frac{3}{2}, 3]$. The results show that by varying $w$ it is possible to control the fraction of $\mathbb{Q}$ to which the mass of $\mathbb{P}$ will be mapped. In Fig. \ref{fig:wifi-ours}, we see that for $w=1$ our IT method learns all 3 arcs. For $w=\frac{3}{2}$, it captures 2 arcs, i.e., $\approx\frac{2}{3}$ of $\mathbb{Q}$. For $w=3$, it learns 1 arc which corresponds to $\approx\frac{1}{3}$ of $\mathbb{Q}$. 

In \textit{'Accept'} experiment (Fig. \ref{fig:accept-not-extremal}), target $\mathbb{Q}$ is a two-line text. Here we put $w=2$ and, as expected, our method captures only one line of the text which is the closest to $\mathbb{P}$ in $\ell^{2}$.

 \begin{figure*}[!t]
\begin{subfigure}{0.26\textwidth}
  \begin{center}
    \includegraphics[width=\linewidth]{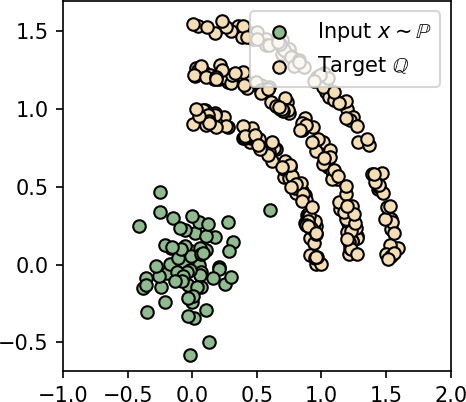}
  \end{center}
  \vspace{-2mm}
  \caption{\centering Input, target  measures.}
  \label{fig:wifi-io}
\end{subfigure}
\hfill
\begin{subfigure}{0.24\textwidth}
  \begin{center}
    \includegraphics[width=\linewidth]{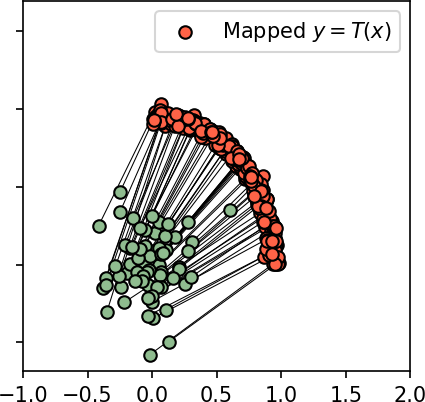}
  \end{center}
  \vspace{-2mm}
  \caption{\centering IT map, $w\!=\!3$.}
  \label{fig:wifi-w3}
\end{subfigure}
\hfill
\begin{subfigure}{0.24\textwidth}
  \begin{center}
    \includegraphics[width=\linewidth]{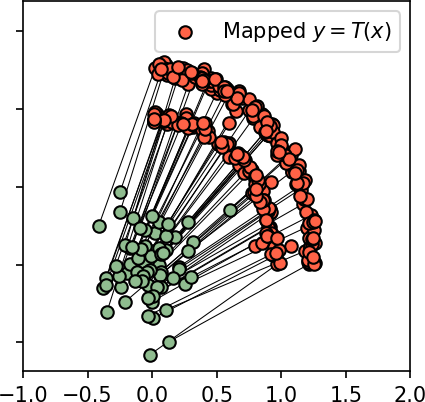}
  \end{center}
  \vspace{-2mm}
  \caption{\centering IT map, $w\!=\!\frac{3}{2}$.}
  \label{fig:wifi-w32}
\end{subfigure}
\hfill
\begin{subfigure}{0.24\textwidth}
  \begin{center}
    \includegraphics[width=\linewidth]{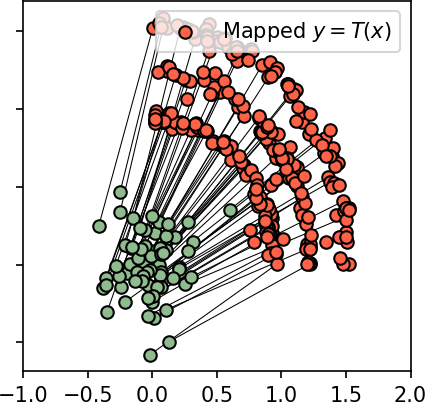}
  \end{center}
  \vspace{-2mm}
  \caption{\centering IT map, $w\!=\!1$.}
  \label{fig:wifi-w1}
\end{subfigure}
\vspace{-5mm}
\caption{\centering Incomplete transport (IT) maps learned by our Algorithm \ref{algorithm-it} in \textit{'Wi-Fi'} experiment.}
\label{fig:wifi-ours}
\vspace{-2mm}
\end{figure*}
\begin{figure*}[!t]
\hspace{-2mm}
\begin{subfigure}[t]{0.177\textwidth}
  \begin{center}
    \includegraphics[width=\linewidth]{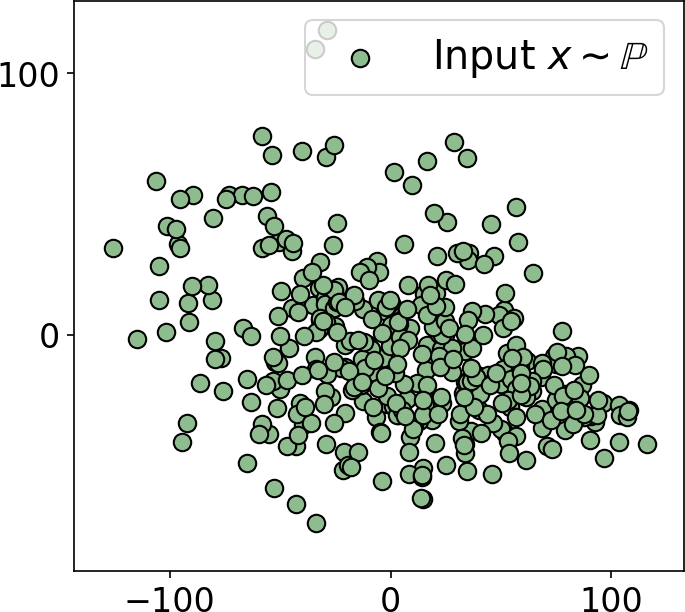}
  \end{center}
  \vspace{-2mm}
  \caption{\centering \scriptsize{Input measure.}}
  \label{fig:pca-in}
\end{subfigure}
\hfill
\begin{subfigure}[t]{0.16\textwidth}
  \begin{center}
    \includegraphics[width=\linewidth]{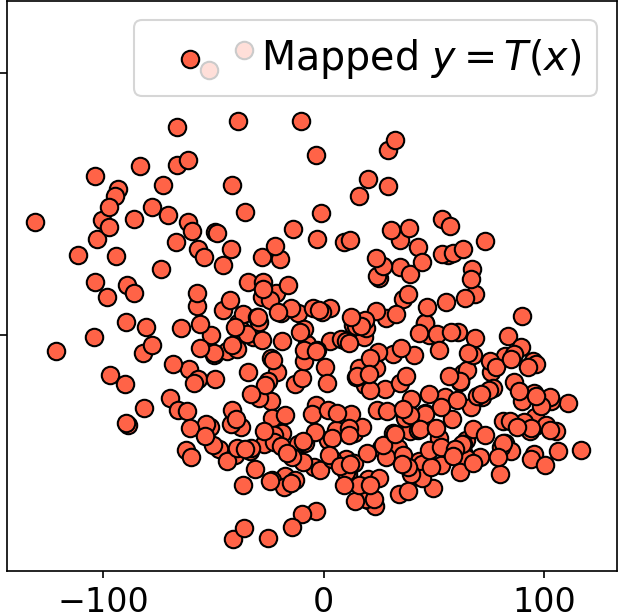}
  \end{center}
  \vspace{-2mm}
  \caption{\centering \scriptsize{IT map, $w\!=\!8$.}}
  \label{fig:pca-w8}
\end{subfigure}
\hfill
\begin{subfigure}[t]{0.16\textwidth}
  \begin{center}
    \includegraphics[width=\linewidth]{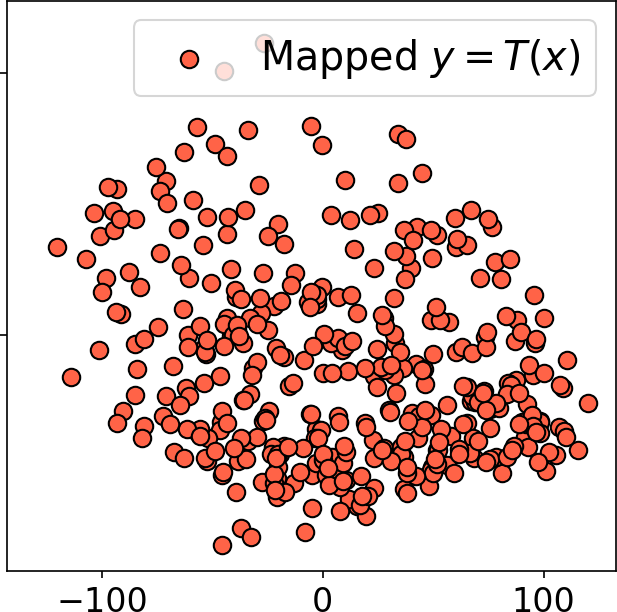}
  \end{center}
  \vspace{-2mm}
  \caption{\centering\scriptsize{IT map, $w\!=\!4$.}}
  \label{fig:pca-w4}
\end{subfigure}
\hfill
\begin{subfigure}[t]{0.16\textwidth}
  \begin{center}
    \includegraphics[width=\linewidth]{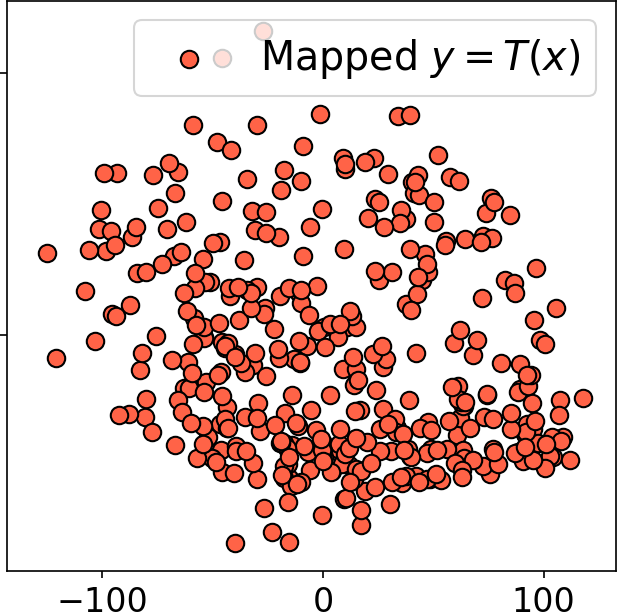}
  \end{center}
  \vspace{-2mm}
  \caption{\centering\scriptsize{IT map, $w\!=\!2$.}}
  \label{fig:pca-w2}
\end{subfigure}
\hfill
\begin{subfigure}[t]{0.16\textwidth}
  \begin{center}
    \includegraphics[width=\linewidth]{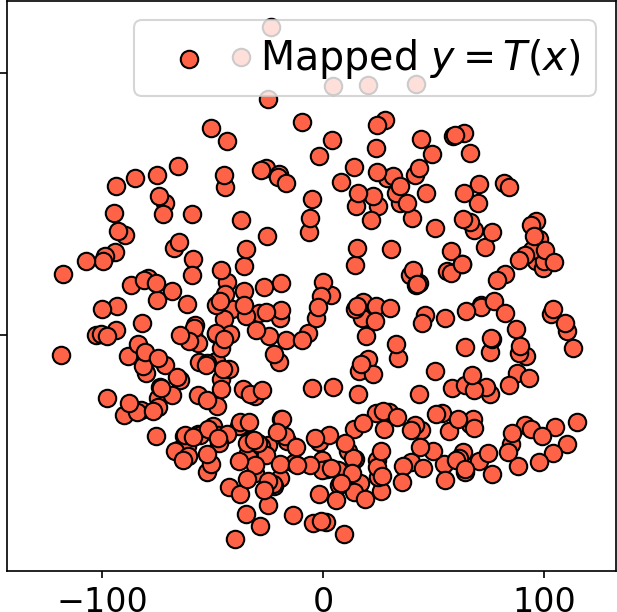}
  \end{center}
  \vspace{-2mm}
  \caption{\centering\scriptsize{IT map, $w\!=\!1$.}}
  \label{fig:pca-w1}
\end{subfigure}
\hfill
\begin{subfigure}[t]{0.16\textwidth}
  \begin{center}
    \includegraphics[width=\linewidth]{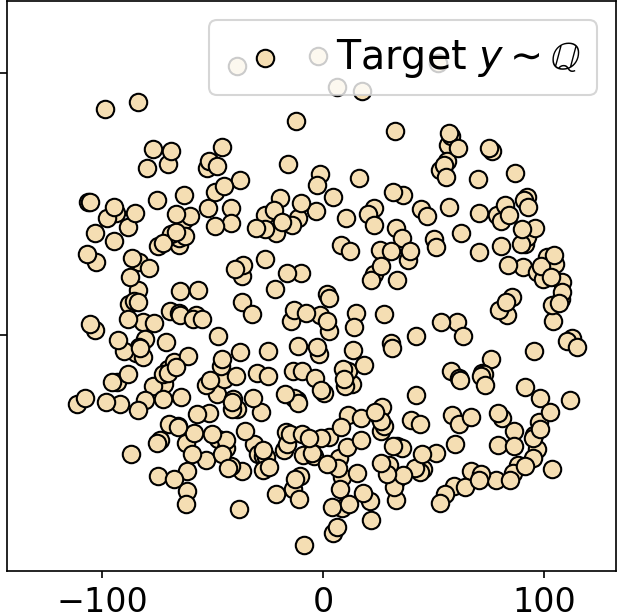}
  \end{center}
  \vspace{-2mm}
  \caption{\centering\scriptsize{Target measure.}}
  \label{fig:pca-out}
\end{subfigure}
\vspace{-2mm}
\caption{\centering PCA projections of input $x\!\sim\!\mathbb{P}$, mapped $T(x)\!\sim\! T\sharp\mathbb{P}$ and target $y\!\sim\!\mathbb{Q}$ test samples (\textit{handbag}$\rightarrow$\textit{shoes} experiment).}
\label{fig:handbag_shoes_pca}
\end{figure*}

\vspace{-2mm}
\subsection{Unpaired Image-to-image Translation}
\label{sec-exp-translation}

\vspace{-2mm}
Here we learn IT maps between various pairs of datasets. We test $w\!\in\!\{1,2,4,8\}$ in all experiments. For completeness, we consider \underline{\textit{bigger weights}} $w\in \{16, 32\}$ in Appendix \ref{sec-exp-bigger-weights}.

\begin{figure}[!t]
\vspace{-1.5mm}
\begin{subfigure}[t]{0.517\linewidth}
      \begin{center}
        \includegraphics[width=\linewidth]{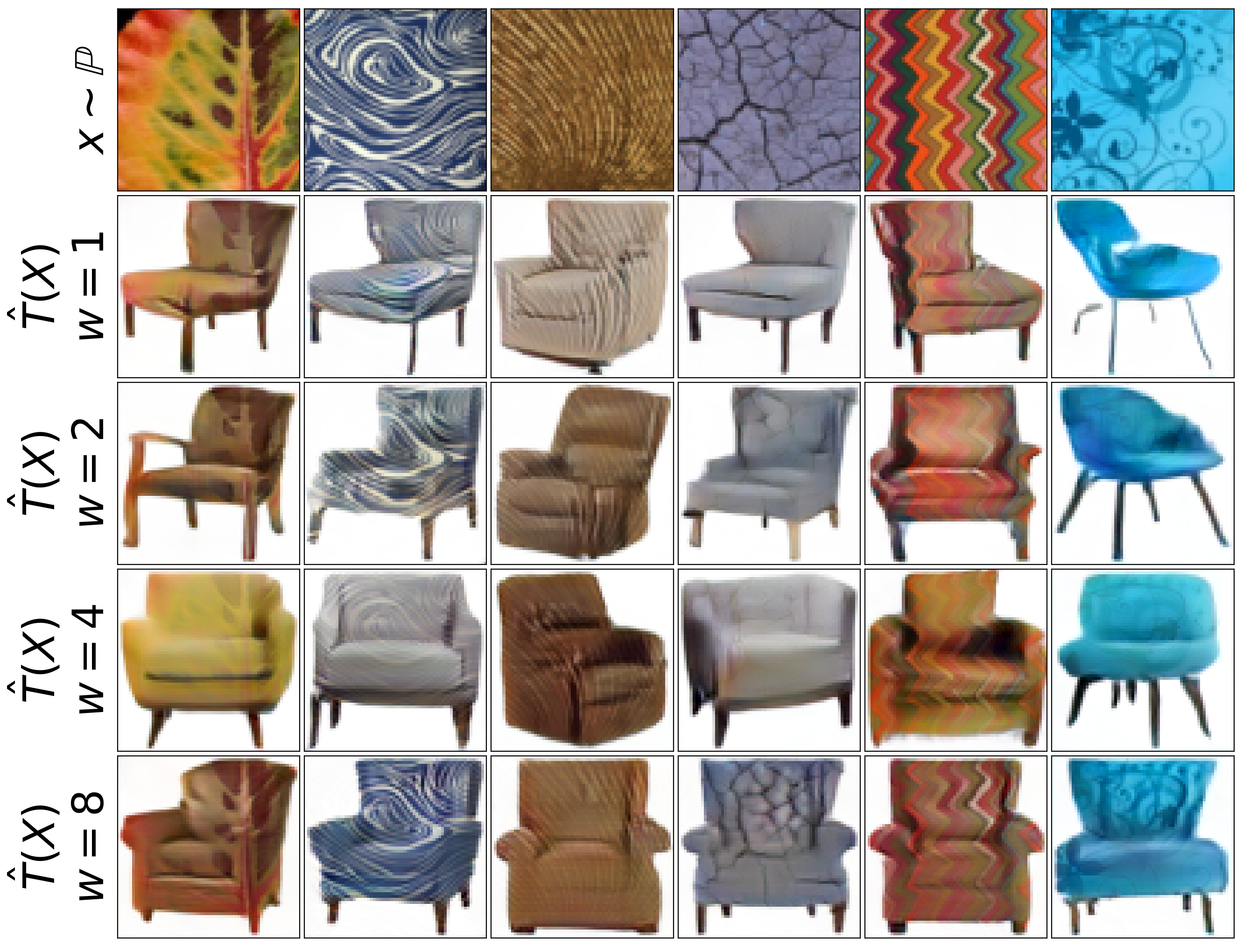}
      \end{center}
      \vspace{-3mm}
      \caption{\centering IT results for \textit{textures} $\rightarrow$ \textit{chairs} (64$\times$64).}
      \label{fig:textures-chairs-it}
      \vspace{-5mm}
\end{subfigure}
\hfill
\begin{subfigure}[t]{0.51\linewidth}
      \begin{center}
        \includegraphics[width=\linewidth]{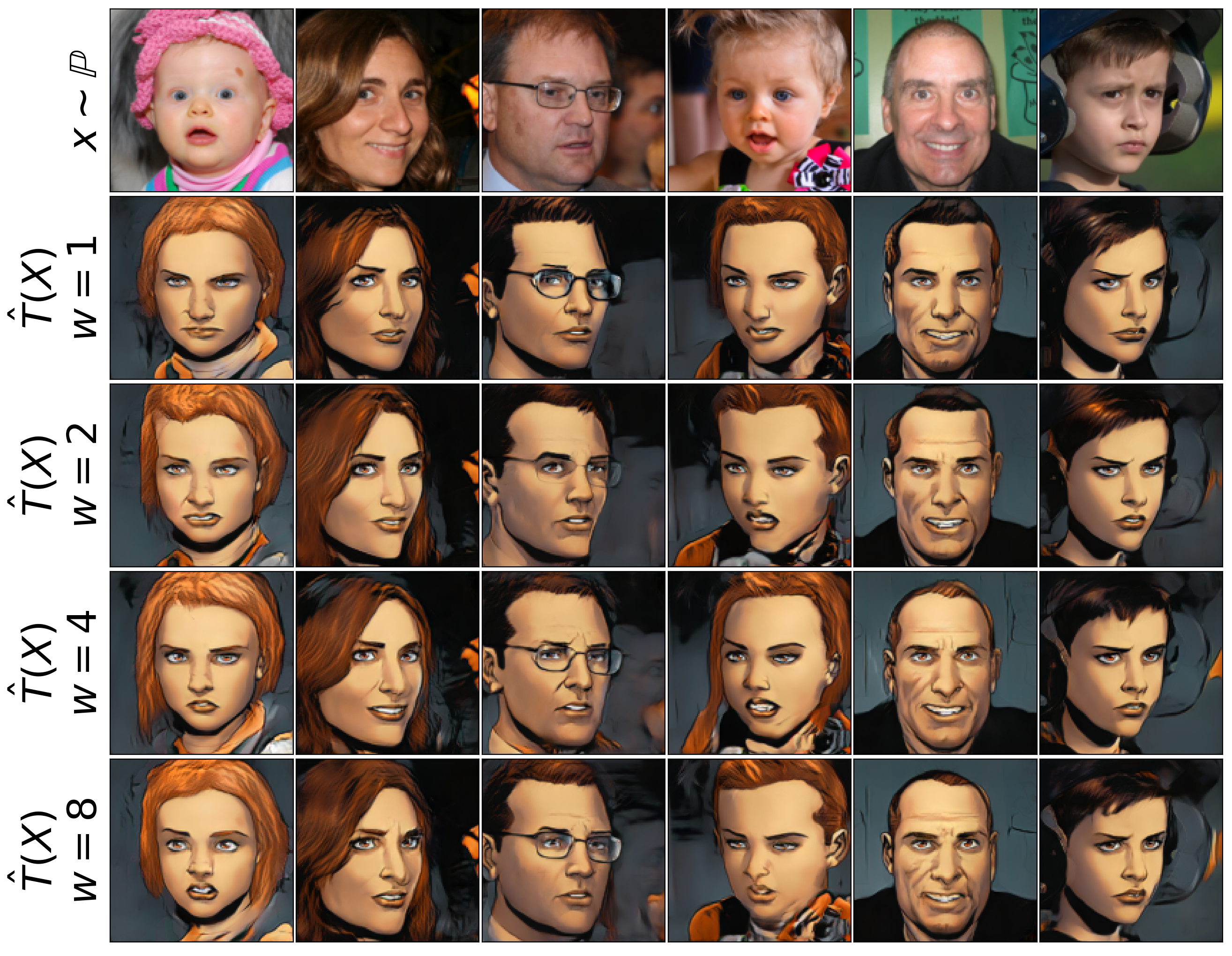}
      \end{center}
      \vspace{-3mm}
      \caption{\centering IT results for \textit{ffhq} $\rightarrow$ \textit{comics} (128$\times$128).}
      \label{fig:ffhq-comics-it}
      \vspace{-2mm}
\end{subfigure}
\caption{Unpaired Translation with our Incomplete Transport.}
\vspace{-2.5mm}
\end{figure}

\begin{table}[!t]
\centering
\scriptsize
\begin{subtable}[t]{0.48\textwidth}
    \centering
    \begin{tabular}{c|c|c|c|c}\hline
    \textbf{Experiment} & $w=1$ & $w=2$ & $w=4$ & $w=8$ \\ \hline 
    \textit{celeba} $\rightarrow$ \textit{anime} & 0.297 & 0.154 & 0.133 & 0.094 \\ \hline
    \textit{handbag} $\rightarrow$ \textit{shoes} & 0.368 & 0.320 & 0.259 & 0.252 \\ \hline
    \textit{textures} $\rightarrow$ \textit{chairs} & 0.603 & 0.516 & 0.474 & 0.408 \\ \hline
    \textit{ffhq} $\rightarrow$ \textit{comics} & 0.224 & 0.220 & 0.200 & 0.196 \\
    \hline
    \end{tabular}
    \vspace{-2mm}
    \caption{\centering Test $\ell^{2}$ transport cost of our learned IT maps.}
    \label{table:cost}
\end{subtable}
\hspace{\fill}
\begin{subtable}[t]{0.48\textwidth}
    \centering
    \begin{tabular}{c|c|c|c|c}\hline
    \textbf{Experiment} & $w=1$ & $w=2$ & $w=4$ & $w=8$ \\ \hline 
    \textit{celeba} $\rightarrow$ \textit{anime} & 14.65 & 20.79 & 22.18 & 22.84 \\ \hline
    \textit{handbag} $\rightarrow$ \textit{shoes} & 27.10 & 29.70 & 42.90 & 53.80 \\ \hline
    \textit{textures} $\rightarrow$ \textit{chairs} & N/A & N/A & N/A & N/A \\ \hline
    \textit{ffhq} $\rightarrow$ \textit{comics} & 20.95 & 22.38 & 22.77 & 23.67 \\
    \hline
    \end{tabular}
    \vspace{-2mm}
    \caption{\centering Test FID of our learned IT maps.}
    \label{table:fid}
\end{subtable}
\vspace{-2mm}
\caption{Test $\ell^2$ cost and FID of our learned IT maps.}
\vspace{-8mm}
\end{table}

\textbf{Image datasets.} We utilize the following publicly available datasets as $\mathbb{P}, \mathbb{Q}$: celebrity faces \cite{liu2015faceattributes}, aligned anime faces\footnote{\url{kaggle.com/reitanaka/alignedanimefaces}}, flickr-faces-HQ \cite{karras2019style}, comic faces\footnote{\url{kaggle.com/datasets/defileroff/comic-faces-paired-synthetic-v2}}, Amazon handbags from LSUN dataset \cite{yu2015lsun}, shoes \cite{yu2014fine}, textures \cite{cimpoi14describing} and chairs from Bonn furniture styles dataset \cite{aggarwal2018learning}. The sizes of datasets are from 5K to 500K samples. We work with $64\times 64$ and $128\times 128$ images.

\textbf{Train-test split.} We use 90\% of each dataset for training. The remaining 10\% are held for test. All the presented qualitative and quantitative results are obtained for test images.

\textbf{Experimental results.} Our evaluation shows that with the increase of $w$ the images $\hat{T}(x)$ translated by our IT method become more similar to the respective inputs $x$ w.r.t. $\ell^{2}$. In Table \ref{table:cost}, we quantify this effect. Namely, we show that the test transport cost $\frac{1}{N_{\text{test}}}\sum_{n=1}^{N_{\text{test}}}c\big(x,\hat{T}(x)\big)$ decreases with the increase of $w$ which empirically verifies our Theorem \ref{theorem-cost-behaviour}.

We qualitatively demonstrate this effect in Fig. \ref{fig:celeba-anime-it}, \ref{fig:handbag-shoes-it}, \ref{fig:ffhq-comics-it}, \ref{fig:textures-chairs-it} and \ref{fig:handbag_shoes_pca}. 
In \textit{celeba} (female) $\rightarrow$ \textit{anime} (Fig. \ref{fig:celeba-anime-it}), the hair and background colors of the learned anime images become closer to celebrity faces' colors with the increase of $w$. For example, in the 4th column of Fig. \ref{fig:celeba-anime-it}, the anime hair color changes from green to brown, which is close to that of the respective celebrity. In the 6th column, the background is getting darker.
In \textit{handbag}$\rightarrow$\textit{shoes} (Fig. \ref{fig:handbag-shoes-it}), the color, texture and size of the shoes become closer to that of handbag. Additionally, for this experiment we plot the projections of the learned IT maps to the first 2 principal components of $\mathbb{Q}$ (Fig. \ref{fig:handbag_shoes_pca}). We see that projections are close to $\mathbb{Q}$ for $w=1$ and become closer to $\mathbb{P}$ with the increase of $w$. In \textit{ffhq}$\rightarrow$\textit{comics}, the changes mostly affect facial expressions and individual characteristics. In \textit{textures}$\rightarrow$\textit{chairs}, the changes are mostly related to chairs' size which is expected since we use pixel-wise $\ell^2$ as the cost function. \textit{\underline{Additional qualitative results}} are given in Appendix \ref{sec-exp-extras} (Fig. \ref{fig:celeba-anime-add-it}, \ref{fig:handbag-shoes-it-add}, \ref{fig:textures-chairs-it}).

For completeness, we measure test FID \cite{heusel2017gans} of the translated samples, see Table \ref{table:fid}. We do not calculate FID in the \textit{handbag}$\rightarrow$\textit{chairs} experiment because of too small sizes of the test parts of the datasets (500 textures, 2K chairs). However, we emphasize that FID is not representative when $w>1$. In this case, IT maps learn by construction only a part of the target measure $\mathbb{Q}$. At the same time, FID tests how well the transported samples represent the entire target distribution and is very sensitive to mode dropping \citep[Fig. 1b]{lucic2018gans}. Therefore, while the cost decreases with the growth of $w$, FID, on the contrary, increases. This is expected since IT maps to smaller part of $\mathbb{Q}$. Importantly, the visual quality of the translated images $\hat{T}(x)$ is not decreasing.

In Appendix \ref{sec-gans}, we \textit{\underline{compare}} our IT method with other image-to-image translation methods and show that IT better preserves the input-output similarity.

\vspace{-2mm}
\section{Potential Impact}
\vspace{-1mm}

\textbf{Inequality constraints for generative models.} The majority of optimization objectives in generative models (GANs, diffusion models, normalizing flows, etc.) enforce the \textit{equality} constraints, e.g., $T\sharp\mathbb{P}=\mathbb{Q}$, where $T\sharp\mathbb{P}$ is the generated measure and $\mathbb{Q}$ is the data measure. Our work demonstrates that it is possible to enforce \textit{inequality} constraints, e.g., $T\sharp\mathbb{P}\leq w\mathbb{Q}$, and apply them to a large-scale problem. While in this work we primarily focus on the image-to-image translation task, we believe that the ideas presented in our paper have several visible prospects for further improvement and applications. We list them below.

\textbf{(1) Partial OT.} Learning alignments between measures of \textit{unequal} mass is a pervasive topic which has already perspective applications in biology to single-cell data \cite{yang2018scalable,lubeck2022neural,eyring2022modeling}.
The mentioned works use unbalanced OT \cite{chizat2017unbalanced}. This is an unconstrained problem where the mass spread is softly controlled by the regularization. Therefore, may be hard to control how the mass is actually distributed. Using partial OT which enforces hard inequality constraints might potentially soften this issue. Our IT problem \eqref{not-perfect-ot-kantorovich} is a particular case of partial OT \eqref{ot-partial}. We believe that our study is useful for future development of partial OT methods.

\textbf{(2) Generative nearest neighbors.} NNs play an important role in machine learning applications such as, e.g., image retrieval \cite{babenko2014neural}. These methods typically rely on fast \textit{discrete} approximate NN \cite{malkov2018efficient,johnson2019billion} and perform \textit{matching} with the latent codes of the train samples. In contrast, nowadays, with the rapid growth of large generative models such as DALL-E \cite{ramesh2022hierarchical}, CLIP \cite{radford2021learning}, GPT-3 \cite{brown2020language}, it becomes relevant to perform \textit{out-of-sample} estimation, e.g., map the latent vectors to \textit{new} vectors which are not present in the train set to generate \textit{new} data. Our IT approach (for $w\rightarrow\infty$) is a theoretically justified way to learn approximate NN maps exclusively from samples. We think our approach might acquire applications here as well, especially since there already exist ways to apply OT in latent spaces of such models \cite{fan2023neural}.

\textbf{(3) Robustness and outlier detection.} Our IT aligns the input measure $\mathbb{P}$ only with a part of the target measure $\mathbb{Q}$. This property might be potentially used to make the learning robust, e.g., ignore outliers in the target dataset. Importantly, the choice of outliers and contamination level are tunable via $c(x,y)$ and $w$, but their selection may be not obvious. At the same time, the potential $f^{*}$ vanishes on the outliers, i.e., samples in $\text{Supp}(\mathbb{Q})$ to which the mass is not transported. 
\begin{proposition}[The potential vanishes at outliers]
    Under the assuptions of Theorem \ref{not-perfect-in-saddle}, the equality $f^{*}(y)=0$ holds for all $y\in\text{Supp}(\mathbb{Q})\setminus \text{Supp}(T^{*}\sharp\mathbb{P})$.
    \label{proposition-outliers}
\end{proposition}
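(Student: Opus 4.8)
The plan is to combine strong duality (Theorem~\ref{theorem-duality}) with complementary slackness between the optimal potential $f^{*}$ and the slack measure of the marginal inequality constraint, and then upgrade the resulting almost-everywhere statement to a pointwise one using the continuity of $f^{*}$ together with the definition of support.

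First I would introduce the non-negative slack measure $\nu\stackrel{def}{=}w\mathbb{Q}-T^{*}\sharp\mathbb{P}\in\mathcal{M}_{+}(\mathcal{Y})$, which is well-defined and non-negative precisely because $\pi^{*}=[\text{id}_{\mathcal{X}},T^{*}]\sharp\mathbb{P}$ is feasible, i.e.\ $T^{*}\sharp\mathbb{P}\leq w\mathbb{Q}$; its total mass is $w-1\geq 0$. Since $\pi^{*}$ is an IT plan, $\int_{\mathcal{X}}c\big(x,T^{*}(x)\big)d\mathbb{P}(x)=\text{Cost}_{w}(\mathbb{P},\mathbb{Q})$, and since $f^{*}$ maximizes the dual \eqref{not-perfect-ot-dual}, $\text{Cost}_{w}(\mathbb{P},\mathbb{Q})=\int_{\mathcal{X}}(f^{*})^{c}(x)d\mathbb{P}(x)+w\int_{\mathcal{Y}}f^{*}(y)d\mathbb{Q}(y)$. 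Plugging the pointwise bound $(f^{*})^{c}(x)\leq c\big(x,T^{*}(x)\big)-f^{*}\big(T^{*}(x)\big)$ into this equality and cancelling $\int_{\mathcal{X}}c\big(x,T^{*}(x)\big)d\mathbb{P}(x)$ from both sides leaves $0\leq w\int_{\mathcal{Y}}f^{*}d\mathbb{Q}-\int_{\mathcal{Y}}f^{*}d(T^{*}\sharp\mathbb{P})=\int_{\mathcal{Y}}f^{*}d\nu$. On the other hand, $f^{*}\leq 0$ and $\nu\geq 0$ force $\int_{\mathcal{Y}}f^{*}d\nu\leq 0$. Hence $\int_{\mathcal{Y}}f^{*}d\nu=0$, and because the integrand $f^{*}$ is non-positive, this means $f^{*}=0$ holds $\nu$-almost everywhere.

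Next I would turn this into the desired pointwise claim. Fix $y_{0}\in\text{Supp}(\mathbb{Q})\setminus\text{Supp}(T^{*}\sharp\mathbb{P})$. Since $\text{Supp}(T^{*}\sharp\mathbb{P})$ is closed, there is an open neighbourhood $U\ni y_{0}$ with $(T^{*}\sharp\mathbb{P})(U)=0$, so $\nu$ coincides with $w\mathbb{Q}$ on subsets of $U$; combined with $y_{0}\in\text{Supp}(\mathbb{Q})$, this gives $\nu(V)=w\mathbb{Q}(V)>0$ for every open $V$ with $y_{0}\in V\subset U$, i.e.\ $y_{0}\in\text{Supp}(\nu)$. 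Suppose for contradiction $f^{*}(y_{0})<0$ (it cannot be positive since $f^{*}\leq 0$). By continuity of $f^{*}$ (guaranteed by the dual formulation, where the $\max$ is over $f\in\mathcal{C}(\mathcal{Y})$), there is an open $W$ with $y_{0}\in W\subset U$ on which $f^{*}<\tfrac{1}{2}f^{*}(y_{0})<0$, hence $\int_{W}f^{*}d\nu\leq\tfrac{1}{2}f^{*}(y_{0})\,\nu(W)<0$; but $\int_{W}f^{*}d\nu$ and $\int_{\mathcal{Y}\setminus W}f^{*}d\nu$ are both $\leq 0$ and sum to $\int_{\mathcal{Y}}f^{*}d\nu=0$, a contradiction. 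Therefore $f^{*}(y_{0})=0$.

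I expect the only delicate point to be the transition from the $\nu$-almost-everywhere identity to the pointwise one: it relies on verifying that $y_{0}$ actually lies in $\text{Supp}(\nu)$ — which is exactly where the hypothesis $y_{0}\notin\text{Supp}(T^{*}\sharp\mathbb{P})$ is genuinely used — and on the continuity of $f^{*}$. The duality and complementary-slackness computation itself is routine given Theorems~\ref{theorem-duality} and~\ref{not-perfect-in-saddle}.
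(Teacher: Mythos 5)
Your proof is correct and follows essentially the same route as the paper: establish $\int_{\mathcal{Y}} f^{*}\,d(w\mathbb{Q}-T^{*}\sharp\mathbb{P})=0$ via complementary slackness from the optimal primal-dual pair, then use continuity of $f^{*}$ together with the support inclusion $\text{Supp}(\mathbb{Q})\setminus\text{Supp}(T^{*}\sharp\mathbb{P})\subset\text{Supp}(w\mathbb{Q}-T^{*}\sharp\mathbb{P})$ to upgrade the $\nu$-a.e.\ vanishing to a pointwise statement. The only cosmetic difference is that you re-derive the complementary-slackness identity from the dual equality and the $c$-transform bound, whereas the paper simply reads it off the chain of equalities in the proof of Theorem \ref{not-perfect-in-saddle}.
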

We empirically illustrate this statement in Appendix \ref{sec-limitations}. As a result of this proposition, a possible byproduct of our method is an outlier-score $f^{*}(y)$ for the target data. Such applications of OT are promising and there already exist works \cite{mukherjee2021outlier,balaji2020robust,nietert2022outlier} developing approaches to make OT more robust.

\textbf{Limitations, societal impact}. We discuss \underline{\textit{limitations}, \textit{societal impact}} of our study in Appendix \ref{sec-limitations}.

\textsc{ACKNOWLEDGEMENTS}. This work was partially supported by Skoltech NGP Program (Skoltech-MIT joint project).

\bibliography{references}
\bibliographystyle{abbrvnat}

\appendix

\onecolumn

\section{Limitations}
\label{sec-limitations} 

\textbf{Transport costs}. Our theoretical results hold true for any continuous cost function $c(x,y)$, but our experimental study uses $\ell^{2}$ as it already yields a reasonable performance in many cases. Considering more semantically meaningful costs for image translation, e.g., perceptual \cite{zhang2018perceptual}, is a promising future research direction.

\textbf{Intersecting supports}. ET is the nearest neighbor assignment (\wasyparagraph\ref{sec-perfect-ot}). Using ET may be unreasonable when $\mathcal{X}=\mathcal{Y}$ and $\text{Supp}(\mathbb{P})$ intersects with $\text{Supp}(\mathbb{Q})$. For example, if $c(x,y)$ attains minimum over $y\in\mathcal{Y}$ for a given $x\in\mathcal{X}$  at $x=y$, e.g., $c=\ell^2$, then there exists a ET plan satisfying $\pi^{*}(y|x)=\delta_{x}$ for all $x\in \text{Supp}(\mathbb{P})\cap \text{Supp}(\mathbb{Q})$. It does not move the mass of points $x$ in this intersection. We provide an illustrative  toy 2D example in Fig. \ref{fig:limitation-intersection}.

\textbf{Limited diversity.} It is theoretically impossible to preserve input-output similarity better than ET maps. Still one should understand that in some cases these maps may yield degenerate solutions. In Fig. \ref{fig:limitation-diversity}, we provide a toy 2D example of an IT map ($w=8$) which maps all inputs to nearly the same point. In Fig. \ref{fig:textures-chairs-it} (\textit{texture} $\rightarrow$ \textit{chair} translation), we see that with the increase of $w$ the IT map produces less small chairs but more large armchairs. In particular, when $w=8$, only armchairs appear, see Fig. \ref{fig:textures-chairs-it}. This is because they are closer (in $\ell^{2}$) to textures due to having smaller white background area.

\begin{figure}[!h]
\vspace{-3mm}
\begin{subfigure}[b]{0.487\linewidth}
   \begin{center}
    \includegraphics[width=\linewidth]{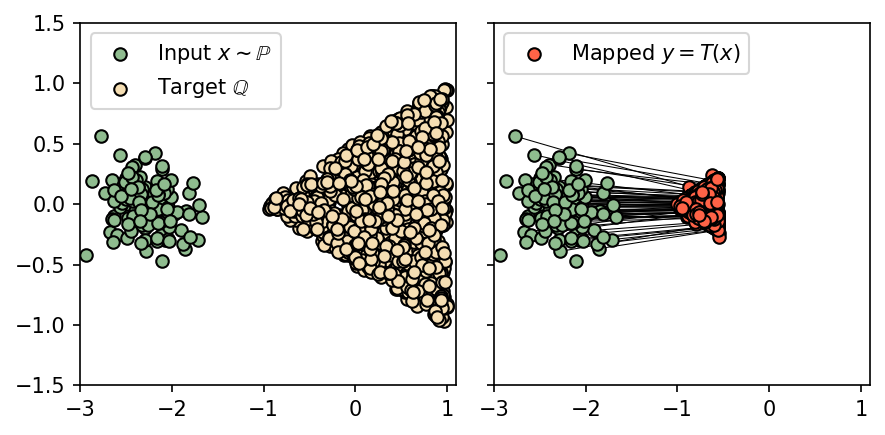}
  \end{center}
  \caption{\centering \textbf{Limited diversity}. The true ET map is degenerate: it maps the entire $\mathbb{P}$ to a single vertex of the triangle $\mathbb{Q}$. The example shows the learned IT map with high $w=20$ approximating the ET map.}
  \label{fig:limitation-diversity}
\end{subfigure}
\hfill
\begin{subfigure}[b]{0.487\linewidth}
 \begin{center}
    \includegraphics[width=\linewidth]{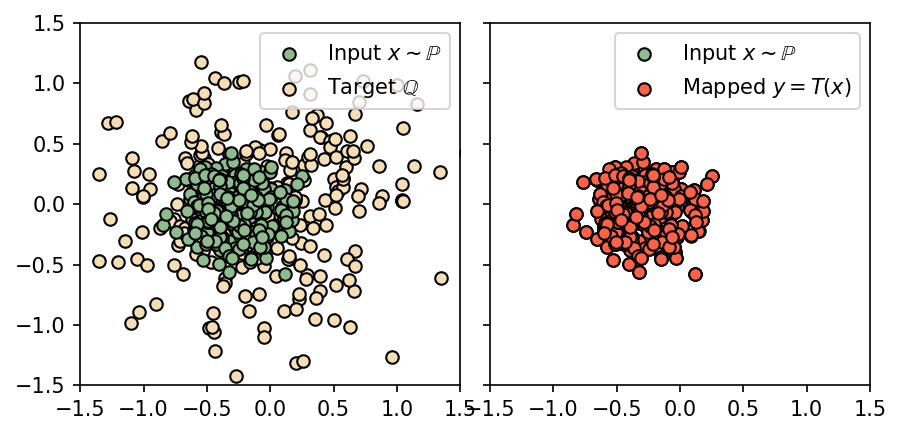}
  \end{center}
  \caption{\centering \textbf{Intersecting supports}. The true ET map is the identity map: it does not move the probability mass of $\mathbb{P}$. The example shows the learned IT map with high $w=8$ approximating the ET map.}
  \label{fig:limitation-intersection}
\end{subfigure}
\label{fig:toy-divergence}
\vspace{-2mm}
\caption{Toy 2D examples showing two (potential) limitations of IT maps.}
\vspace{-2mm}
\end{figure}

\textbf{Unused samples.} Doing experiments, we noticed that the model training slows down with the increase of $w$. A possible cause of this is that some samples from $\mathbb{Q}$ become non-informative for training (this follows from our Proposition \ref{proposition-outliers}). Intuitively, the part of $\text{Supp}(\mathbb{Q})$ to which the samples of $\mathbb{P}$ will \textit{not} be mapped to is not informative for training. We illustrate this effect on toy \textit{'Wi-Fi'} example and plot the histogram of values of $f^{*}$ in Fig. \ref{fig:limitation-unused}. One possible negative of this observation is that the training of IT maps or, more generally, partial OT maps,
may naturally require larger training datasets.

\begin{figure}[!h]
\begin{subfigure}{0.235\textwidth}
  \begin{center}
    \includegraphics[width=\linewidth]{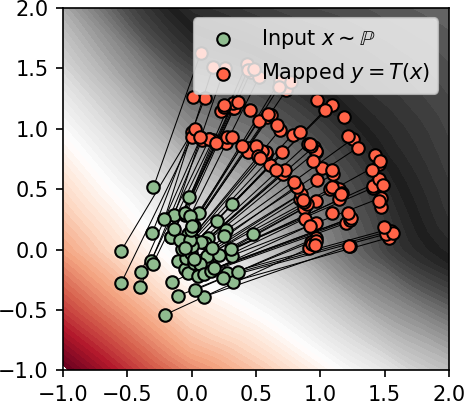}
  \end{center}
\vspace{1mm}
  \caption{\centering Surface of $f^{*}$\protect\linebreak (trained with $w=1$).}
  \label{fig:discr-w1}
\end{subfigure}
\begin{subfigure}{0.215\textwidth}
  \begin{center}
    \includegraphics[width=\linewidth]{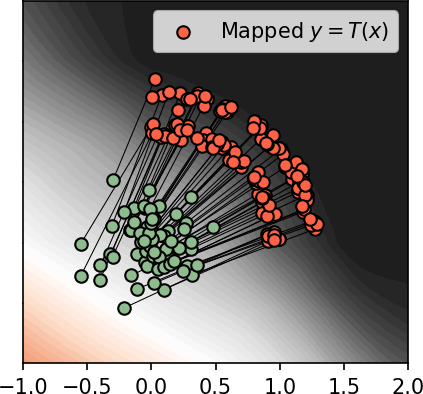}
  \end{center}
\vspace{1mm}
  \caption{\centering Surface of $f^{*}$\protect\linebreak (trained with $w=\frac{3}{2}$).}
  \label{fig:discr-w32}
\end{subfigure}
\begin{subfigure}{0.27\textwidth}
  \begin{center}
    \includegraphics[width=\linewidth]{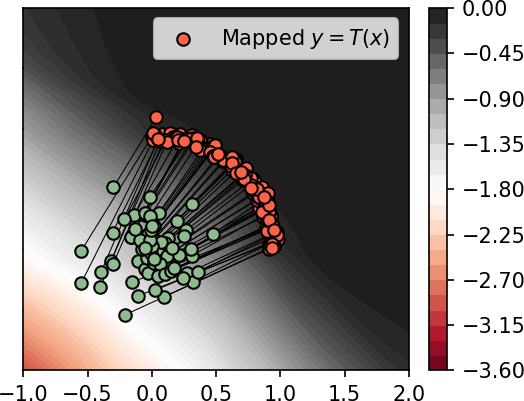}
  \end{center}
\vspace{1mm}
  \caption{\centering Surface of $f^{*}$\protect\linebreak (trained with $w=3$).}
  \label{fig:discr-w3}
\end{subfigure}
\hfill
\begin{subfigure}{0.23\textwidth}
  \begin{center}
    \includegraphics[width=\linewidth]{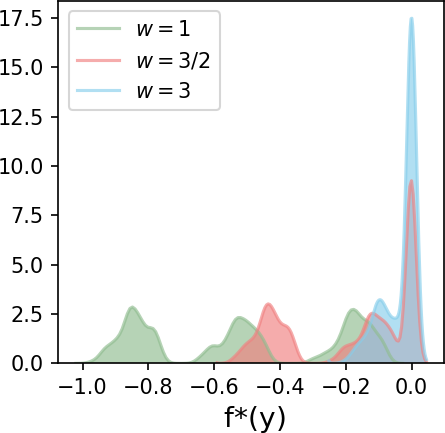}
  \end{center}
\vspace{-1.85mm}
  \caption{\centering Histograms of values\protect\linebreak of $f^{*}(y)$ for $y\sim\mathbb{Q}$.}
  \label{fig:discr-hist}
\end{subfigure}
  \caption{\centering Illustration the \textbf{unused samples}. In Figures \ref{fig:discr-w1}, \ref{fig:discr-w32}, \ref{fig:discr-w3}, we visualize the surface of the learned potential $f^{*}$ on the \textit{'Wi-Fi'} example and $w\in \{1,\frac{3}{2}, 3\}$. In Figures \ref{fig:discr-w32}, \ref{fig:discr-w3}, the potential vanishes on the arcs of $\mathbb{Q}$ to which the mass of $\mathbb{P}$ is not mapped, i.e., $f^{*}(y)=0$. In Figure \ref{fig:discr-hist}, we plot the distribution of values of $f^{*}(y)$ for $y\sim\mathbb{Q}$. For $w\in\{\frac{3}{2},3\}$, we see large pikes around $f^{*}(y)=0$ further demonstrating that the potential equals to zero on a certain part of $\mathbb{Q}$.}
  \label{fig:limitation-unused}
\end{figure}

\textbf{Limited quantitative metrics.} In experiments (\wasyparagraph\ref{sec-experiments}), we use a limited amount of quantitative metrics. This is because existing unpaired metrics, e.g., FID \cite{heusel2017gans}, KID \cite{binkowski2018demystifying}, etc., are \textbf{not suitable for our setup}. They aim to test equalities, such as $T\sharp\mathbb{P}=\mathbb{Q}$, while our learned maps disobey it \textit{by the construction} (they capture only a part of $\mathbb{Q}$). Developing quality metrics for \textit{partial} generative models is an important future research direction. Meanwhile, for our method, we have provided a toy 2D analysis (\wasyparagraph\ref{sec-toy-exp}) and explanatory metrics  (\wasyparagraph\ref{sec-exp-translation}), such as the transport cost (Table \ref{table:cost}).

\textbf{Inexistent IT maps.} The actual IT plans between $\mathbb{P},\mathbb{Q}$ may be non-deterministic, while our approach only learns a deterministic map $T$. Nevertheless, thanks to our Proposition \ref{equivalence-monge-kantorovich}, for every $\epsilon>0$ there always exists a  1-to-1 map $T_\epsilon\sharp\mathbb{P}\leq w\mathbb{Q}$ which provides $\epsilon$-sub-optimal cost $\int_{\mathcal{X}}c\big(x,T_{\epsilon}(x)\big)d\mathbb{P}(x)\leq \text{Cost}_{w}(\mathbb{P},\mathbb{Q})+\epsilon$. Thus,
 IT cost \eqref{not-perfect-ot-kantorovich} can be approached arbitrary well with deterministic transport maps.
A potential way to modify our algorithm is to learn stochastic plans is to add random noise $z$ to generator $T(x,z)$ as input, although this approach may suffer from ignoring $z$, see \citep[\wasyparagraph 5.1]{korotin2023neural}.

\textbf{Fake solutions and instabilities}. Lagrangian objectives such as \eqref{main-objective} may potentially have optimal saddle points $(f^{*},T^{*})$ in which $T^{*}$ is not an OT map. Such $T^{*}$ are called \textit{fake solutions} \cite{korotin2023kernel} and may be one of the causes of training instabilities. Fake solutions can be removed by considering OT with strictly convex \textit{weak} costs  functions \citep[Appendix H]{korotin2023neural}, see Appendix \ref{sec-kernel-cost} for examples.

\textbf{Potential societal impact}. Neural OT methods and, more generally, generative models are a developing research direction. They find applications such as style translation and realistic content generation. We expect that our method may improve existing applications of generative models and add new directions of neural OT usage like outlier detection. However, it should be taken into account that generative models can also be used for negative purposes such as creating fake faces.

\vspace{-1.5mm}\section{Toy 2D Illustrations of Other Methods}
\label{sec-toy-2d-comparison}

\vspace{-1.5mm}In this section, we demonstrate how the other methods perform in \textit{'Wi-Fi'} and \textit{'Accept'} experiments. We start with \textit{'Wi-Fi'}. Assume that we would like to map $\mathbb{P}$ to the closest $\frac{2}{3}$-rd fraction of $\mathbb{Q}$, i.e., we aim to learn 2 of 3 arcs in $\mathbb{Q}$, (as in Fig. \ref{fig:wifi-w32}).

In Fig. \ref{fig:wifi-dpot}, we show the discrete partial OT \eqref{ot-partial} \citep{chapel2020partial} with parameters $w_0\!=\!m\!=\!1$, $w_1\!=\!\frac{3}{2}$, corresponding to IT \eqref{not-perfect-ot-kantorovich} with $w=\frac{3}{2}$. To obtain the discrete matching, we run \texttt{ot.partial.partial\_wasserstein2} from POT\footnote{\url{pythonot.github.io}}. As expected, it matches the input $\mathbb{P}$ with $\frac{2}{3}$ of $\mathbb{Q}$ and can be viewed as the ground truth (coinciding with our Fig. \ref{fig:wifi-w32}).

First, we show the GAN \cite{goodfellow2014generative} endowed with additional $\ell^{2}$ loss with weight $\lambda=0.5$ (in Fig. \ref{fig:wifi-vanilla}). Next, we consider discrete unbalanced OT \cite{chizat2017unbalanced} with the quadratic cost $c=\ell^{2}$. In Fig. \ref{fig:wifi-duot}, we show the results of the matching obtained by \texttt{ot.unbalanced} with parameters $m=1, reg=0.1$, $reg_m=1$, $numItermax=200000$. Additionally, in Fig. \ref{fig:wifi-suot} we show the result of neural unbalanced OT method \cite{yang2018scalable}.\footnote{\url{github.com/uhlerlab/unbalanced_ot}}
To make their unbalanced setup maximally similar to our IT, we set to zero their regularization parameters. The rest parameters are default except for $\lambda_0=0.02$ ($\ell^2$ loss parameter), $\lambda_2=5$ (input and target measures' variation parameter).

We see that GAN$+\ell^{2}$ (Fig. \ref{fig:wifi-vanilla}) and unbalanced OT (Fig. \ref{fig:wifi-duot} and \ref{fig:wifi-suot}) indeed match $\mathbb{P}$ with only a \textit{part} of the target measure $\mathbb{Q}$. The transported mass is mostly concentrated in the two small arcs of $\mathbb{Q}$ which are closest to $\mathbb{P}$ w.r.t. $\ell^{2}$ cost. The issue here is that some mass of $\mathbb{P}$ spreads over the third (biggest) arc of $\mathbb{Q}$ yielding \textbf{outliers}. This happens because unbalanced OT (GAN can be viewed as its particular case) is an \textit{unconstrained} problem: the mass spreading is controlled via soft penalization ($f$-divergence loss term). The lack of hard constraints, such as those in partial OT \eqref{ot-partial} or IT \eqref{not-perfect-ot-kantorovich}, makes it challenging to strictly control how the mass in unbalanced OT actually spreads.

 \begin{figure*}[!h]
 \vspace{-2mm}
\begin{subfigure}[t]{0.26\textwidth}
  \begin{center}
    \includegraphics[width=\linewidth]{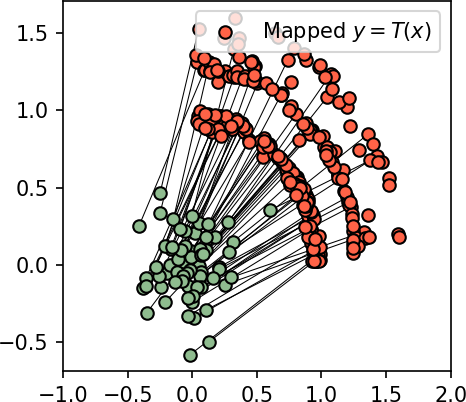}
  \end{center}
  \vspace{-1mm}
  \caption{GAN$+\ell^{2}$.}
  \label{fig:wifi-vanilla}
\end{subfigure}
\hfill
\begin{subfigure}[t]{0.24\textwidth}
  \begin{center}
    \includegraphics[width=\linewidth]{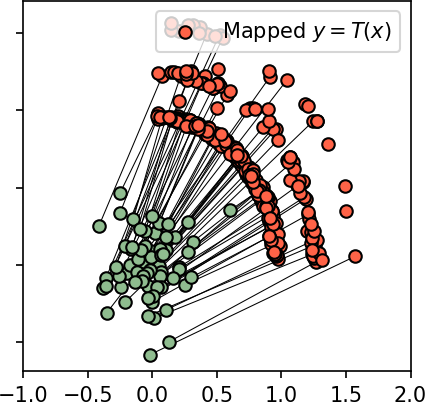}
  \end{center}
\vspace{-1.5mm}
  \caption{\centering Discrete unbalanced OT.}
  \label{fig:wifi-duot}
\end{subfigure}
\hfill
\begin{subfigure}[t]{0.24\textwidth}
  \begin{center}
    \includegraphics[width=\linewidth]{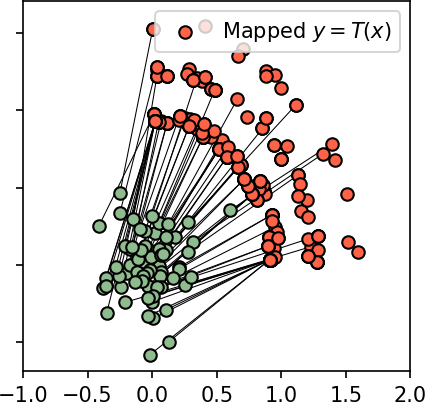}
  \end{center}
\vspace{-1.5mm}
  \caption{\centering Scalable unbalanced OT.}
  \label{fig:wifi-suot}
\end{subfigure}
\hfill
\begin{subfigure}[t]{0.24\textwidth}
  \begin{center}
    \includegraphics[width=\linewidth]{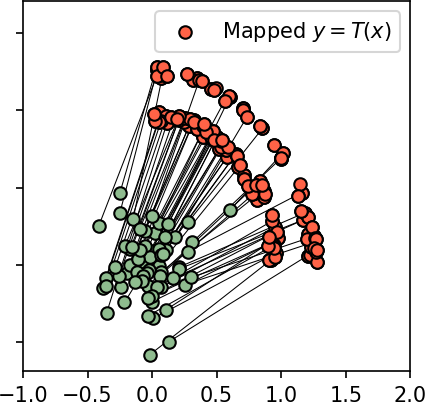}
  \end{center}
  
\vspace{-1.5mm}
  \caption{\centering Discrete Partial OT\protect\linebreak (desired ground truth).}
  \label{fig:wifi-dpot}
\end{subfigure}
\label{fig:wifi-competitors}

\vspace{-2.5mm}\caption{\centering Transport maps learned by various methods in \textit{'Wi-Fi'} experiment (Fig. \ref{fig:wifi-io}).}
\vspace{-2mm}
\end{figure*}

For completeness, we also show the results of these methods applied to \textit{'Accept'} experiment, see Fig. \ref{fig:accept-competitors}. Here we tested various hyperparameters for these methods but did not achieve the desired behaviour, i.e., learning only the text \textit{'Accept'}. Moreover, we noted that GAN$+\ell^{2}$ for large $\lambda$ yields undesirable artifacts (Fig. \ref{fig:accept-vanilla}). This is because GAN and $\ell^{2}$ losses contradict to each other and still the models tried to minimize them both. We further discuss in Appendix \ref{sec-gans} below.

 \begin{figure*}[!h]
\begin{subfigure}[t]{0.267\textwidth}
  \begin{center}
    \includegraphics[width=\linewidth]{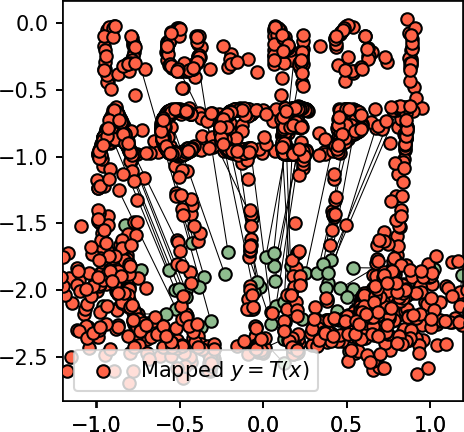}
  \end{center}
\vspace{-1.5mm}
  \caption{GAN$+\ell^{2}$.}
\vspace{1mm}
  \label{fig:accept-vanilla}
\end{subfigure}
\hfill
\begin{subfigure}[t]{0.235\textwidth}
  \begin{center}
    \includegraphics[width=\linewidth]{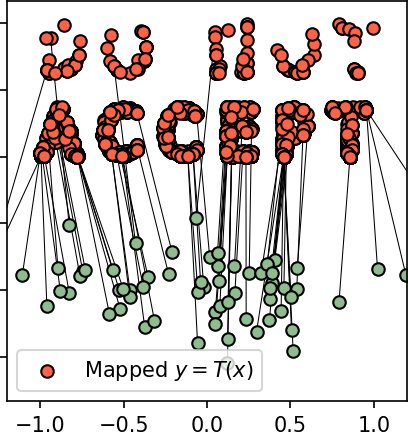}
  \end{center}
  \vspace{-1.5mm}
  \caption{\centering Discrete unbalanced OT.}
  \label{fig:accept-duot}
\end{subfigure}
\hfill
\begin{subfigure}[t]{0.235\textwidth}
  \begin{center}
    \includegraphics[width=\linewidth]{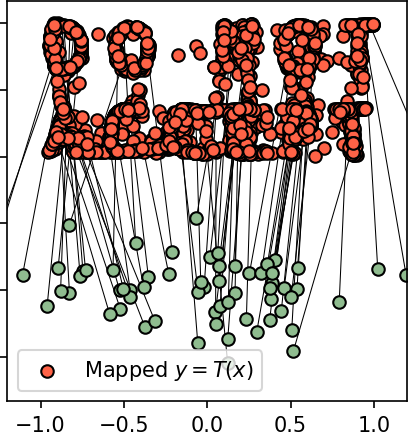}
  \end{center}
  \vspace{-1.5mm}
  \caption{\centering Scalable unbalanced OT.}
  \label{fig:accept-suot}
\end{subfigure}
\hfill
\begin{subfigure}[t]{0.235\textwidth}
  \begin{center}
    \includegraphics[width=\linewidth]{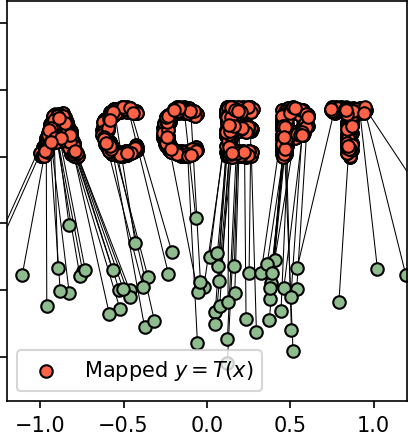}
  \end{center}
  
\vspace{-1.5mm}
  \caption{\centering Discrete Partial OT\protect\linebreak (desired ground truth).}
  \label{fig:accept-dpot}
\end{subfigure}

\vspace{-1.5mm}
\caption{\centering Transport maps learned by various methods in \textit{'Accept'} experiment.}
\label{fig:accept-competitors}
\vspace{-4mm}
\end{figure*}

\section{Comparison with Other Image-to-Image Translation Methods}
\label{sec-gans}

Recall that ET by design is the best translation between a pair of domains w.r.t. the \textbf{given} dissimilarity function $c(x,y)$. Our IT maps with the increase of $w$ provide better input-output similarity and recover ET when $w\rightarrow\infty$ (\wasyparagraph\ref{sec-way-to-perfection}). A reader may naturally ask: \textbf{(a)} How else can we recover ET maps? \textbf{(b)} To which extent one can control the input-output similarity in existing translation methods? \textbf{(c)} Can these methods be used to approximate ET? We discuss these aspects below.

Many translation methods are based on GANs, see \cite{pang2021image,alotaibi2020deep,chen2021overview} for a survey. Their learning objectives are usually combined of several loss terms:
\begin{equation}
\mathcal{L}_{\text{Total}}(T)\stackrel{def}{=}\mathcal{L}_{Dom}(T)+\lambda\cdot\mathcal{L}_{Sim}(T)+[\text{other terms}].
\label{gan-objective}
\end{equation}
In \eqref{gan-objective}, the \textbf{domain loss} $\mathcal{L}_{Dom}$ is usually the vanilla GAN loss involving a discriminator \cite{goodfellow2014generative} ensuring that the learned map $x\mapsto T(x)$ transforms inputs $x\sim\mathbb{P}$ to the samples from the target data distribution $\mathbb{Q}$. The \textbf{similarity loss} $\mathcal{L}_{Sim}$  (with $\lambda\geq 0$) is usually the \textit{identity} loss $\int_{\mathcal{X}}\|x-T(x)\|_{1}d\mathbb{P}(x)$. More generally, it can be an arbitrary \textit{unsupervised} loss of the form $\int_{\mathcal{X}}c\big(x,T(x)\big)d\mathbb{P}(x)$ stimulating the output sample $T(x)$ to look like the input samples $x$ w.r.t. given dissimilarity function $c$, e.g., $\ell^{1},\ell^{2},\ell^{p}$, perceptual, etc. The other terms in \eqref{gan-objective} involve model-specific terms (e.g., cycle consistency loss in CycleGAN) which are not related to our study.

When learning a model via optimizing \eqref{gan-objective}, a natural way to get better similarity of $x$ and $T(x)$ in \eqref{gan-objective} is to simply increase weight $\lambda$ of the corresponding loss term. This is a straightforward approach but it has a visible \textbf{limitation}. When $\lambda$ is high, the term $\lambda\cdot \mathcal{L}_{\text{Sim}}$ dominates over the other terms such as $\mathcal{L}_{\text{Dom}}$, and the model $T$ simply learns to minimize this loss ignoring the fact that the output sample should be from the target data distribution $\mathbb{Q}$. In other words, in \eqref{gan-objective} there is a nasty \textbf{trade-off} between $T(x)$ belonging to the target data distribution $\mathbb{Q}$ and input-output similarity of $x$ and $T(x)$. The parameter $\lambda\geq 0$ controls this \textit{realism-similarity} trade-off, and we study how it affects the learned map $T$ below.

We pick \textbf{CycleGAN} \cite{zhu2017unpaired} as a base model for evaluation since it is known as one of the principal models to solve the unpaired translation problem. We use $c=\ell^{1}$ as the similarity loss as the CycleGAN's authors \textbf{originally used in their paper}.\footnote{We also conducted a separate experiment to train CycleGAN with  $\ell^{2}$ identity loss. However, in this case CycleGAN's training turned to be less stable and, importantly, yielded (mostly) worse FID. Surprisingly, we observed higher test transport costs (both $\ell^{2}$ and $\ell^{1}$). Therefore, not to overload the exposition, we decided to keep only the experiment with CycleGAN trained with $\ell^{1}$ identity loss.} We consider parameter $\lambda\in[0,50,100,200,250,300,350,500]$.

Additionally, we perform comparison with a more recent StarGAN-v2 \cite{choi2020stargan} model. By default, StarGAN-v2 does not use any similarity loss and does not enforce the output to be similar to the input. Therefore, analogously to CycleGAN, we endow the model with an additional $\ell^1$ similarity loss and consider $\lambda\in[0,1,10,50,100,200,500]$.

\vspace{-1mm}
We train both models on \textit{celeba} $\rightarrow$ \textit{anime} ($64\times 64$) and \textit{handbags} $\rightarrow$ \textit{shoes} ($128\times 128$) translation with various $\lambda$ and report the qualitative and quantitative results below. In all the cases, we report both $\ell^{2}$ and $\ell^{1}$ transport costs and FID on the {\textbf{test} samples. Tables \ref{table:cost-cyclegan-l1}, \ref{table:fid-cyclegan-l1},  \ref{table:cost-starganv2-l2}, \ref{table:fid-starganv2-l2} show transport costs and FID of GANs. FID and $\ell^{2}$ metrics for our method are given in the main text (Tables \ref{table:fid}, \ref{table:cost}) and $\ell^{1}$ cost is given in Table \ref{table:l1_cost} below. For convenience, we visualize $(\text{FID},\text{Cost})$ pairs for our method and GANs in Fig. \ref{fig:costs}.

\vspace{-1mm}
\textbf{Results and discussion (CycleGAN).} 
Interestingly, we see that for CycleGAN adding small identity loss $\lambda=50$ yields not only decrease of the transport cost (compared to $\lambda=0$), but some improvement of FID as well. Still we see that the transport cost in CycleGAN naturally decreases 
with the increase of weight $\lambda$. 
Unfortunately, this decrease is accompanied by the \textbf{decrease} of the visual image quality, see Fig. \ref{fig:cyclegan-images}. While for large $\lambda$ the cost for CycleGAN is really small, the model is \textbf{practically useless} since it poors image quality. For very large $\lambda$, CycleGAN simply learn the identity map, as expected.

\vspace{-1mm}
For $\lambda$ \textit{providing acceptable visual quality}, CycleGAN yields a transport cost which is bigger than that of \textbf{standard} OT $(w=1)$. Our result for $w=8$ is unachievable for it. Note that in most cases FID of our IT method is smaller than that of CycleGAN.

\vspace{-1mm}
\textbf{Results and discussion (StarGAN-v2).}
In the \textit{celeba}$\rightarrow$\textit{anime} experiment, StarGAN-v2 results are similar to CycleGAN ones. Our IT with $w=2$ easily provides smaller transport cost (better similarity) than StarGAN-v2. In the \textit{handbags} $\rightarrow$ \textit{shoes} translation, we have encountered surprising observations. We see that starting from $\lambda=10$ the model \textbf{fails} to translate some of the handbags to shoes, i.e., these handbags remain nearly unchanged. We notice the similar behaviour for vanilla GAN in \textit{'Accept'} experiment, see Fig. \ref{fig:accept-vanilla}. This explains the low cost for StarGAN-v2 model ($\lambda \geq 10$). Surprisingly to us, for $\lambda=10$, FID metric is also low despite the fact that model frequently produces \textbf{failures}, see the highlighted results in Fig. \ref{fig:starganv2-images}. Note that while FID is a widely used metric, it still could produce misleading estimations which we observe in the latter case. 

\vspace{-1mm}
Additionally, we provide a large set of randomly generated images for $\lambda=10$ to qualitatively show that the stated issue is indeed notable, see Fig. \ref{fig:starganv2-add-images}. These failures demonstrate the \textit{limited practical usage} of the model. Our IT method does not suffer from this issue which we qualitatively demonstrate on the same set of images for different weights $w$, see Fig. \ref{fig:handbag-shoes-it-add}. 

\begin{table}[!h]
\centering
\small\addtolength{\tabcolsep}{-1mm}
\begin{tabular}{c|c|c|c|c|c|c|c|c|c}\hline
\textbf{Experiment} & \textbf{Cost} & $\lambda=0$ & $\lambda=50$ & $\lambda=100$ & $\lambda=200$ & $\lambda=250$ & $\lambda=300$ & $\lambda=350$ & $\lambda=500$\\ \hline 
\textit{celeba} $\rightarrow$ \textit{anime} & \multirow{2}{*}{$\ell^{1}$} & 0.48 & 0.48 & 0.39 & 0.26 & 0.09 & 0.09 & 0.09 & 0.09 \\ \cline{1-1}\cline{3-10} 
\textit{handbag} $\rightarrow$ \textit{shoes} & & 0.42 & 0.36 & 0.34 & 0.31 & 0.32 & 0.22 & 0.16 & 0.07 \\ \cline{1-2}\cline{3-10} 
\textit{celeba} $\rightarrow$ \textit{anime} & \multirow{2}{*}{$\ell^{2}$} & 0.33 & 0.32 & 0.24 & 0.11 & 0.01 & 0.02 & 0.02 & 0.01 \\ \cline{1-1}\cline{3-10} 
\textit{handbag} $\rightarrow$ \textit{shoes} & & 0.51 & 0.43 & 0.41 & 0.35 & 0.37 & 0.22 & 0.14 & 0.02 \\
\cline{1-2}\cline{3-10} 
\end{tabular}
\vspace{1mm}
\caption{\centering Test $\ell^{1}$ and $\ell^{2}$ transport cost of CycleGAN.}
\label{table:cost-cyclegan-l1}
\vspace{-5mm}
\end{table}

\vspace{-2mm}
\begin{table}[!h]
\centering
\small\addtolength{\tabcolsep}{-1mm}
\begin{tabular}{c|c|c|c|c|c|c|c|c}\hline
\textbf{Experiment} & $\lambda=0$ & $\lambda=50$ & $\lambda=100$ & $\lambda=200$ & $\lambda=250$ & $\lambda=300$ & $\lambda=350$ & $\lambda=500$\\ \hline 
\textit{celeba} $\rightarrow$ \textit{anime} & 22.9 & 20.8 & 35.2 & 88.8 & 122.2 & 123.5 & 120.0 & 122.8 \\ \hline
\textit{handbag} $\rightarrow$ \textit{shoes} & 27.8 & 23.4 & 23.6 & 37.4 & 38.5 & 105.6 & 144.9 & 152.9 \\
\hline
\end{tabular}
\vspace{1mm}
\caption{\centering Test FID of CycleGAN.}
\label{table:fid-cyclegan-l1}
\vspace{-5mm}
\end{table}

\vspace{-2mm}
\begin{table}[!h]
\centering
\small\addtolength{\tabcolsep}{-1mm}
\begin{tabular}{c|c|c|c|c|c|c|c|c}\hline
\textbf{Experiment} & \textbf{Cost} & $\lambda=0$ & $\lambda=1$ & $\lambda=10$ & $\lambda=50$ & $\lambda=100$ & $\lambda=200$ & $\lambda=500$\\ \hline 
\textit{celeba} $\rightarrow$ \textit{anime} & \multirow{2}{*}{$\ell^{1}$} & 0.672 & 0.355 & 0.210 & 0.076 & 0.050 & 0.030 & 0.029  \\ \cline{1-1}\cline{3-9} 
\textit{handbag} $\rightarrow$ \textit{shoes} & & 0.562 & 0.465 & 0.244 & 0.087 & 0.068 & 0.054 & 0.048  \\
\cline{1-2}\cline{3-9} 
\textit{celeba} $\rightarrow$ \textit{anime} & \multirow{2}{*}{$\ell^{2}$} & 0.686	& 0.216	& 0.094	& 0.017	& 0.006	& 0.002	& 0.002 \\ \cline{1-1}\cline{3-9} 
\textit{handbag} $\rightarrow$ \textit{shoes} & & 0.739 & 0.584 & 0.244 & 0.040 & 0.023 & 0.015 & 0.012  \\
\cline{1-2}\cline{3-9}
\end{tabular}
\vspace{1mm}
\caption{\centering Test $\ell^1$ and $\ell^{2}$ transport costs of StarGAN-v2.}
\label{table:cost-starganv2-l2}
\vspace{-5mm}
\end{table}

\vspace{-2mm}
\begin{table}[!h]
\centering
\small\addtolength{\tabcolsep}{-1mm}
\begin{tabular}{c|c|c|c|c|c|c|c}\hline
\textbf{Experiment} & $\lambda=0$ & $\lambda=1$ & $\lambda=10$ & $\lambda=50$ & $\lambda=100$ & $\lambda=200$ & $\lambda=500$ \\ \hline 
\textit{celeba} $\rightarrow$ \textit{anime} & 19.55 & 22.40	& 42.30	& 99.68	& 123.76 & 137.8	& 139.11 \\ \hline
\textit{handbag} $\rightarrow$ \textit{shoes} & 25.45 & 45.13 & 22.36 & 131.8 & 149.8 & 155.8 & 158.8 \\
\hline
\end{tabular}
\vspace{1mm}
\caption{\centering Test FID of StarGAN-v2.}
\label{table:fid-starganv2-l2}
\vspace{-5mm}
\end{table}

\begin{figure}[!ht]
\begin{subfigure}[b]{0.487\linewidth}
   \begin{center}
    \includegraphics[width=\linewidth]{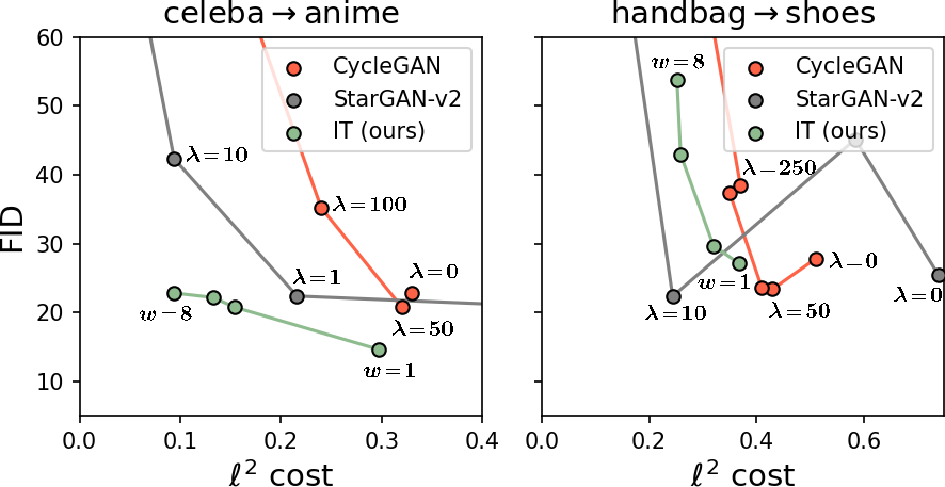}
  \end{center}
  \caption{\centering Test $\ell^{2}$ transport cost.}
  \label{fig:l2-cost}
\end{subfigure}
\hfill\vrule\hfill
\begin{subfigure}[b]{0.45\linewidth}
 \begin{center}
    \includegraphics[width=\linewidth]{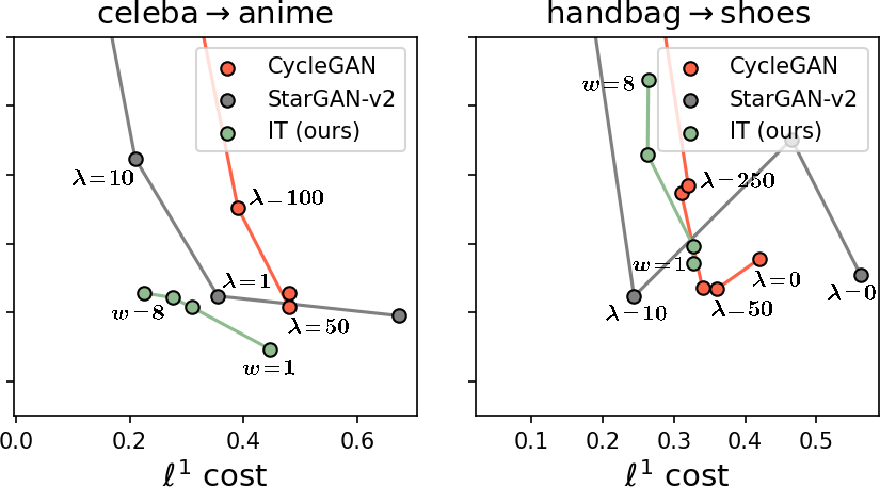}
  \end{center}
  \caption{\centering Test $\ell^{1}$ transport cost.}
  \label{fig:l1-cost}
\end{subfigure}
\vspace{-2mm}
\caption{Comparison of test FID and transport costs ($\ell^{2}$ and $\ell^{1}$) of our IT method and CycleGAN.}
\label{fig:costs}
\vspace{-2mm}
\end{figure}

\begin{table}[!t]
\centering
\small
\begin{tabular}{c|c|c|c|c}\hline
\textbf{Experiment} & $w=1$ & $w=2$ & $w=4$ & $w=8$ \\ \hline 
\textit{celeba} $\rightarrow$ \textit{anime} & 0.447 & 0.309 & 0.275 & 0.225 \\ \hline
\textit{handbag} $\rightarrow$ \textit{shoes} & 0.327 & 0.328 & 0.263 & 0.264 \\ \hline
\end{tabular}
\caption{\centering Test $\ell^{1}$ transport cost of our IT maps (learned with $\ell^2$ transport cost).}
\vspace{-5mm}
\label{table:l1_cost}
\end{table}

\textbf{Concluding remarks.} Our algorithm with $w\rightarrow\infty$ allows us to achieve better similarity without the decrease of the image quality. At the same time, GANs fail to do this when $\lambda\rightarrow\infty$. Why?

We again emphasize that typical GAN objective \eqref{gan-objective} consists of several loss terms. Each term stimulates the model to attain certain properties (realism, similarity to the input, etc.). These terms, in general, \textit{contradict} each other as they have different minima $T$. This yields the nasty \textbf{trade-off} between the loss terms. Our analysis shows that conceptually there is no significant difference between CycleGAN and a more recent StarGAN-v2 model. More generally, any GAN-based method inherits \textit{realism-similarity} tradeoff issue. GANs' differences are mostly related to the use of other architectures or additional losses. Therefore, we think that additional comparisons are excessive since they may not provide any new insights.

In contrast, \textbf{our method is not a sum of losses}. Our objective \eqref{main-objective} may look like a direct sum of a transport cost with an adversarial loss; our method does have a generator (transport maps $T$) and a discriminator (potential $f$) which are trained via the saddle-point optimization objective $\max_{f}\min_{T}$. Yet, this visible \textbf{similarity to GAN-based methods is deceptive}. Our objective $\max_{f}\min_{T}$ can be viewed as a Lagrangian and is \textbf{atypical} for GANs: the generator is in the inner optimization problem $\min_{T}$ while in GANs the objective is $\min_{T}\max_{f}$. In our case, similar to other neural dual OT methods, the generator is adversarial to the discriminator but not vice versa, as in GANs. Please consult \citep[\wasyparagraph 4.3]{korotin2023neural}, \citep{gazdieva2022unpaired} or \citep{fan2023neural} for further discussion about OT methods.

GANs aim to balance loss terms $\mathcal{L}_{\text{Dom}}$ and $\mathcal{L}_{\text{Sim}}$. Our optimization objective enforces the constraint $T\sharp\mathbb{P}\leq w\mathbb{Q}$ via learning the potential $f$ (a.k.a. Lagrange multiplier) and among admissible maps $T$ searches for the one providing the smallest transport cost. There is no realism-similarity trade-off. For completeness, we emphasize that when $w\rightarrow \infty$, FID in Table \ref{table:fid} does not drop because of the decrease of the image quality, but because our method covers the less part of $\mathbb{Q}$. FID negatively reacts to this \citep[Fig. 1b]{lucic2018gans}.

\begin{figure}[!h]
\begin{subfigure}[b]{0.49\linewidth}
   \begin{center}
    \includegraphics[width=0.98\linewidth]{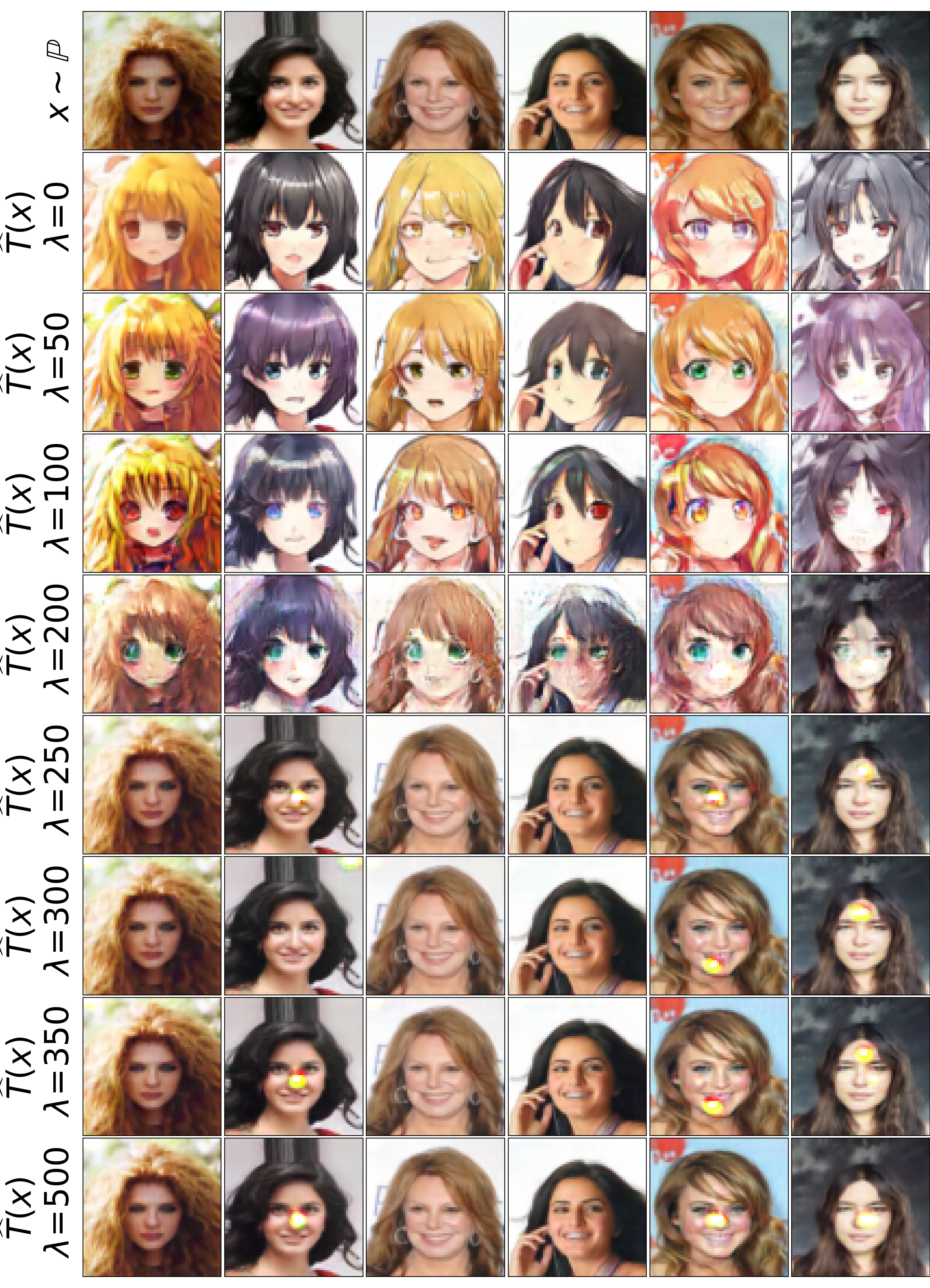}
  \end{center}
  \caption{\centering \textit{Celeba} (female) $\rightarrow$ \textit{anime} (64$\times$64).}
  \label{fig:celeba2animeL1}
\end{subfigure}
\begin{subfigure}[b]{0.5\linewidth}
  \begin{center}
    \includegraphics[width=1.05\linewidth]{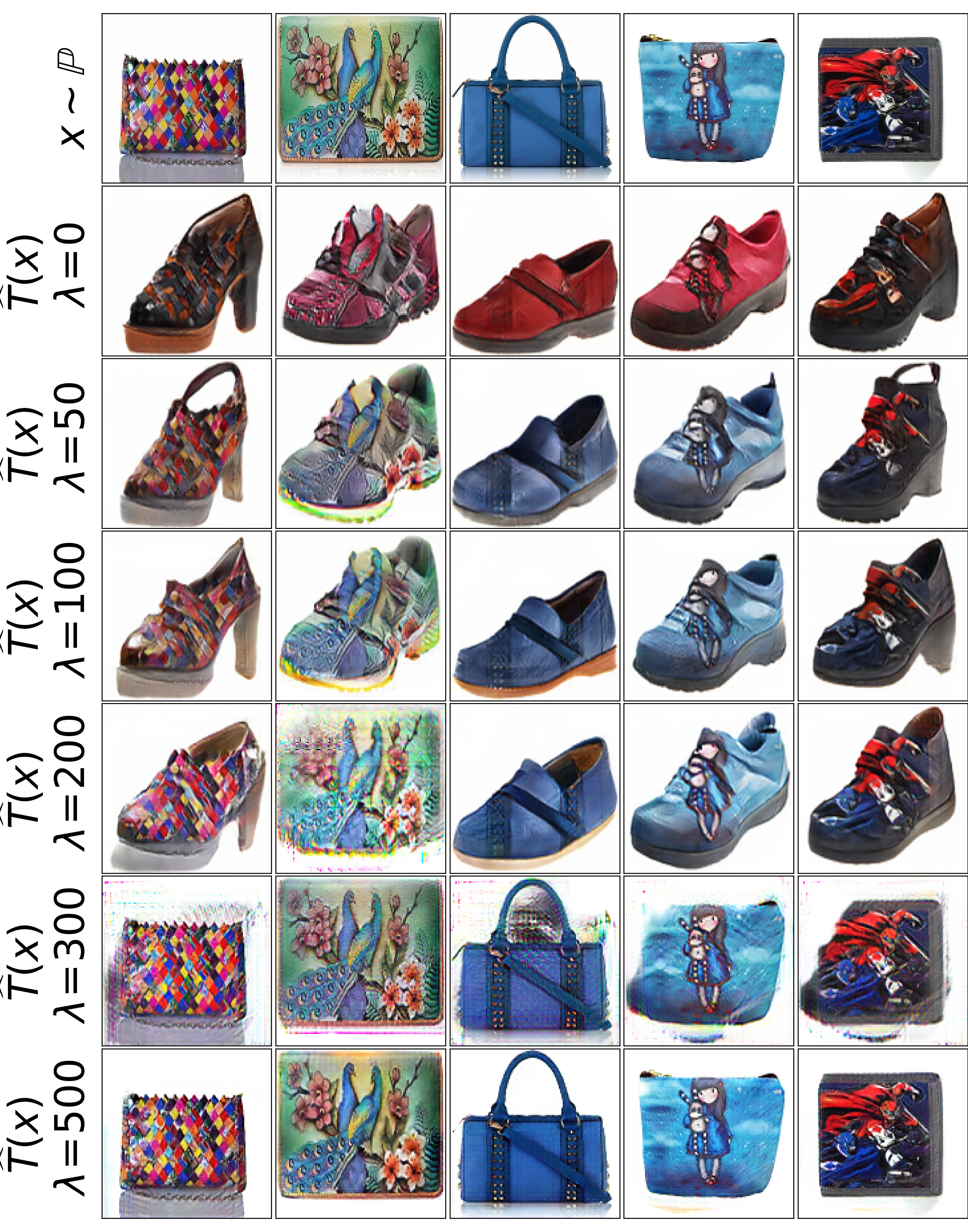}
  \end{center}
  \caption{\centering \textit{Handbags} $\rightarrow$ \textit{shoes} (128$\times$128).}
  \label{fig:handbags2shoesL1}
\end{subfigure}
\vspace{-1mm}
\caption{\centering Unpaired translation via CycleGAN endowed with $\ell^{1}$ identity loss with various weights $\lambda$.}
  \label{fig:cyclegan-images}
  \vspace{5mm}
\end{figure}

\begin{figure}[!h]
\begin{subfigure}[b]{0.49\linewidth}
   \begin{center}
    \includegraphics[width=0.98\linewidth]{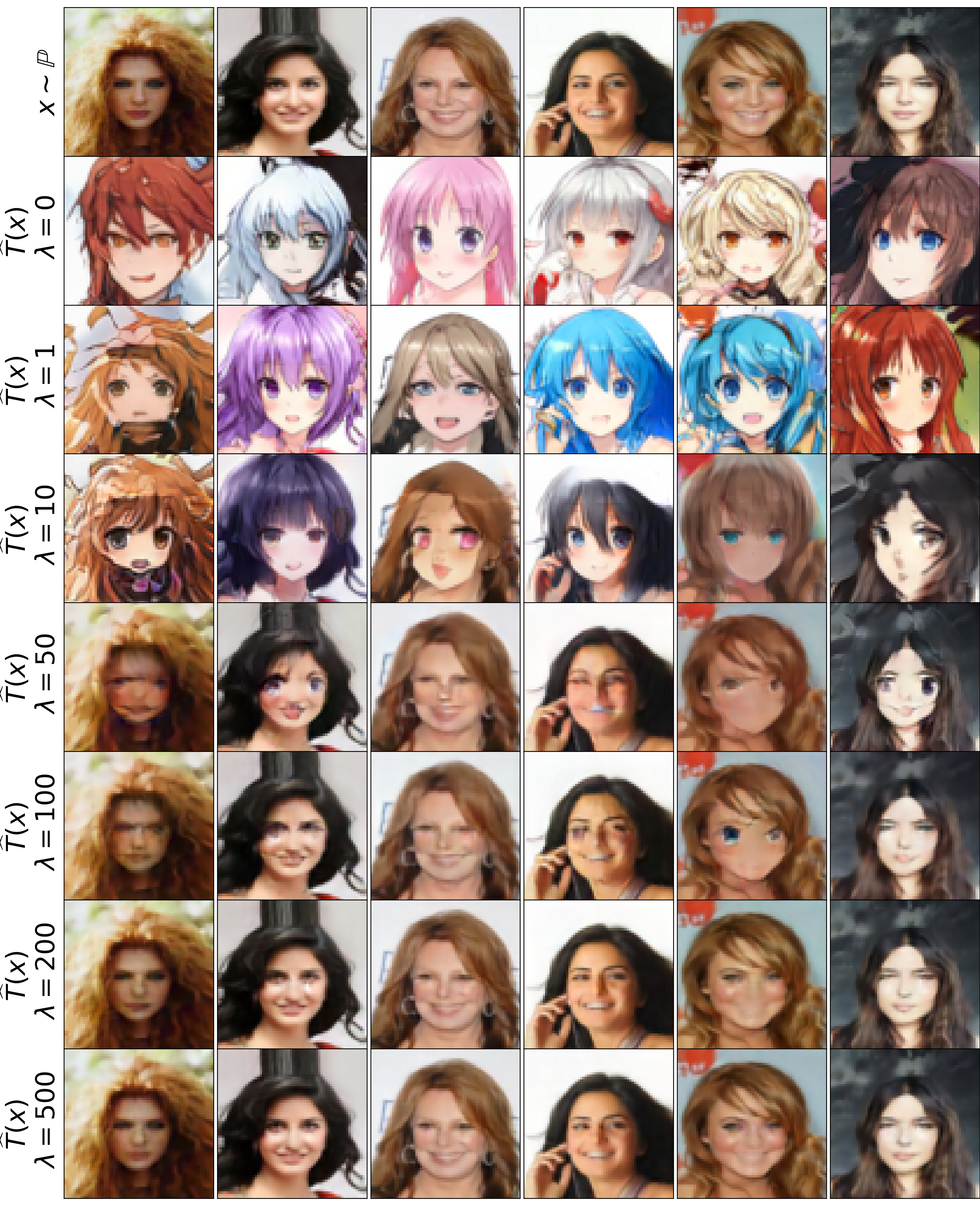}
  \end{center}
  \caption{\centering \textit{Celeba} (female) $\rightarrow$ \textit{anime} (64$\times$64).}
  \label{fig:celeba2animestargan}
\end{subfigure}
\hfill
\hspace{3mm}
\begin{subfigure}[b]{0.5\linewidth}
   \begin{center}
    \includegraphics[width=0.94\linewidth]{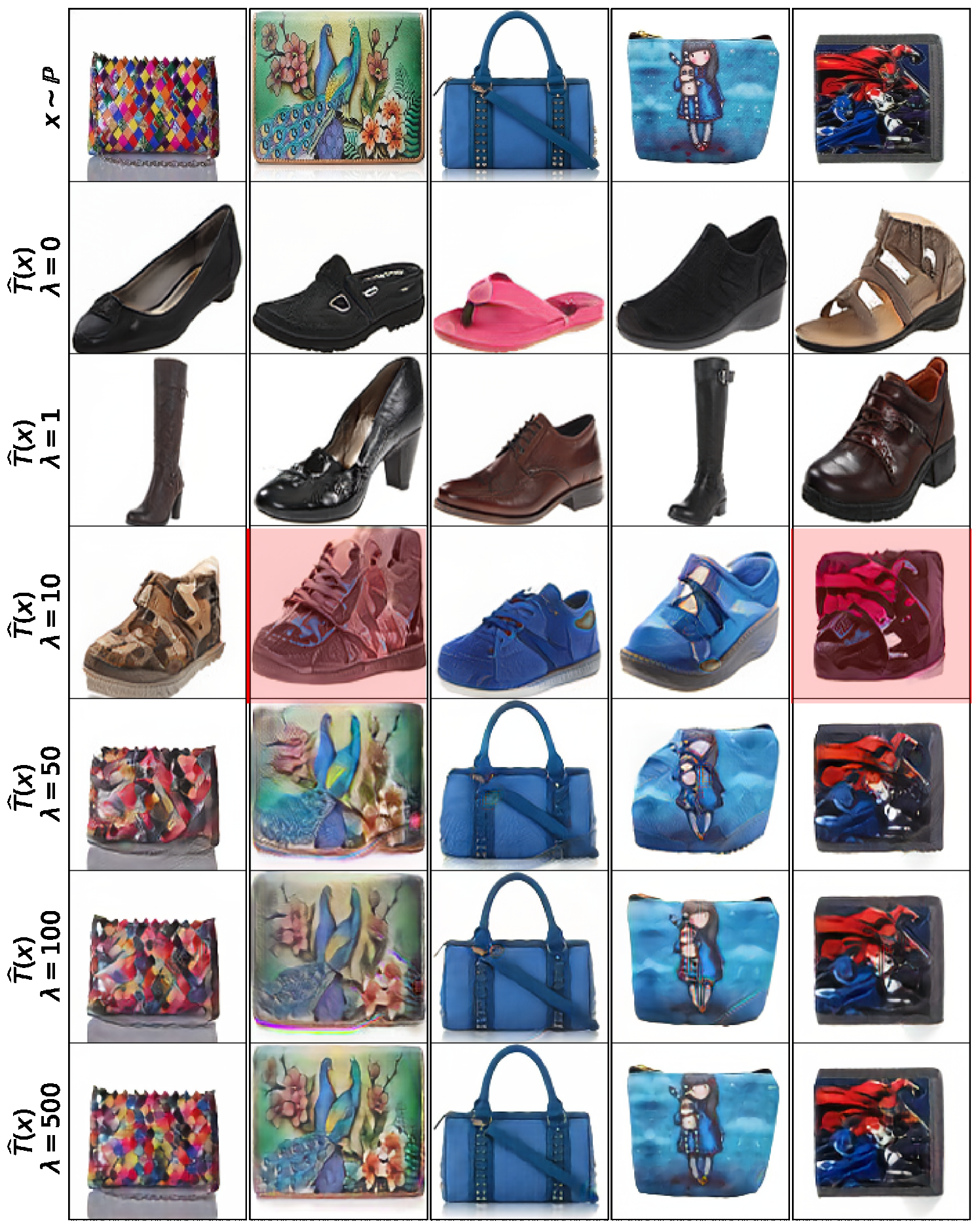}
  \end{center}
  \caption{\centering \textit{Handbags}$\rightarrow$ \textit{shoes} (128$\times$128).}
  \label{fig:handbag2shoesstargan}
\end{subfigure}
\caption{\centering Unpaired translation via StarGAN-v2 endowed with $\ell^{1}$ identity loss \linebreak with various weights $\lambda$.}
\label{fig:starganv2-images}
\end{figure}

\begin{figure}[h!]
\vspace{-2mm}
\begin{center}
     \includegraphics[width=\linewidth]{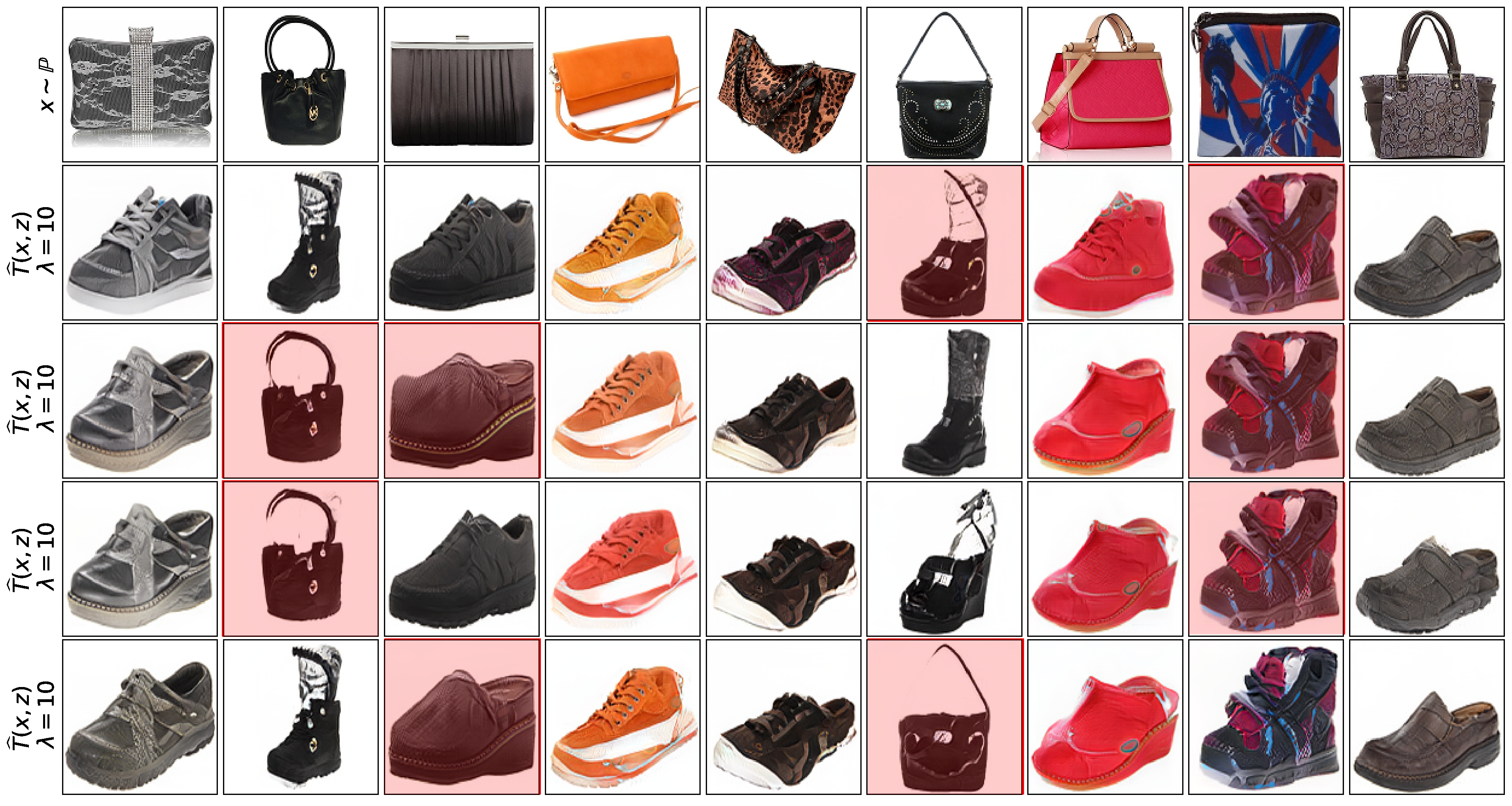}
  \end{center}
  \vspace{-2mm}
  \caption{\centering Unpaired translation of \textit{handbags} to \textit{shoes} (128$\times$128) via StarGAN-v2 endowed with \linebreak $\ell^{1}$ identity loss ($\lambda=10$). Since StarGAN-v2 is a stochastic (one-to-many) approach, we visualize several generated samples for different noise vectors $z$. 
  Failures (poorly translated handbags which remain nearly unchanged) are highlighted with \textbf{red}.
  }
  \label{fig:starganv2-add-images}
\end{figure}

\vspace{-2mm}
\section{Relation and Comparison with Discrete Partial OT Methods}
\vspace{-2mm}
\label{sec-discrete}
The goal of domain translation is to recover the map $x\mapsto T(x)$ between two domains $\mathbb{P}$, $\mathbb{Q}$. We approach this problem by approximating $T$ with a neural network trained on the empirical samples $X=\{x_{1},\dots,x_{N}\}$, $Y=\{y_{1},\dots,y_{M}\}$, i.e., train datasets. Our method \textit{generalizes} to new (previously unseen, test) input samples $x_{new}\sim\mathbb{P}$, i.e, our learned map $\widehat{T}$ can be applied to new input samples to \textit{generate} new target samples $\widehat{T}(x_{new})$.

In contrast, discrete OT methods (including discrete partial OT) \textit{perform a matching} between the empirical distributions $\widehat{\mathbb{P}}_{N}=\sum_{n=1}^{N}\delta_{x_n},\widehat{\mathbb{Q}}_{M}=\sum_{m=1}^{M}\delta_{y_m}$. Thus, they do not provide out-of-sample estimation of the transport plan $\pi(y|x_{new})$ or map $T(x_{new})$. The reader may naturally wonder: why not to \textbf{interpolate} the solutions of discrete OT? For example, a common strategy is to derive barycentric projections $\overline{T}(x)=\int_{\mathcal{Y}} y d\pi(y|x)$ of the discrete OT plan $\pi$ and then learn a network $T_{\theta} (x)\approx \overline{T}(x)$ to approximate them \citep{seguy2018large}.
Below we study this approach and show that it does not provide reasonable performance.

We perform evaluation of the BP approach on \textit{celeba}$\rightarrow$\textit{anime} translation. We solve the discrete partial OT (DOT) between the parts of the train data $X$ (11K samples), $Y$ (50K samples).\footnote{We do not use the whole datasets since computing discrete OT between them is computationally infeasible.} We use \texttt{ot.partial.partial\_wasserstein} algorithm with parameters $w_0=m=1$ and vary $w_1\in\{1,2,4,8\}$, see the notation in  \eqref{ot-partial}. This corresponds to our IT \eqref{not-perfect-ot-kantorovich} with $w=w_1$. Then we regress a UNet $T_{\theta}$ to recover the barycentric projection $\overline{T}$. We also simulate the case of $w=\infty$ by learning the discrete NNs in the train dataset using \texttt{NearestNeighbors} algorithm from \texttt{sklearn.neighbors}. 
We experiment with using MSE or VGG-based perceptual error\footnote{\texttt{github.com/iamalexkorotin/WassersteinIterativeNetworks/src/losses.py}} as the loss function for regression. We visualize the obtained results in Fig. \ref{fig:anime2celeba_dotbp} and report average $\ell^2$ \textbf{test} transport cost, FID in Table \ref{table:metrics-dotbp}.

\begin{figure}[!t]
\begin{subfigure}[b]{0.48\linewidth}
   \begin{center}
 \includegraphics[width=\linewidth]{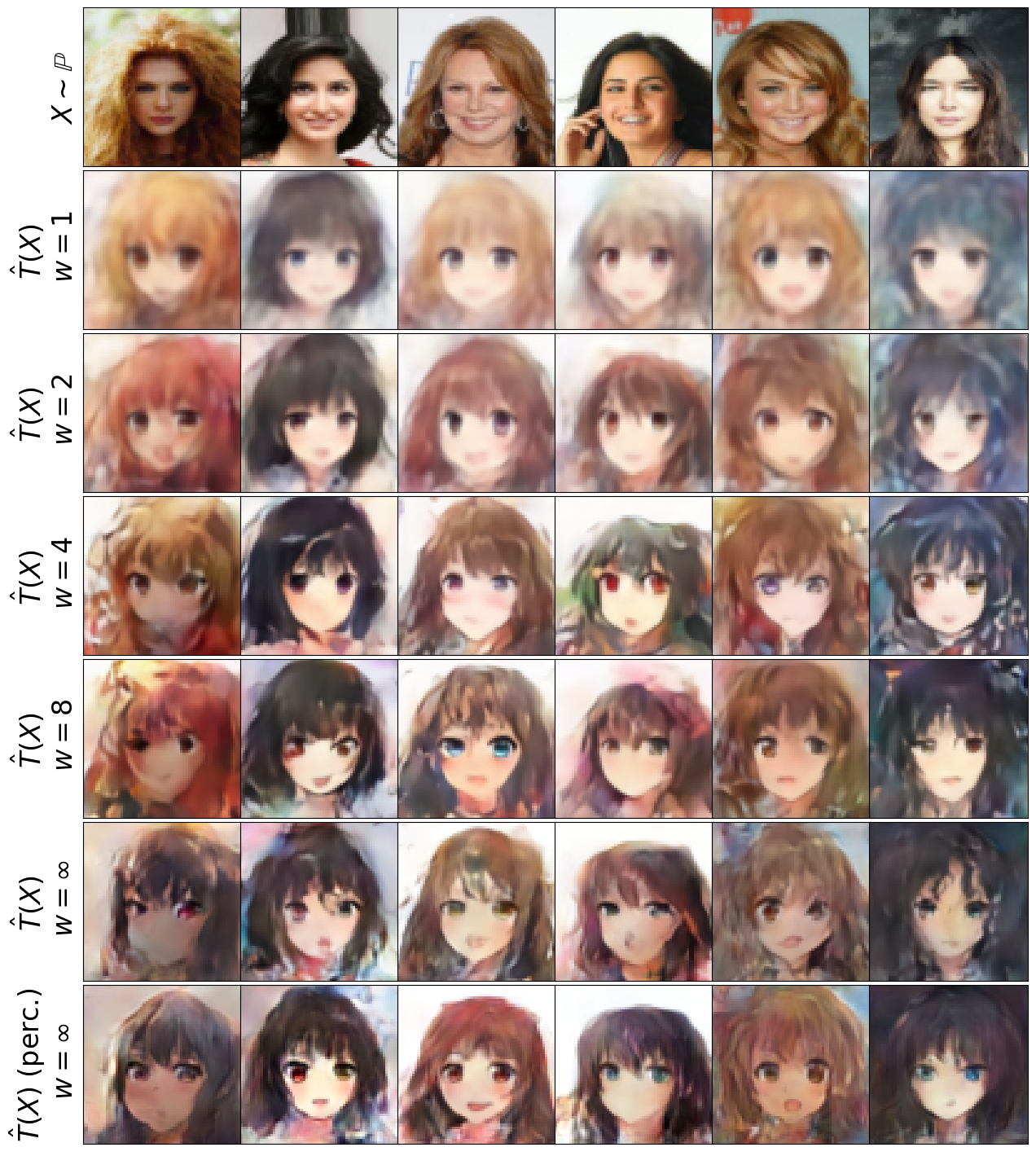}
  \end{center}
  \vspace{-2mm}
  \caption{\centering Results on \textit{test} set.}
  \label{fig:dotbp_test}
\end{subfigure}
\begin{subfigure}[b]{0.48\linewidth}
  \begin{center}
    \includegraphics[width=\linewidth]{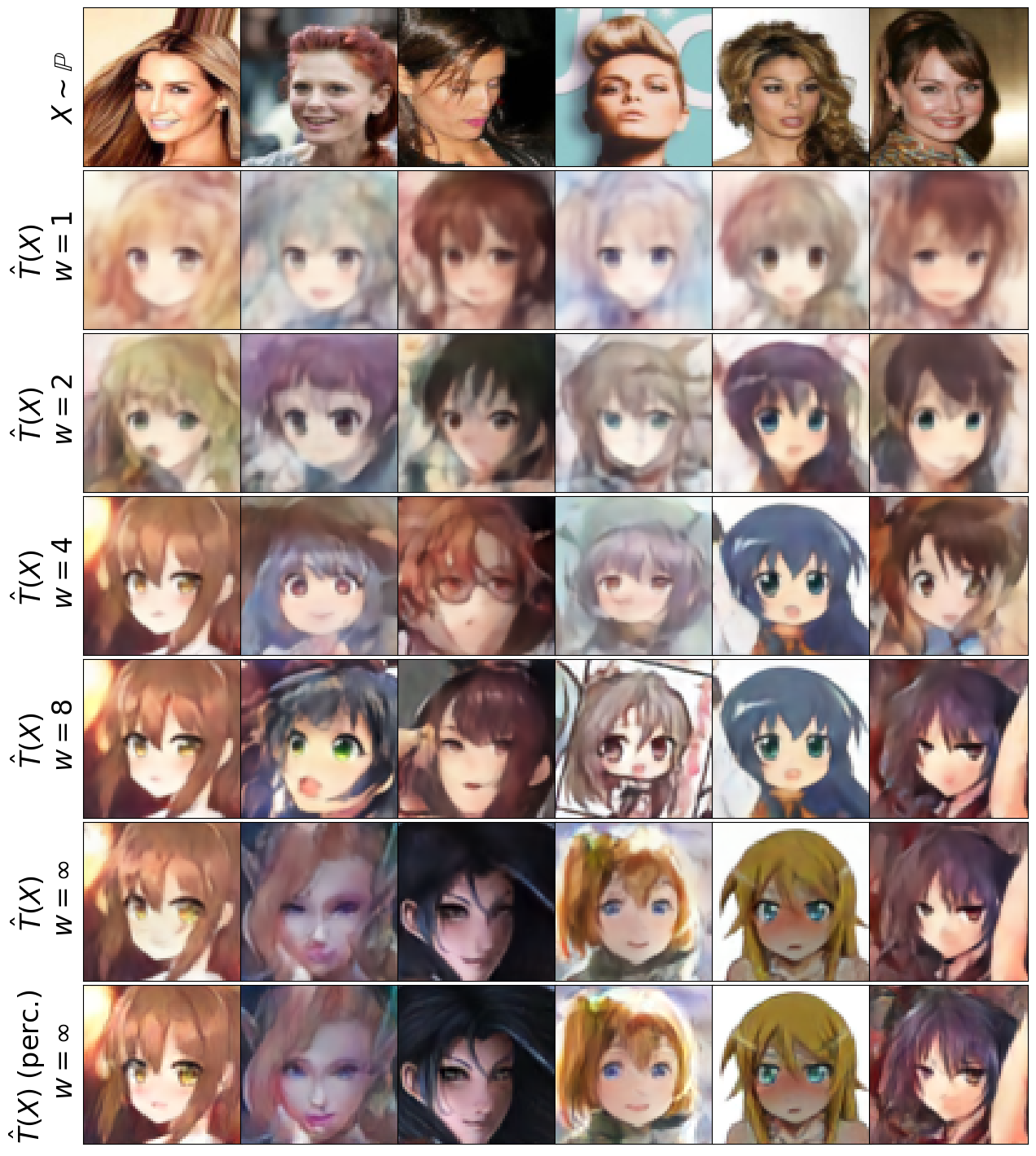}
  \end{center}
  \vspace{-2mm}
  \caption{\centering Results on \textit{train} set.}
  \label{fig:dotbp_train}
\end{subfigure}
\vspace{-1mm}
\caption{\centering Unpaired translation with DOT+BP in \textit{celeba}$\rightarrow$ \textit{anime} experiment \linebreak visualized on test, train partitions.}
  \label{fig:anime2celeba_dotbp}
  \vspace{-2mm}
\end{figure}

\begin{wraptable}{r}{0.55\textwidth}
\vspace{-3mm}
\small\addtolength{\tabcolsep}{-1mm}
{\begin{tabular}{c|c|c|c|c|c|c}\hline
\textbf{Metrics} & $w=1$ & $w=2$ & $w=4$ & $w=8$ & $w=\infty$ & \makecell{$w=\infty$\\ (perc.)}\\ \hline 
\textit{$\ell^2$ cost} & 0.298  & 0.199  & 0.184 & 0.167 & 0.158 & 0.165 \\ \hline
\textit{FID} & 185.35 & 137.53 & 82.67 & 82.19 & 82.24 & 82.45 \\
\hline
\end{tabular}}
\vspace{-2mm}
\caption{\centering Test $\ell^2$ cost, FID of the DOT+BP method trained with $\ell^2$ or perceptual loss function.}
\vspace{-3mm}
\label{table:metrics-dotbp}
\end{wraptable}
BP methods are known \textbf{not to work well} in large scale problems such as unpaired translation because of the \textit{averaging effect}, see Figure 3 and large FID values of BP in Table 1 of \cite{daniels2021score}. Indeed, one may learn a barycentric projection network $T_{\theta}(x)\approx \int_{\mathcal{Y}} y d\pi(y|x)$ over the discrete OT plan $\pi$, yet it is clear that $\int_{\mathcal{Y}} y d\pi(y|x)$ is a direct weighted \textit{average} of several images $y$, i.e., some  blurry \textit{average} image of poor quality. It is not satisfactory for practical applications.

\begin{wrapfigure}{r}{0.3\textwidth}
\vspace{-4mm}
\centering\includegraphics[width=0.85\linewidth]{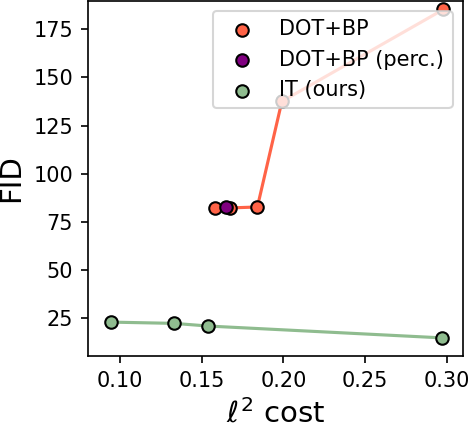}
\vspace{-2mm}
\caption{\centering Test FID, $\ell^2$ cost\linebreak of our IT vs. DOT+BP.}
\label{fig:dotbp-costs}
\end{wrapfigure}
Our qualitative results indeed show that BP approach for small $w$ leads to the \textit{averaging effect}. We see that on the train dataset this effect disappears with the increase of $w$, see Fig. \ref{fig:dotbp_train}. It is expected, since with the increase of weight the conditional distribution of a plan $\pi(y|x)$ tends to a degenerate distribution concentrated at the nearest neighbor $NN(x)$ of $x$ in the train dataset, i.e., $\pi(y|x) \approx \delta_{NN(x)}$ when $w\rightarrow\infty$. This means, that in the limit the barycentric projection in point $x$ is its nearest neighbor $NN(x)$.

However, the learned network ($w=\infty$) does not generalize well to unseen test samples and produces images of insufficient quality which is much worse than for the train samples, see Fig. \ref{fig:dotbp_test}. Despite the fact that test $\ell^2$ cost and FID decrease with the increase of $w$, see Table \ref{table:metrics-dotbp}, Fig. \ref{fig:dotbp-costs}, generated test images have poor quality for all weights $w$. Yet, the FID scores as well as the $\ell^2$ costs are much bigger than in our method in all the cases. 

As the MSE error for regression is known not to work well in image tasks, we also conduct an experiment using the perceptual error function to learn $T_{\theta}$.
Unfortunately, training the neural network with perceptual error does not lead to meaningful improvements, see the bottom lines of Fig. \ref{fig:dotbp_train}, \ref{fig:dotbp_test}, Table \ref{table:metrics-dotbp}. This confirms that the origins of DOT+BP method's poor performance lie not in the regression part, but in the entire methodology based on barycentric projections.

To conclude, discrete OT methods are not competitors to our work as there is no straightforward and well-performing way to make out-of-sample DOT estimation in large scale image processing tasks.

\vspace{-4mm}\section{Experimental Details}
\label{sec-exp-details}

\vspace{-1mm}\textbf{Pre-processing.} For all datasets, we rescale images' RGB channels to [-1, 1]. As in \cite{korotin2021neural}, we rescale aligned anime face images to 512$\times$512. Then we do 256$\times$256 crop with the center located 14 pixels above the image center to get the face. Finally, for all datasets except the textures dataset, we resize the images to the required size (64$\times$64 or 128$\times$128). We apply random horizontal flipping augmentation to the comic faces and chairs datasets. Analogously to \cite{korotin2023kernel}, we rescale describable textures to minimal border size of 300, do the random resized crop (from 128 to 300 pixels) and random horizontal, vertical flips. Next, we resize images to the required size (64 $\times$ 64 or 128 $\times$ 128).

\textbf{Neural networks.} In \wasyparagraph\ref{sec-toy-exp}, we use fully connected networks both for the mapping $T_{\theta}$ and potential $f_{\psi}$. In \wasyparagraph\ref{sec-exp-translation}, we use UNet \cite{ronneberger2015u} architecture for the transport map $T_{\theta}$.
We use WGAN-QC's \cite{liu2019wasserstein} ResNet \cite{he2016deep} discriminator as a potential $f_{\psi}$. We add an additional final layer $x\mapsto -|x|$ to $f_{\psi}$ to make its outputs non-positive.

\textbf{Optimization.} We employ Adam \citep{kingma2014adam} optimizer with the default betas both for $T_{\theta}$ and $f_{\psi}$. The learning rate is $lr=10^{-4}$. We use the MultiStepLR scheduler which decreases $lr$ by 2 after [(5+5$\cdot w$)K, (20+5$\cdot w$)K, (40+5$\cdot w$)K, (70+5$\cdot w$)K] iterations of $f_{\psi}$ where $w\in\{1,2,4,8\}$ is a weight parameter. The batch size is $|X|=256$ for toy \textit{'Wi-Fi'}, $|X|=4096$ for toy Accept, and $|X|=64$ for image-to-image translation experiments. The number of inner $T_{\theta}$ iterations is $k_T=10$. In toy experiments, we observe convergence in $\approx$ 30K total iterations of $f_{\psi}$ for \textit{'Wi-Fi'}, in $\approx$ 70K for Accept. In the image-to-image translation, we do $\approx$ 70K iterations for 128 $\times$128 datasets, $\approx$ 40K iterations for 64$\times$64 datasets. In the experiments with image-to-image translation experiments, we gradually increase $w$ for 20K first iterations of $f_{\psi}$. We start from $w=1$ and linearly increase it to the desired $w$ (2, 4 or 8).

\textbf{Computational complexity}. The complexity of training IT maps depends on the dataset, size of images and weight $w$. Convergence time increases with the increase of $w$: possible reasons for this are discussed in the \textit{unused samples} limitation (Appendix \ref{sec-limitations}). In general, it takes from $2$ (for $w=1$) up to $6$ (for $w=8$) days on a single Tesla V100 GPU.

\textbf{Reproducibility.} We provide the code for the experiments and will provide the trained models, see \texttt{README.md}.

\section{Proofs}
\label{sec-proofs}

\begin{proof}[Proof of Proposition \ref{continuity-nn}.]
 Since continuous $c$ is defined on a compact set $\mathcal{X}\times\mathcal{Y}$, it is uniformly continuous on $\mathcal{X}\times\mathcal{Y}$. This means that there exists a modulus of continuity $\omega:[0,+\infty)\rightarrow [0,\infty)$ such that for all $(x,y),(x',y')\in\mathcal{X}\times\mathcal{Y}$ it holds 
 $$|c(x,y)-c(x',y')|\leq\omega(||x-x'||_{\mathcal{X}}+||y-y'||_{\mathcal{Y}}),$$
and function $\omega$ is monotone, continuous at $0$ with $\omega(0)=0$. In particular, for $y=y'$ we have $|c(x,y)-c(x',y)|\leq\omega(||x-x'||_{\mathcal{X}})$. Thus, for $c^{*}(x)=\min_{y\in\text{Supp}(\mathbb{Q})}c(x,y)$, we have $|c^{*}(x)-c^{*}(x')|\leq \omega\big(||x-x'||_{\mathcal{X}}\big)$, see \citep[Box 1.8]{santambrogio2015optimal}. This means that $c^{*}:\mathcal{X}\rightarrow\mathbb{R}$ is (uniformly) continuous.
\end{proof}

\begin{proof}[Proof of Theorem \ref{theorem-existence-perfect}] Since function $(x,y)\mapsto c(x,y)$ is continuous, there exists a measurable selection $T^{*}:\mathcal{X}\rightarrow\mathcal{Y}$ from the set-valued map $x\mapsto\argmin_{y\in\text{Supp}(\mathbb{Q})}c(x,y)$, see \citep[Theorem 18.19]{aliprantis2006infinite}. This map for all $x\in\mathcal{X}$ satisfies $c\big(x,T(x)\big)=\min\limits_{y\in\text{Supp}(\mathbb{Q})}c(x,y)=c^{*}(x)$. As a result, $\int_{\mathcal{X}}c\big(x,T(x)\big)d\mathbb{P}(x)=\int_{\mathcal{X}}c^{*}(x)d\mathbb{P}(x)$ and \eqref{lower-bound-cost} is tight.
\end{proof}

\begin{lemma}(Distinctness) Let $\mu,\nu\in\mathcal{M}(\mathcal{Y})$. Then $\mu\leq \nu$ holds if and only if for every $f\in\mathcal{C}(\mathcal{Y})$ satisfying $f\leq 0$ it holds that $\int_{\mathcal{Y}} f(y)d\mu(y)\geq \int_{\mathcal{Y}}f(y)d\nu(y)$.
\label{lemma-distinctness}
\end{lemma}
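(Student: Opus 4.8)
The forward implication is immediate and I would dispatch it first: if $\mu\leq\nu$, then $\lambda\stackrel{def}{=}\nu-\mu$ is a finite \emph{non-negative} (Radon) measure on $\mathcal{Y}$, so for any continuous $f\leq 0$ we have $-f\geq 0$ and hence $\int_{\mathcal{Y}}(-f)\,d\lambda\geq 0$, i.e.\ $\int_{\mathcal{Y}}f\,d\mu\geq\int_{\mathcal{Y}}f\,d\nu$. Thus the real content of the lemma is the converse.

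For the converse I would again set $\lambda\stackrel{def}{=}\nu-\mu\in\mathcal{M}(\mathcal{Y})$. Substituting $f=-g$, the hypothesis reads: $\int_{\mathcal{Y}}g\,d\lambda\geq 0$ for every continuous $g\geq 0$; the goal is to conclude $\lambda\geq 0$, i.e.\ $\lambda(A)\geq 0$ for every Borel $A\subset\mathcal{Y}$. I would argue by contradiction using the Hahn--Jordan decomposition $\lambda=\lambda^{+}-\lambda^{-}$, with $\mathcal{Y}=P\cup N$ the associated partition into a positive and a negative set, so that $\lambda^{+}(N)=0$ and $\lambda^{-}(P)=0$; assume $\lambda^{-}\neq 0$, hence $\lambda^{-}(N)>0$.

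The key step is then to produce a continuous $g\geq 0$ with $\int_{\mathcal{Y}}g\,d\lambda<0$. Here I would use that $\mathcal{Y}$ is a compact Polish space, so every finite Borel measure on it is regular (Radon). By inner regularity of $\lambda^{-}$, pick a compact $K\subset N$ with $\lambda^{-}(K)>0$; since $\lambda^{+}(K)\leq\lambda^{+}(N)=0$, outer regularity of $\lambda^{+}$ yields an open $U\supset K$ with $\lambda^{+}(U)<\tfrac{1}{2}\lambda^{-}(K)$. As $\mathcal{Y}$ is a compact metric space it is normal, so Urysohn's lemma gives a continuous $g:\mathcal{Y}\to[0,1]$ with $g\equiv 1$ on $K$ and $g\equiv 0$ on $\mathcal{Y}\setminus U$. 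Then, using $\{g>0\}\subset U$ and $g|_K\equiv 1$,
\[
\int_{\mathcal{Y}}g\,d\lambda=\int_{\mathcal{Y}}g\,d\lambda^{+}-\int_{\mathcal{Y}}g\,d\lambda^{-}\leq \lambda^{+}(U)-\lambda^{-}(K)<-\tfrac{1}{2}\lambda^{-}(K)<0,
\]
contradicting $\int_{\mathcal{Y}}g\,d\lambda\geq 0$. Hence $\lambda^{-}=0$, i.e.\ $\lambda\geq 0$, i.e.\ $\mu\leq\nu$.

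The only genuine obstacle is the passage inside the converse from ``$\lambda$ integrates every non-negative continuous function to a non-negative value'' to ``$\lambda$ is a non-negative measure'' — everything else is bookkeeping. This is precisely the statement that a positive linear functional on $\mathcal{C}(\mathcal{Y})$ is represented by a non-negative measure under the duality $\mathcal{M}(\mathcal{Y})=\mathcal{C}(\mathcal{Y})^{*}$ recalled in the notation section, so one could alternatively cite the Riesz representation theorem together with the standard characterization of positive functionals in place of the Hahn--Jordan/Urysohn argument; I would keep the explicit argument since it is short and self-contained.
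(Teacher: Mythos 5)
Your proof is correct but takes a genuinely different route from the paper's. The paper supposes $\mu(A)>\nu(A)$ for some measurable $A$, forms the negative indicator $f_{A}$ of $A$, invokes density of $\mathcal{C}(\mathcal{Y})$ in $L^{1}(|\mu-\nu|)$ (Folland, Prop.\ 7.9) to pick a continuous approximant $f_{A,\epsilon}$, truncates it to the non-positive $f_{A,\epsilon}^{-}=\min\{0,f_{A,\epsilon}\}$, and estimates $\int f_{A,\epsilon}^{-}\,d(\mu-\nu)$ to force a contradiction. You instead pass to $\lambda=\nu-\mu$, use Hahn--Jordan to split $\lambda=\lambda^{+}-\lambda^{-}$ over $P\cup N$, and then use inner/outer regularity of Radon measures together with Urysohn's lemma to build a continuous bump $g\in[0,1]$ equal to $1$ on a compact $K\subset N$ of positive $\lambda^{-}$-mass and vanishing outside an open $U$ of small $\lambda^{+}$-mass, reaching the same contradiction. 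Both arguments ultimately ride on the regularity of finite Borel measures on a compact Polish space — the paper implicitly, through the $L^{1}$-density of continuous functions, you explicitly. The paper's route is a single approximation step behind one citation; yours is a more hands-on topological construction that avoids the density citation at the cost of invoking Hahn--Jordan and Urysohn and keeps the role of regularity in plain sight. Your closing remark — that the converse is nothing but the statement that a positive functional on $\mathcal{C}(\mathcal{Y})$ corresponds, under the Riesz duality already recalled in the notation section, to a non-negative measure — is a valid third route and the shortest of the three.
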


\begin{proof}[Proof of Lemma \ref{lemma-distinctness}]
If $\mu\leq \nu$, then the inequality $\int_{\mathcal{Y}} f(y)d(\mu-\nu)(y)\geq 0$ for every (measurable) $f\leq 0$ follows from the definition of the Lebesgue integral. Below we prove the statement in the other direction.

 Assume the opposite, i.e., $\int_{\mathcal{Y}} f(y)d(\mu-\nu)(y)\geq 0$ for every continuous $f\leq 0$ but still $\nu\ngeq \mu$. The latter means there exists a measurable $A\subset\mathcal{Y}$ satisfying $\mu(A)>\nu(A)$. Let $\epsilon=\frac{1}{2}\big(\mu(A)-\nu(A)\big)>0$. Consider the negative indicator function $f_{A}(y)$ which equals $-1$ if $y\in A$ and $0$ when $y\notin A$. Consider a variation measure $|\nu-\mu|\in\mathcal{M}_{+}(\mathcal{Y})$. Thanks to \citep[Proposition 7.9]{folland1999real}, the continuous functions $C(\mathcal{Y})$ are dense in the space $\mathcal{L}^{1}(|\mu-\nu|)$. Therefore,  there exists a function $f_{A,\epsilon}\in\mathcal{C}(\mathcal{Y})$ satisfying $\int_{\mathcal{X}}|f_A(y)-f_{A,\epsilon}(y)|\hspace{0.3mm}d\big|\mu-\nu|(y)<\epsilon.$ We define $f_{A,\epsilon}^{-}(y)\stackrel{def}{=}\min\{0,f_{A,\epsilon}(y)\}\leq 0$. This is a non-positive continuous function, and for $y\in\mathcal{Y}$ it holds that $|f_A(y)-f_{A,\epsilon}(y)|\geq|f_A(y)-f_{A,\epsilon}^{-}(y)|$ because $f_{A}$ takes only non-positive values $\{0,-1\}$. We derive
\begin{eqnarray}
    \int_{\mathcal{Y}} f_{A,\epsilon}^{-}(y)d(\mu-\nu)(y)=  \overbrace{\int_{\mathcal{Y}} f_{A}(y)d\big(\mu-\nu\big)(y)}^{=\nu(A)-\mu(A)=-2\epsilon}+
    \int_{\mathcal{Y}} (f_{A,\epsilon}^{-}(y)-f_{A}(y))d\big(\mu-\nu\big)(y)\leq
    \nonumber
    \\
    -2\epsilon+\int_{\mathcal{Y}} |f_{A,\epsilon}^{-}(y)-f_{A}(y)|\hspace{0.5mm}d\big|\mu-\nu\big|(y)\leq -2\epsilon+\int_{\mathcal{Y}} |f_{A,\epsilon}(y)-f_{A}(y)|\hspace{0.5mm}d\big|\mu-\nu\big|(y)\leq \nonumber \\ 
    -2\epsilon +\epsilon=-\epsilon < 0,
    \nonumber
\end{eqnarray}
which is a contradiction to the fact that $\int_{\mathcal{Y}} f(y)d(\mu-\nu)(y)\geq 0$ for every continuous $f\leq 0$. Thus, $\mu\leq \nu$.
\end{proof}

\begin{proof}[Proof of Proposition \ref{theorem-existence}] To begin with, we prove that $\Pi^{w}(\mathbb{P},\mathbb{Q})$ is a weak-* compact set. Pick any sequence $\pi_{n}\in \Pi^{w}(\mathbb{P},\mathbb{Q})$. It is bounded as all $\pi_{n}$ are probability measures ($\|\pi_{n}\|_{1}\!=\!1$). Hence by the Banach-Alaoglu theorem \citep[Box 1.2]{santambrogio2015optimal}, there exists a subsequence $\pi_{n_{k}}$ weakly-$*$ converging to some $\pi\in \mathcal{M}(\mathcal{X}\times\mathcal{Y})$. It remains to check that $\pi\in\Pi^{w}(\mathbb{P},\mathbb{Q})$. Let $(\mu_{n_k},\nu_{n_k})$ denote the marginals of $\pi_{n_k}$ and $(\mu,\nu)$ be the marginals of $\pi$. Pick any $f\in\mathcal{C}(\mathcal{Y})$ with $f\leq 0$. Since $\pi_{n_k}\in\Pi^{w}(\mathbb{P},\mathbb{Q})$, it holds that $0\leq\nu_{n_k}\leq w\mathbb{Q}$, $w\int_{\mathcal{Y}}f(y)d\mathbb{Q}(y)\!\leq\!\int_{\mathcal{Y}}f(y)d\nu_{n_k}(y)\!\leq\! 0$ (Lemma \ref{lemma-distinctness}). We have
\begin{equation}
\int_{\mathcal{Y}}f(y)d\nu(y)=\int_{\mathcal{X}\times\mathcal{Y}}f(y)d\pi(x,y)=\lim_{k\rightarrow\infty}\int_{\mathcal{X}\times\mathcal{Y}}f(y)d\pi_{n_{k}}(x,y)=
\nonumber
\\
\lim_{k\rightarrow\infty}\int_{\mathcal{Y}}f(y)d\nu_{n_k}(y).
\end{equation}
The latter limit is $\geq w\int_{\mathcal{Y}}f(y)d\mathbb{Q}(y)$ and $\leq 0$. As this holds for every continuous $f\leq 0$, we conclude that $0\leq \nu\leq w\mathbb{Q}$. By the analogous analysis one may prove that $\mu=\mathbb{P}$ and $\pi\geq 0$. Thus, $\pi\in\Pi^{w}(\mathbb{P},\mathbb{Q})$ and $\Pi^{w}(\mathbb{P},\mathbb{Q})$ is a weak-* compact.

The functional $\pi\mapsto\int_{\mathcal{X}\times\mathcal{Y}}c(x,y)d\pi(x,y)$ is continuous in the space $\mathcal{M}(\mathcal{X}\times\mathcal{Y})$ equipped with weak-$*$ topology because $c:\mathcal{X}\times\mathcal{Y}\rightarrow\mathbb{R}$ is continuous. Since $\Pi^{w}(\mathbb{P},\mathbb{Q})$ is a compact set, there exists $\pi^{*}\in\Pi^{w}(\mathbb{P},\mathbb{Q})$ attaining the minimum on $\Pi^{w}(\mathbb{P},\mathbb{Q})$. This follows from the Weierstrass extreme value theorem and ends the proof. 
\end{proof}

\textbf{Bibliographical remark.} The results showing the existence of minimizers $\pi^{*}$ in partial OT \eqref{ot-partial} already exist, see \citep[Lemma 2.2]{caffarelli2010free} or \citep[\wasyparagraph 2]{figalli2010optimal}. They also provide the existence of minimizers in our IT problem \eqref{not-perfect-ot-kantorovich}. Yet, the authors study the particular case when $\mathbb{P},\mathbb{Q}$ have densities on $\mathcal{X},\mathcal{Y}\subset\mathbb{R}^{D}$. \textit{For completeness}, we include a separate proof of existence as we do not require the absolute continuity assumption. The proof is performed via the usual technique based on weak-* compactness in dual spaces and is analogous to \citep[Theorem 1.4]{santambrogio2015optimal} which proves the existence of minimizers for OT problem \eqref{ot-kantorovich}. Our proof is slightly more technical due to the inequality constraint.

\begin{proof}[Proof of Proposition \ref{equivalence-monge-kantorovich}]Let $\pi^{*}\!\in\!\Pi^{w}(\mathbb{P},\mathbb{Q})$ be an IT plan. Consider the OT problem between $\mathbb{P}$ and $\pi^{*}_{y}$:
\begin{equation}
    \min_{\pi\in\Pi(\mathbb{P},\pi^{*}_y)}\int_{\mathcal{X}\times\mathcal{Y}}c(x,y)d\pi(x,y).
    \label{inner-kantorovich}
\end{equation}
It turns out that $\pi^{*}$ is a minimizer here. Assume the contrary, i.e., that there exists a more optimal $\pi'\in \Pi(\mathbb{P},\pi^{*}_y)$ satisfying
\begin{equation}\int_{\mathcal{X}\times\mathcal{Y}}c(x,y)d\pi'(x,y)<\int_{\mathcal{X}\times\mathcal{Y}}c(x,y)d\pi^{*}(x,y)\qquad \big(=\text{Cost}_{w}(\mathbb{P},\mathbb{Q})\big).
\label{more-ot-plan}
\end{equation}This plan by the definition of $\Pi(\mathbb{P},\pi^{*}_y)$ satisfies $\pi'_{x}=\mathbb{P}$ and $\pi'_y=\pi^{*}_y\leq w\mathbb{Q}$, i.e., $\pi'\in \Pi^{w}(\mathbb{P},\mathbb{Q})$. However, \eqref{more-ot-plan} contradicts the fact that $\pi^{*}$ is an IT plan in \eqref{inner-kantorovich} as $\pi'$ provides smaller cost. Thus, $\min$ in \eqref{inner-kantorovich} equals $\text{Cost}_{w}(\mathbb{P},\mathbb{Q})$ in \eqref{not-perfect-ot-kantorovich}.

Thanks to \citep[Theorem 1.33]{santambrogio2015optimal}, problem \eqref{inner-kantorovich} has the same minimal value as the $\inf$ in the Monge's problem
\begin{equation}
    \inf_{T\sharp\mathbb{P}=\pi_{y}^{*}}\int_{\mathcal{X}}c\big(x,T(x)\big)d\mathbb{P}(x),
    \label{inner-monge}
\end{equation}
i.e., for every $\epsilon>0$ there exists $T_{\epsilon}:\mathcal{X}\rightarrow\mathcal{Y}$ satisfying $T_{\epsilon}\sharp\mathbb{P}=\pi_{y}^{*}$ and $\int_{\mathcal{X}}c\big(x,T_{\epsilon}(x)\big)d\mathbb{P}(x)<\text{Cost}_{w}(\mathbb{P},\mathbb{Q})+\epsilon$. It remains to substitute this $T_{\epsilon}$ to Monge's IT problem \eqref{not-perfect-ot-monge} to get an $\epsilon$-close transport cost to Kantorovich's IT cost \eqref{not-perfect-ot-kantorovich}. As this works for every $\epsilon>0$, we conclude that $\min$ in \eqref{not-perfect-ot-kantorovich} is the same as $\inf$ in \eqref{not-perfect-ot-monge}.
\end{proof}

\begin{proof}[Proof of Theorem \ref{theorem-cost-behaviour}]
The fact that $w\mapsto \text{Cost}_{w}(\mathbb{P},\mathbb{Q})$ is non-increasing follows from the inclusion $\Pi^{w_1}(\mathbb{P},\mathbb{Q})\subset \Pi^{w_2}(\mathbb{P},\mathbb{Q})$ for $w_1\leq w_{2}$. This inclusion means that for larger values of $w$, the minimization in \eqref{not-perfect-ot-kantorovich} is performed over a larger set of admissible plans. As for convexity, take any IT plans $\pi^{w_1}\in\Pi^{w_1}(\mathbb{P},\mathbb{Q}), \pi^{w_2}\in\Pi^{w_2}(\mathbb{P},\mathbb{Q})$ for $w_{1},w_{2}$, respectively. For any $\alpha\in [0,1]$ consider the mixture 
$\pi'=\alpha\pi^{w_{1}}+(1-\alpha)\pi^{w_{2}}.$ Note that $\pi_{y}'=\alpha\pi^{w_{1}}_{y}+(1-\alpha)\pi^{w_{2}}_{y}\leq \alpha w_{1}\mathbb{Q}+(1-\alpha)w_{2}\mathbb{Q}=\big(\alpha w_1+(1-\alpha)w_2\big)\mathbb{Q}$. Therefore, $\pi'\in\Pi^{\alpha w_1+(1-\alpha) w_{2}}(\mathbb{P},\mathbb{Q})$. We derive
\begin{eqnarray}
\text{Cost}_{\alpha w_1+(1-\alpha) w_{2}}(\mathbb{P},\mathbb{Q})\leq \int_{\mathcal{X}\times\mathcal{Y}}c(x,y)d\pi'(x,y)=
\nonumber
\\
\alpha\int_{\mathcal{X}\times\mathcal{Y}}c(x,y)d\pi^{w_1}(x,y)+(1-\alpha)\int_{\mathcal{X}\times\mathcal{Y}}c(x,y)d\pi^{w_2}(x,y)=
\nonumber
\\
\alpha\text{Cost}_{w_1}(\mathbb{P},\mathbb{Q})+(1-\alpha)\text{Cost}_{w_2}(\mathbb{P},\mathbb{Q}),
\end{eqnarray}
which shows the convexity of $w\mapsto \text{Cost}_{w}(\mathbb{P},\mathbb{Q})$.

Now we prove that $\lim_{w\rightarrow\infty}\text{Cost}_{w}(\mathbb{P},\mathbb{Q})=\text{Cost}_{\infty}(\mathbb{P},\mathbb{Q})$. For every $w\geq 1$ and $\pi\in\Pi^{w}(\mathbb{P},\mathbb{Q})$, it holds that $\pi_{y}\leq w\mathbb{Q}$. This means that $\text{Supp}(\pi_{y})\subset\text{Supp}(\mathbb{Q})$. As a result, we see that $\Pi^{w}(\mathbb{P},\mathbb{Q})\subset\Pi^{\infty}(\mathbb{P},\mathbb{Q})$, i.e., $\text{Cost}_{w}(\mathbb{P},\mathbb{Q})\geq \text{Cost}_{\infty}(\mathbb{P},\mathbb{Q})$. We already know that $w\mapsto \text{Cost}_{w}(\mathbb{P},\mathbb{Q})$ is non-increasing, so it suffices to show that for every $\epsilon>0$ there exists $w=w(\epsilon)\in[1,+\infty)$ such that $\text{Cost}_{w}(\mathbb{P},\mathbb{Q})\leq \text{Cost}_{\infty}(\mathbb{P},\mathbb{Q})+\epsilon$. This will provide that $\lim_{w\rightarrow\infty}\text{Cost}_{w}(\mathbb{P},\mathbb{Q})=\text{Cost}_{\infty}(\mathbb{P},\mathbb{Q})$.

Pick any $\epsilon>0$. Consider the set $\mathcal{S}\stackrel{def}{=}\{(x,y)\in\mathcal{X}\times\mathcal{Y}\text{ such that }y\in\text{NN}(x)\}$. It is a compact set. To see this, we pick any sequence $(x_n,y_n)\in \mathcal{S}$. It is contained in compact $\mathcal{X}\times\mathcal{Y}$. Therefore, it has a sub-sequence $(x_{n_k},y_{n_k})$ converging to some $(x,y)\in\mathcal{X}\times\mathcal{Y}$. It remains to check that $(x,y)\in\mathcal{S}$. Since $\text{Supp}(\mathbb{Q})$ is compact and $y_{n_k}\in\text{NN}(x_{n_k})\subset\text{Supp}(\mathbb{Q})$, we have $y\in \text{Supp}(\mathbb{Q})$ as well. At the same time, by the continuity of $c^{*}$ (Proposition \ref{continuity-nn}) and $c$, we have
$$c(x,y)-c^{*}(x)=\lim_{k\rightarrow\infty}\{c(x_{n_k},y_{n_k})-c^{*}(x_{n_k})\}=\lim_{k\rightarrow\infty}0=0,$$
which means that $y\in\text{NN}(x)$ and $(x,y)\in\mathcal{S}$, i.e., $\mathcal{S}$ is compact.

Since $(x,y)\mapsto c(x,y)-c^{*}(x)$ is a continuous function, for each $(x,y)\in\mathcal{S}$ there exists an open neighborhood $U_{x}\times V_{y}\subset\mathcal{X}\times\mathcal{Y}$ of $(x,y)$ such that for all $(x',y')\in U_{x}\times V_{y}$ it holds that $c(x',y')-c^{*}(x')<\epsilon$ or, equivalently, $c(x',y')< c^{*}(x')+\epsilon$. Since $\bigcup_{(x,y)\in\mathcal{S}}U_{x}\times V_{y}$ is an open coverage of the compact set $\mathcal{S}$, there exists a finite sub-coverage $\bigcup_{n=1}^{N}U_{x_n}\times V_{y_n}$ of $\mathcal{S}$. In particular, $\mathcal{X}= \bigcup_{n=1}^{N}U_{x_{n}}$. For convenience, we simplify the notation and put $U_{n}\stackrel{def}{=}U_{x_n}$ and $V_{n}\stackrel{def}{=}V_{y_{n}}$. Now we put $U_1'\stackrel{def}{=}U_1$ and iteratively define $U'_n\stackrel{def}{=} U_{n}\setminus U_{n-1}'$ for $n\geq 2$. By the construction, it holds that the entire space $\mathcal{X}$ is a disjoint union of $U_{n}'$, i.e., $\mathcal{X}=\bigsqcup_{n=1}^{N}U_{n}'$. Some of $U_{n}'$ may be empty, so we just remove them from the sequence and for convenience assume that each $U_{n}'$ is not empty. Now consider the measure $\pi\in\mathcal{P}(\mathcal{X}\times\mathcal{Y})$ which is given by
\begin{equation}\pi\stackrel{def}{=}\sum_{n=1}^{N}\big[\mathbb{P}\vert_{U_{n}'}\times \frac{\mathbb{Q}\vert_{V_{n}}}{\mathbb{Q}(V_{n})}\big].
\label{plan-construction}
\end{equation}
Here for $\mu,\nu\in\mathcal{M}_{+}(\mathcal{X}),\mathcal{M}_{+}(\mathcal{Y})$, we use $\times$ to denote their product measure $\mu\times\nu\in\mathcal{M}(\mathcal{X}\times\mathcal{Y})$. In turn, for a measurable $A\subset\mathcal{X}$, we use $\mu\vert_{A}$ to denote the restriction of $\mu$ to $A$, i.e., measure $\mu'\in\mathcal{M}(\mathcal{X})$ satisfying $\mu'(B)=\mu(A\cap B)$ for every measurable $B\subset\mathcal{X}$. Note that $\sum_{n=1}^{N}\mathbb{P}\vert_{U_{n}'}=\mathbb{P}$ and $\sum_{n=1}^{N}\mathbb{P}(U_{n}')=\sum_{n=1}^{N}\mathbb{P}\vert_{U_{n}'}(U_{n}')=1$ by the construction of $U_{n}'$. At the same time, for each $n$ it holds that $\frac{\mathbb{Q}\vert_{V_{n}}}{\mathbb{Q}(V_{n})}$ is a probability measure because of the normalization $\mathbb{Q}(V_{n})$. Note that this normalization is necessarily positive because $V_{n}$ is a neighborhood of a point in $\text{Supp}(\mathbb{Q})$. Therefore,  since sets $U_{n}'$ are disjoint and cover $\mathcal{X}$, we have $\pi_{x}=\sum_{n=1}^{N}\mathbb{P}\vert_{U_{n}'}=\mathbb{P}$. Now let us show that there exists $w$ such that $\pi_{y}\leq w\mathbb{Q}$. It suffices to take $w=\sum_{n=1}^{N}\frac{\mathbb{P}(U_{n}')}{\mathbb{Q}(V_{n})}.$ Indeed, in this case for every measurable $A\subset\mathcal{Y}$ we have
$$\pi_{y}(A)=\sum_{n=1}^{N}\mathbb{P}(U_{n}')\frac{\mathbb{Q}(A\cap V_{n})}{\mathbb{Q}(V_{n})}\leq \sum_{n=1}^{N}\mathbb{P}(U_{n}')\frac{\mathbb{Q}(A)}{\mathbb{Q}(V_{n})}\leq \mathbb{Q}(A)\sum_{n=1}^{N}\frac{\mathbb{P}(U_{n}')}{\mathbb{Q}(V_{n})}\leq w\mathbb{Q}(A),$$
which yields $\pi_{y}\leq w\mathbb{Q}$ and means that $\pi\in\Pi^{w}(\mathbb{P},\mathbb{Q})$ for our chosen $w$. Now let us compute the cost of $\pi$:
\begin{eqnarray}
    \int_{\mathcal{X}\times\mathcal{Y}}c(x,y)d\pi(x,y)=\sum_{n=1}^{N}\int_{U_{n}'}\bigg\lbrace\frac{1}{\mathbb{Q}(V_{n})}\int_{V_{n}}c(x,y)d\mathbb{Q}\vert_{V_n}(y)\bigg\rbrace d\mathbb{P}\vert_{U_n'}(x)\leq 
    \nonumber
    \\
    \sum_{n=1}^{N}\int_{U_{n}'}\underbrace{\bigg\lbrace\frac{1}{\mathbb{Q}(V_{n})}\int_{V_{n}}\big( c^{*}(x)+\epsilon\big) d\mathbb{Q}\vert_{V_n}(y)\bigg\rbrace}_{=c^{*}(x)+\epsilon} d\mathbb{P}\vert_{U_n'}(x)=
    \nonumber
    \sum_{n=1}^{N}\int_{U_{n}'}\big( c^{*}(x)+\epsilon\big)d\mathbb{P}\vert_{U_n'}(x)=
    \nonumber
    \\
    \int_{\mathcal{X}}\big(c^{*}(x)+\epsilon\big)d\mathbb{P}(x)=\text{Cost}_{\infty}(\mathbb{P},\mathbb{Q})+\epsilon.
\end{eqnarray}
To finish the proof it remains to note that this plan is not necessarily a minimizer for \eqref{not-perfect-ot-kantorovich}, i.e., $\int_{\mathcal{X}\times\mathcal{Y}}c(x,y)d\pi(x,y)$ is an upper bound on $\text{Cost}_{w}(\mathbb{P},\mathbb{Q})$. Therefore, we have $\text{Cost}_{w}(\mathbb{P},\mathbb{Q})\leq \text{Cost}_{\infty}(\mathbb{P},\mathbb{Q})+\epsilon$ for our chosen $w=w(\epsilon)$.
\end{proof}

\textbf{Bibliographical remark.} There exist seemingly related \textbf{but actually different results} in the fundamental OT literature, see \citep[Lemma 2.1]{figalli2010optimal} or \citep[\wasyparagraph 3]{caffarelli2010free}. There the authors study partial OT problem \eqref{ot-partial} and study how the partial OT plan and OT cost evolve when the marginals $w_0\mathbb{P}$ and $w_1\mathbb{Q}$ are fixed and the required mass amount to transport changes from $0$ to $\min\{w_0,w_1\}$. In our study, the first marginal $w_0\mathbb{P}$ and the amount of mass to transport $m$ are fixed ($w_0=m=1$), and we study how the OT cost changes when $w_{1}\rightarrow\infty$ in the IT problem.

\begin{proof}[Proof of Theorem \ref{theorem-limiting}]Note that $\mathcal{P}(\mathcal{X}\times\mathcal{Y})$ is (weak-*) compact. This can be derived from the Banach-Alaoglu theorem analogously to the compactness of $\Pi^{w}(\mathbb{P},\mathbb{Q})$ in the proof of Theorem \eqref{theorem-existence}. Therefore, \textit{any} sequence in $\mathcal{P}(\mathcal{X}\times\mathcal{Y})$ has a converging sub-sequence. In our case, for brevity, we assume that $\pi^{w_{n}}\in\Pi^{w_{n}}(\mathbb{P},\mathbb{Q})$ itself weakly-* converges to some $\pi^{*}\subset\mathcal{P}(\mathcal{X}\times\mathcal{Y})$.  Since $\pi^{w_{n}}_{x}=\mathbb{P}$ for all $n$, we also have $\pi^{*}_{x}=\mathbb{P}$. As $\lim_{n\rightarrow\infty}w_{n}=\infty$, we conclude from Theorem \ref{theorem-cost-behaviour}  that 
\begin{equation}\text{Cost}_{\infty}(\mathbb{P},\mathbb{Q})=\lim_{n\rightarrow\infty}\text{Cost}_{w_n}(\mathbb{P},\mathbb{Q})=\lim_{n\rightarrow\infty}\int_{\mathcal{X}\times\mathcal{Y}}c(x,y)d\pi^{w_n}(x,y)=\int_{\mathcal{X}\times\mathcal{Y}}c(x,y)d\pi^{*}(x,y),
\label{limiting-cost-is-optimal}
\end{equation}
where the last equality holds since $\pi^{w_n}$ (weakly-*) converges to $\pi^{*}$. From \eqref{limiting-cost-is-optimal}, we see that the cost of $\pi^{*}$ is perfect and it remains to check that $\text{Supp}(\pi^{*}_{y})\subset\text{Supp}(\mathbb{Q})$. Assume the opposite and pick any $y^{*}\in\text{Supp}(\pi^{*}_{y})$ such that $y^{*}\notin\text{Supp}(\mathbb{Q})$. By the definition of the support, there exists $\epsilon>0$ and a neighborhood $U=\{y\in\mathcal{Y}\text{ such that } \|y-y^{*}\|_{\mathcal{Y}}<\epsilon\}$ of $y^{*}$ satisfying $\pi^{*}_y(U)>0$ and $U\cap\text{Supp}(\mathbb{Q})=\emptyset$. Let $h(y)\stackrel{def}{=}\max\{0, \epsilon-\|y-y^{*}\|_{\mathcal{Y}}\}$. From $\pi^{w_{n}}_{y}\leq w_{n}\mathbb{Q}$ (for all $n$), it follows that $\text{Supp}(\pi^{w_{n}}_{y})\subset \text{Supp}(\mathbb{Q})$.  Therefore, $\pi^{w_n}_{y}(U)=0$ for all $n$. Since $\pi^{w_n}$ converges to $\pi^{*}$, we have
\begin{equation}
\lim_{n\rightarrow\infty}\int_{\mathcal{Y}}h(y)d\pi^{w_n}(y)=\int_{\mathcal{Y}}h(y)d\pi^{*}(y).
\end{equation}
The left part is zero because $h(y)$ vanishes outside $U$ and $\int_{U}h(y)d\pi^{w_n}(y)=0$ as $\pi^{w_n}_{y}(U)=0$. The right part equals $\int_{U}h(y)d\pi^{*}(y)$ and is positive as $\pi^{*}(U)>0$ and $h(y)>0$ for $y\in U$. This is a contradiction. Therefore, $\text{Supp}(\pi^{*}_{y})\subset\text{Supp}(\mathbb{Q})$. Now we see that $\pi^{*}\in\Pi^{\infty}(\mathbb{P},\mathbb{Q})$ is a perfect plan as its cost matches the perfect cost.
\end{proof}

\begin{proof}[Proof of Corollary \ref{corollary-it-converge-et}]
    Assume the inverse. Then $\exists \varepsilon$ such that $\forall w(\varepsilon)$ $\exists w \geq w(\varepsilon)$ and $\exists$ IT plan $\pi^w\in\Pi_w(\mathbb{P}, \mathbb{Q})$ solving \eqref{not-perfect-ot-kantorovich} such that $\forall$ ET plan $\pi^*$, it holds that $\mathbb{W}_1(\pi^w, \pi^*)\geq \varepsilon$. This means that there exists an increasing sequence $w_1, w_2, ..., w_n,\dots \rightarrow \infty$ and the corresponding sequence of IT plans $\pi^{w_1}, \pi^{w_2}, ..., \pi^{w_n},\dots$ such that for every ET plan $\pi^{*}$ it holds that $\mathbb{W}_{1}(\pi^{w_n},\pi^{*})\geq\varepsilon$ for all $n$. At the same time, from Theorem \ref{theorem-limiting}, this sequence of plans must have a sub-sequence $\{\pi^{w^{n_k}}\}$ which is (weakly-*) converging to some ET plan $\pi^*$: $\pi^{w^{n_k}}\rightarrow \pi^*$. 
    Recall that the convergence in $\mathbb{W}_{1}$ coicides with the weak-$*$ convergence (for compact $\mathcal{X},\mathcal{Y}$), see \citep[Theorem 5.9]{santambrogio2015optimal}. Hence, the sub-sequence should also converge to $\pi^{*}$ in $\mathbb{W}_{1}$ but it is not since $\mathbb{W}_{1}(\pi^{w_n},\pi^{*})\geq\varepsilon$. This is a contradiction.
\end{proof}

\begin{proof}[Proof of Theorem \ref{theorem-duality}]
Let $\Pi(\mathbb{P})\subset\mathcal{P}(\mathcal{X}\times\mathcal{Y})$ denote the subset of probability measures $\pi\in\mathcal{P}(\mathcal{X}\times\mathcal{Y})$ satisfying $\pi_{x}=\mathbb{P}$. Consider a functional $I:\Pi(\mathbb{P})\rightarrow \{0,+\infty\}$ defined by $I(\pi)\stackrel{def}{=}\sup_{f\leq 0}\int_{\mathcal{Y}}f(y)d\big(w\mathbb{Q}-\pi_{y}\big)(y)$, where the sup is taken over non-positive $f\in\mathcal{C}(\mathcal{Y})$. From Lemma \ref{lemma-distinctness}, we have $I(\pi)=0$ when $\pi\in\Pi^{w}(\mathbb{P},\mathbb{Q})$ and $I(\pi)=+\infty$ otherwise. Indeed, if there exists a non-positive function satisfying $\int_{\mathcal{Y}}f(y)d\big(w\mathbb{Q}-\pi_{y}\big)(y)>0$, then function $Cf$ (for $C>0$) also satisfies this condition and provides $C$-times bigger value which tends to $\infty$ with $C\rightarrow \infty$. We use $I(\pi)$ incorporate the right constraint $\pi_{y}\leq w\mathbb{Q}$ in $\pi\in\Pi^{w}(\mathbb{P},\mathbb{Q})$ to the objective and obtain the equivalent to \eqref{not-perfect-ot-kantorovich} problem:
\begin{eqnarray}
    \min_{\pi\in\Pi^{w}(\mathbb{P},\mathbb{Q})}\int\limits_{\mathcal{X}\times\mathcal{Y}}c(x,y)d\pi(x,y)=
    \min_{\pi\in\Pi(\mathbb{P})}\bigg\lbrace\int\limits_{\mathcal{X}\times\mathcal{Y}}c(x,y)d\pi(x,y)+I(\pi)\bigg\rbrace=
    \nonumber
    \\
    \min_{\pi\in\Pi(\mathbb{P})}\bigg\lbrace\int\limits_{\mathcal{X}\times\mathcal{Y}}c(x,y)d\pi(x,y)+
    \sup_{f\leq 0}\int\limits_{\mathcal{Y}}f(y)d\big(w\mathbb{Q}-\pi_{y}\big)(y)\bigg\rbrace=
    \nonumber
    \\
    \min_{\pi\in\Pi(\mathbb{P})}\sup_{f\leq 0}\bigg\lbrace\int\limits_{\mathcal{X}\times\mathcal{Y}}c(x,y)d\pi(x,y)+
    \int\limits_{\mathcal{Y}}f(y)d\big(w\mathbb{Q}-\pi_{y}\big)(y)\bigg\rbrace=
    \label{before-maximin}
    \\
    \sup_{f\leq 0}\min_{\pi\in\Pi(\mathbb{P})}\bigg\lbrace\int\limits_{\mathcal{X}\times\mathcal{Y}}c(x,y)d\pi(x,y)+
    \int\limits_{\mathcal{Y}}f(y)d\big(w\mathbb{Q}-\pi_{y}\big)(y)\bigg\rbrace=
    \label{after-maximin}
    \\
    \sup_{f\leq 0}\bigg\lbrace\min_{\pi\in\Pi(\mathbb{P})}\big\lbrace\int\limits_{\mathcal{X}\times\mathcal{Y}}c(x,y)d\pi(x,y)-
    \int\limits_{\mathcal{Y}}f(y)d\pi_{y}(y)\big\rbrace+
    w\int\limits_{\mathcal{Y}}f(y)d\mathbb{Q}(y)\bigg\rbrace=
    \nonumber
    \\
    \sup_{f\leq 0}\bigg\lbrace\min_{\pi\in\Pi(\mathbb{P})}\big\lbrace\int\limits_{\mathcal{X}\times\mathcal{Y}}c(x,y)d\pi(x,y)-
    \int\limits_{\mathcal{X}\times\mathcal{Y}}f(y)d\pi(x,y)\big\rbrace+
    w\int\limits_{\mathcal{Y}}f(y)d\mathbb{Q}(y)\bigg\rbrace=
    \label{before-measure-desintegration}
    \\
    \sup_{f\leq 0}\bigg\lbrace\min_{\pi\in\Pi(\mathbb{P})}\big\lbrace\int\limits_{\mathcal{X}}\int\limits_{\mathcal{Y}}c(x,y)d\pi(y|x)\underbrace{d\mathbb{P}(x)}_{=d\pi_{x}(x)}-
    \int\limits_{\mathcal{X}}\int\limits_{\mathcal{Y}}f(y)d\pi(y|x)\underbrace{d\mathbb{P}(x)}_{=d\pi_{x}(x)}\big\rbrace+
    w\int\limits_{\mathcal{Y}}f(y)d\mathbb{Q}(y)\bigg\rbrace=
    \label{measure-desintegration}
    \\
    \sup_{f\leq 0}\bigg\lbrace\min_{\pi\in\Pi(\mathbb{P})}\big\lbrace\int\limits_{\mathcal{X}}\int\limits_{\mathcal{Y}}\big(c(x,y)-f(y)\big)d\pi(y|x)d\mathbb{P}(x)\big\rbrace+
    w\int\limits_{\mathcal{Y}}f(y)d\mathbb{Q}(y)\bigg\rbrace
    \label{before-argmin-selection}
    \end{eqnarray}
In transition from \eqref{before-maximin} to \eqref{after-maximin} we use the minimax theorem to swap $\sup$ and $\min$ \citep[Corollary 2]{terkelsen1972some}. This is possible because the expression in \eqref{before-maximin} is a bilinear functional of $(\pi, f)$. Thus, it is convex in $\pi$ and concave in $f$. At the same time, $\Pi(\mathbb{P})$ is a convex and (weak-*) compact set.  The latter can be derived analogously to the compactness of $\Pi^{w}(\mathbb{P},\mathbb{Q})$ in the proof of Theorem \ref{theorem-existence}. In transition from \eqref{before-measure-desintegration} to \eqref{measure-desintegration}, we use the measure disintegration theorem to represent $d\pi(x,y)$ as the marginal $d\pi_{x}(x)=d\mathbb{P}(x)$ and a family of conditional measures $d\pi(y|x)$ on $\mathcal{Y}$. We note that
\begin{equation}
\min_{\pi\in\Pi(\mathbb{P})}\int\limits_{\mathcal{X}}\int\limits_{\mathcal{Y}}\big(c(x,y)-f(y)\big)d\pi(y|x)d\mathbb{P}(x)\big\rbrace\geq \int\limits_{\mathcal{X}}\underbrace{\min_{y\in\mathcal{Y}}\big(c(x,y)-f(y)\big)}_{=f^{c}(x)}d\mathbb{P}(x).
\label{low-bound-c-transform}
\end{equation}
On the other hand, consider the measurable selection $T:\mathcal{X}\rightarrow\mathcal{Y}$ for the set-valued map $x\mapsto\argmin_{y\in\mathcal{Y}}\big(c(x,y)-f(y)\big)$. It exists thanks to \citep[Theorem 18.19]{aliprantis2006infinite}. As a result, for the deterministic plan $\pi^{T}=[\text{id}, T]\sharp\mathbb{P}$, the minimum in \eqref{low-bound-c-transform} is indeed attained. Therefore, \eqref{low-bound-c-transform} is the equality. We combine \eqref{before-argmin-selection} and \eqref{low-bound-c-transform} and obtain 
\begin{eqnarray}
    \text{Cost}_{w}(\mathbb{P},\mathbb{Q})=\min_{\pi\in\Pi^{w}(\mathbb{P},\mathbb{Q})}\int\limits_{\mathcal{X}\times\mathcal{Y}}c(x,y)d\pi(x,y)=\sup_{f\leq 0}\bigg\lbrace\int\limits_{\mathcal{X}}f^{c}(x)d\mathbb{P}(x)+  w\int\limits_{\mathcal{Y}}f(y)d\mathbb{Q}(y)\bigg\rbrace.
    \label{sup-duality}
\end{eqnarray}
It remains to prove that $\sup$ in the right part is actually attained at some non-positive $f^{*}\in\mathcal{C}(\mathcal{Y})$. Let $f_{1},f_{2},\dots\in\mathcal{C}(\mathcal{Y})$ be a sequence of non-positive functions for which $\lim\limits_{n\rightarrow\infty}\big\lbrace\int_{\mathcal{X}}f_{n}^{c}(x)d\mathbb{P}(x)+  w\int_{\mathcal{Y}}f_{n}(y)d\mathbb{Q}(y)\big\rbrace=\text{Cost}_{w}(\mathbb{P},\mathbb{Q})$. For $g\in\mathcal{C}(\mathcal{X})$, we define the $(c,-)$-transform $g^{(c,-)}(y)\stackrel{def}{=}\min\big[\min\limits_{x\in\mathcal{X}}\big(c(x,y)-g(x)\big), 0\big]\leq 0$. It yields a (uniformly) continuous non-positive function satisfying $|g^{(c,-)}(y)-g^{(c,-)}(y')|\leq \omega\big(\|y-y'\|_{\mathcal{Y}}\big)$, where $\omega$ is the modulus of continuity of $c(x,y)$. This statement can be derived analogously to the proof of Proposition \ref{continuity-nn}.

Before going further, let us highlight two important facts which we are going to use below. Consider any $g\in\mathcal{C}(\mathcal{X})$ and $0\geq h\in\mathcal{C}(\mathcal{Y})$ satisfying $g(x)+h(y)\leq c(x,y)$ for all $(x,y)\in\mathcal{X}\times\mathcal{Y}$. First, from the definition of $(c,-)$-transform, one can see that for all $(x,y)\in\mathcal{X}\times\mathcal{Y}$ it holds that $0\geq g^{(c,-)}\geq h$ and
\begin{equation}
g(x)+h(y)\leq g(x)+g^{(c,-)}(y)\leq c(x,y),
\label{right-c-transform}
\end{equation}
i.e., $(g,g^{(c,-)})$ also satisfies the assumptions of $(g,h)$. Second, from the definition of $c$-transform, it holds that $h^{c}\geq g$
and
\begin{equation}
g(x)+h(y)\leq h^{c}(x)+h(y)\leq c(x,y),
\label{left-c-transform}
\end{equation}
i.e., the pair $(h^{c},h)$ satisfies the same assumptions as $(g,h)$.

Now we get back to our sequence $f_{1},f_{2},\dots$. For each $n$ and $(x,y)\in\mathcal{X}\times\mathcal{Y}$, we have $f^{c}_{n}(x)+f_n(y)\leq c(x,y)$. Next,
\begin{equation}
f^{c}_{n}(x)+f_n(y)\leq f^{c}_{n}(x)+(f_n^{c})^{(c,-)}(y)\leq \big((f_n^{c})^{(c,-)}\big)^{c}(x)+(f_n^{c})^{(c,-)}(y)\qquad \big[\leq c(x,y)\big],
\label{transforming}
\end{equation}
where we first used \eqref{right-c-transform} with $(g,h)=(f^{c}_n,f_n)$ and then used \eqref{left-c-transform} with $(g,h)=(f_{n}^{c},\big(f_{n}^{c}\big)^{(c,-)})$. In particular, we have $f^{c}_{n}\leq \big((f_n^{c})^{(c,-)}\big)^{c}$ and $f_{n}\leq (f_n^{c})^{(c,-)}$. We sum these inequalities with weights 1 and $w$, and for all $(x,y)\in\mathcal{X}\times\mathcal{Y}$ obtain
\begin{eqnarray}
    f_{n}^{c}(x)+wf_{n}(y)\leq \big((f_n^{c})^{(c,-)}\big)^{c}(x)+w(f_n^{c})^{(c,-)}(y)=h_{n}^{c}(x)+w h_{n}(y),
    \label{inequality-c-transform-non-int}
\end{eqnarray}
where for convenience we denote $h_{n}\stackrel{def}=(f_n^{c})^{(c,-)}$. Integrating \eqref{inequality-c-transform-non-int} with $(x,y)\sim\mathbb{P}\times\mathbb{Q}$ yields
\begin{equation}
\int_{\mathcal{X}}f_{n}^{c}(x)d\mathbb{P}(x)+  w\int_{\mathcal{Y}}f_{n}(y)d\mathbb{Q}(y)\leq \int_{\mathcal{X}}h_{n}^{c}(x)d\mathbb{P}(x)+  w\int_{\mathcal{Y}}h_{n}(y)d\mathbb{Q}(y). 
\end{equation}
This means that potential $h_{n}$ provides not smaller dual objective value than $f_{n}$. As a result, sequence $h_{1},h_{2}\dots$ also satisfies $\lim\limits_{n\rightarrow\infty}\big\lbrace\int_{\mathcal{X}}h_{n}^{c}(x)d\mathbb{P}(x)+  w\int_{\mathcal{Y}}h_{n}(y)d\mathbb{Q}(y)\big\rbrace=\text{Cost}_{w}(\mathbb{P},\mathbb{Q})$. Now we forget about $f_{1},f_{2},\dots$ and work with $h_{1},h_{2},\dots$.

All the functions $h_{n}$ are uniformly equicontinuous as they share the same modulus of continuity $\omega$ because they are $(c,-)$-transforms by their definition. Let $v_{n}(y)\stackrel{def}{=}h_{n}(y)-\max_{y'\in\mathcal{Y}}h_{n}(y')$. This function is also non-positive and uniformly continuous as well. Note that $v_{n}$ provides the same dual objective value as $h_{n}$. This follows from the definition of $v^{c}=h^{c}+\max_{y'\in\mathcal{Y}}h_{n}(y')$. Here the additive constant vanishes, i.e., $v^{c}_{n}(x)+v_n(y)=h^{c}_{n}(x)+h_n(y)$. At the same time, $v_{n}$ are all uniformly bounded. Indeed, let $y_{n}\in\mathcal{Y}$ be any point where $v_{n}(y_{n})=0$. Then for all $y\in\mathcal{Y}$ it holds that $|v_{n}(y)|=|v_{n}(y)-v_{n}(y_{n})|\leq \omega\big(\|y-y_{n}\|_{\mathcal{Y}}\big)\leq \omega\big(\text{diam}(\mathcal{Y})\big)$. Therefore, by the Arzelà–Ascoli theorem, there exists a subsequence $v_{n_{k}}$ \textit{uniformly} converging to some $f^{*}\in\mathcal{C}(X)$. As all $v_{n_k}\leq 0$, it holds that $f^{*} \leq 0$ as well. It remains to check that $f^{*}$ attains the supremum in \eqref{sup-duality}. 

To begin with, we prove that $v_{n_k}^{c}$ uniformly converges to $(f^{*})^{c}$. Denote $\|v_{n_k}-f^{*}\|_{\infty}=\epsilon_{k}$. For all $(x,y)\in\mathcal{X}\times\mathcal{Y}$, we have
\begin{equation}c(x,y)-f^{*}(y)-\epsilon_{k}\leq c(x,y)-v_{n_k}(y)\leq c(x,y)-f^{*}(y)+\epsilon_{k}\label{uniform-c-transform}
\end{equation}
since $|v_{n_k}(y)-f^{*}(y)|\leq \|v_{n_k}-f^{*}\|_{\infty}<\epsilon$.  We take $\min_{y\in\mathcal{Y}}$ in \eqref{uniform-c-transform} and obtain $(f^{*})^{c}(x)-\epsilon_{k}\leq v_{n_k}^{c}(x)\leq (f^{*})^{c}(x)+\epsilon_{k}.$ As this holds for all $x\in\mathcal{X}$, we have just proved that $\|v_{n_k}^{c}-(f^{*})^{c}\|_{\infty}<\epsilon_{k}$. This means that $v_{n_k}^{c}$ uniformly converges to $(f^{*})^{c}$ as well since $\lim_{k\rightarrow \infty}\|v_{n_k}-f^{*}\|_{\infty}=\lim_{k\rightarrow \infty}\epsilon_k=0$. Thanks to the uniform convergence, we have
\begin{eqnarray}
    \text{Cost}_{w}(\mathbb{P},\mathbb{Q})=\lim_{k\rightarrow\infty}\bigg\lbrace\int_{\mathcal{X}}(v_{n_k})^{c}(x)d\mathbb{P}(x)+\int_{\mathcal{Y}}v_{n_k}(y)d\mathbb{Q}(y)\bigg\rbrace= \nonumber \\
    \int_{\mathcal{X}}(f^{*})^{c}(x)d\mathbb{P}(x)+\int_{\mathcal{Y}}f^{*}(y)d\mathbb{Q}(y).
    \label{uniform-left}
\end{eqnarray}
We conclude that $f^{*}$ is a maximizer of \eqref{sup-duality} that we seek for.
\end{proof}

\textbf{Bibliographical remark.} There exists a duality formula for partial OT \eqref{ot-partial}, see \citep[\wasyparagraph 2]{caffarelli2010free} which can be reduced to duality formula to IT problem \eqref{not-perfect-ot-kantorovich}. However, it is hard to relate the resulting formula with ours \eqref{not-perfect-ot-dual}. We do not know how to derive one formula from the other. More importantly, it is unclear how to turn their formula to the computational algorithm. Our formula provides an opportunity to do this by using the saddle point reformulation of the dual problem which nowadays becomes standard for neural OT, see \cite{korotin2023neural,fan2023neural,rout2022generative}. We will give further comments after the next proof. The second part of the derivation of our formula (existence of a maximizer $f^{*}$) is inspired by the \citep[Proposition 1.11]{santambrogio2015optimal} which shows the existence of maximizers for standard OT \eqref{ot-kantorovich}.

\begin{proof}[Proof of Theorem \ref{not-perfect-in-saddle}] By the definition of $f^{*}$, we have 
\begin{eqnarray}
    \text{\normalfont Cost}_{w} (\mathbb{P},\mathbb{Q})= \nonumber\\ \min_{T:\mathcal{X}\rightarrow\mathcal{Y}}\mathcal{L}(f^{*},T)=
    \min_{T:\mathcal{X}\rightarrow\mathcal{Y}}\int_{\mathcal{X}}\big\lbrace c\big(x,T(x)\big)-f^{*}\big(T(x)\big)\big\rbrace d\mathbb{P}(x)+w\int_{\mathcal{Y}}f^{*}(y)d\mathbb{Q}(y)\leq 
    \label{opt-f-begin}
    \\
    \int_{\mathcal{X}}\big\lbrace c\big(x,T^{*}(x)\big)-f^{*}\big(T^{*}(x)\big)\big\rbrace d\mathbb{P}(x)+w\int_{\mathcal{Y}}f^{*}(y)d\mathbb{Q}(y)= 
    \label{opt-inf-optimal}
    \\
    \int_{\mathcal{X}} c\big(x,T^{*}(x)\big) d\mathbb{P}(x)-\int_{\mathcal{X}} f^{*}(y) d\big[T^{*}\sharp\mathbb{P}\big](y)+w\int_{\mathcal{Y}}f^{*}(y)d\mathbb{Q}(y)=
    \nonumber
    \\
    \text{Cost}_{w}(\mathbb{P},\mathbb{Q})+\underbrace{\int_{\mathcal{Y}}f^{*}(y)d\big[w\mathbb{Q}-T^{*}\sharp\mathbb{P}\big](y)}_{\leq 0\text{ (Lemma } \ref{lemma-distinctness})}\leq \text{Cost}_{w}(\mathbb{P},\mathbb{Q}).
    \label{opt-f-end}
\end{eqnarray}
This means that all the inequalities in \eqref{opt-f-begin}-\eqref{opt-f-end} are equalities. Since \eqref{opt-f-begin} equals \eqref{opt-inf-optimal}, we have $T^{*}\in\argmin\limits_{T:\mathcal{X}\rightarrow\mathcal{Y}}\mathcal{L}(f^{*},T)$.
\end{proof}
\textbf{Bibliographic remark (theoretical part).} The idea of the theorem is similar to that of \citep[Lemma 4.2]{rout2022generative}, \citep[Lemma 3]{gazdieva2022unpaired}, \citep[Lemma 4]{korotin2023neural}, \citep[Theorem 2]{fan2023neural} which prove that their respective saddle point objectives $\max_{f}\min_{T}\mathcal{L}(f,T)$ can be used to recover optimal $T^{*}$ from \textit{some} optimal saddle points $(f^{*},T^{*})$. Our functional \eqref{main-objective} differs, and we have the constraint $f\leq 0$. We again emphasize here that not for all the saddle points $(f^{*},T^{*})$ it necessarily holds that $T^{*}$ is the IT map, see the discussion in limitations (Appendix \ref{sec-limitations}).

\textbf{Bibliographic remark (algorithmic part).} To derive our saddle point optimization problem \eqref{main-objective}, we use the $c$-transform expansion proposed by \cite{nhan2019threeplayer} in the context of Wasserstein GANs and later explored by \cite{korotin2021neural,korotin2023neural,rout2022generative,fan2023neural,gazdieva2022unpaired,henry2019martingale} in the context of learning OT maps. That is, our resulting algorithm \ref{algorithm-it} overlaps with the standard maximin neural OT solver, see, e.g., \citep[Algorithm 1]{gazdieva2022unpaired}. The difference is in the constraint $f\leq 0$ and the additional multiplier $w\geq 1$.

\begin{proof}[Proof of Proposition \ref{proposition-outliers}] From the proof of Theorem \ref{not-perfect-in-saddle}, we see that $\int_{\mathcal{Y}}f^{*}(y)d\big[w\mathbb{Q}-T^{*}\sharp\mathbb{P}\big](y)=0$. Recall that $f^{*}\leq 0$. This means that $f(y)=0$ for $y\in\text{Supp}\big(w\mathbb{Q}-T^{*}\sharp\mathbb{P}\big)$. Indeed, assume the opposite, i.e., there exists some $y\in\text{Supp}\big(w\mathbb{Q}-T^{*}\sharp\mathbb{P}\big)$ for which  $f(y)<0$. In this case, the same holds for all $y'$ in a small neighboorhood $U$ of $y$ as $f$ is continuous. At the same time, $\int_{\mathcal{Y}}f^{*}(y)d\big[w\mathbb{Q}-T^{*}\sharp\mathbb{P}\big](y)\leq \int_{U}f^{*}(y)d\big[w\mathbb{Q}-T^{*}\sharp\mathbb{P}\big](y)<0$ since $\big[w\mathbb{Q}-T^{*}\sharp\mathbb{P}\big]$  is a non-negative measure satisfying $\big[w\mathbb{Q}-T^{*}\sharp\mathbb{P}\big](U)>0$ by the definition of the support. This is a contradiction. To finish the proof it remains to note that $\text{Supp}(\mathbb{Q})\setminus \text{Supp}(T^{*}\sharp\mathbb{P})\subset\text{Supp}(w\mathbb{Q}-T^{*}\sharp\mathbb{P})$, i.e., $f(y)=0$ for $y\in\text{Supp}(\mathbb{Q})\setminus \text{Supp}(T^{*}\sharp\mathbb{P})$ as well.
\end{proof}
\textbf{Bibliographical remark}. Treating functional $\mathcal{L}(f,T)$ in \eqref{L-def} as a Lagrangian, Proposition \ref{proposition-outliers} can be viewed as a consequence of the complementary slackness in the Karush-Kuhn-Tucker conditions \cite{karush2014minima}.
 
\section{Additional Experimental Illustrations}
\subsection{Comparison with the Closed-form Solution for ET}
\label{sec-gt-solution}
In this section, we conduct a \textit{Swiss2Ball} experiment in 2D demonstrating that for the sufficiently large parameter $w$, IT maps become fine approximations of the ground-truth ET map. We define source measure $\mathbb{P}$ as a uniform distribution on a swiss roll centered at $(0, 0)$. Target measure $\mathbb{Q}$ is a uniform distribution on a ball centered at $(0, 0)$ with radius $R=0.5$, i.e., Supp$(\mathbb{Q})=B(0, 0.5)$. We note that the supports of source and target measures are partially overlapping. In the proposed setup, the solution to ET problem \eqref{perfect-ot-monge} has a closed form:
$T(x) = x \cdot 1_{x\in B((0,0), 0.5)} + x \cdot \frac{R}{\|x\|_2} \cdot 1_{x\notin B((0,0), 0.5)} $, see Fig. \ref{fig:gt-solution-et}. We provide the learned IT maps for $w\in\{1, \nicefrac{3}{2}, 2, 32\}$, see Fig. \ref{fig:gt-solution-w1}-\ref{fig:gt-solution-w32}. The qualitative and quantitative results show that with the increase of $w$ our IT maps become closer and closer to the  ground-truth ET one, see Table \ref{table-gt-solution}.

\begin{figure}[!ht]
\hspace{-2mm}
\begin{subfigure}[t]{0.177\textwidth}
  \begin{center}
    \includegraphics[width=\linewidth]{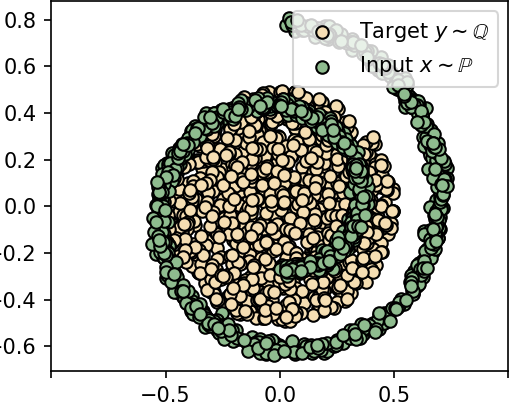}
  \end{center}
  \vspace{-2mm}
  \caption{\centering \normalsize{Input, target measures.}}
  \label{fig:gt-solution-pq}
\end{subfigure}
\hfill
\begin{subfigure}[t]{0.16\textwidth}
  \begin{center}
    \includegraphics[width=\linewidth]{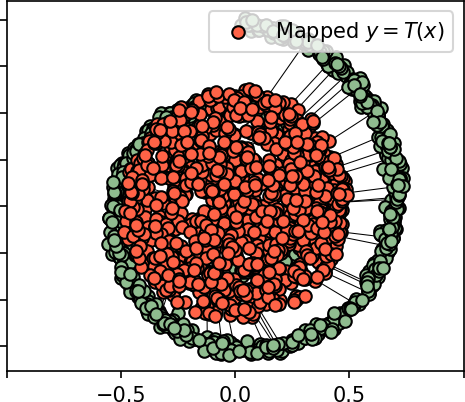}
  \end{center}
  \vspace{-2mm}
  \caption{\centering \normalsize{IT map, $w\!=\!1$.}}
  \label{fig:gt-solution-w1}
\end{subfigure}
\hfill
\begin{subfigure}[t]{0.16\textwidth}
  \begin{center}
    \includegraphics[width=\linewidth]{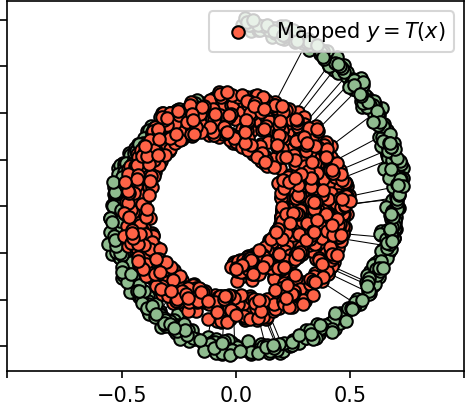}
  \end{center}
  \vspace{-2mm}
  \caption{\centering\normalsize{IT map, $w\!=\!\nicefrac{3}{2}$.}}
  \label{fig:gt-solution-w15}
\end{subfigure}
\hfill
\begin{subfigure}[t]{0.16\textwidth}
  \begin{center}
    \includegraphics[width=\linewidth]{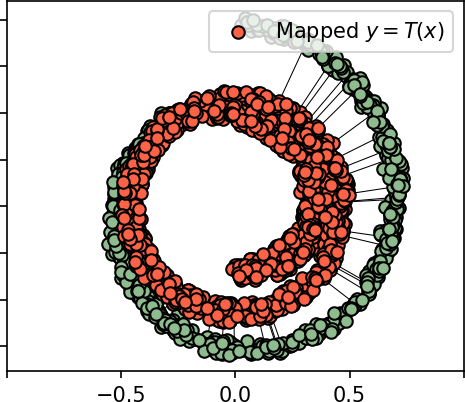}
  \end{center}
  \vspace{-2mm}
  \caption{\centering\normalsize{IT map, $w\!=\!2$.}}
  \label{fig:gt-solution-w2}
\end{subfigure}
\hfill
\begin{subfigure}[t]{0.16\textwidth}
  \begin{center}
    \includegraphics[width=\linewidth]{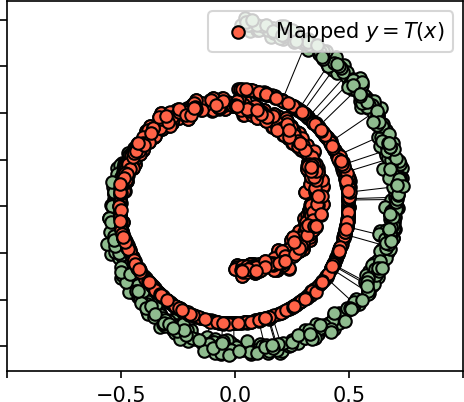}
  \end{center}
  \vspace{-2mm}
  \caption{\centering\normalsize{IT map, $w\!=\!32$.}}
  \label{fig:gt-solution-w32}
\end{subfigure}
\hfill
\begin{subfigure}[t]{0.16\textwidth}
  \begin{center}
    \includegraphics[width=\linewidth]{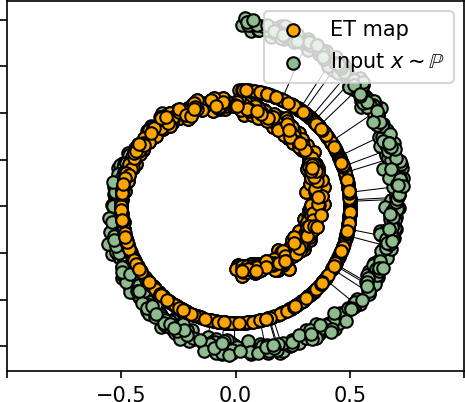}
  \end{center}
  \vspace{-2mm}
  \caption{\centering\normalsize{Extremal transport map.}}
  \label{fig:gt-solution-et}
\end{subfigure}
\vspace{-2mm}
\caption{\normalsize\centering Incomplete Transport (IT) maps learned with $c(x, y)=\|x-y\|_2^2$ transport cost \protect\linebreak and ground-truth Extremal Transport (ET) map in '\textit{Swiss2Ball} experiment.}
\label{fig:gt-solution}
\vspace{-4mm}
\end{figure}

\begin{table}[!h]
\centering
{
\small
    \begin{tabular}{c|c|c|c|c}
    \hline
     & $w=1$ & $w=\nicefrac{3}{2}$ & $w=2$ & $w=32$  \\
    \hline
    MSE($\hat{T}, T^*$)&  0.0136 & 0.0026 & 0.0009 & 7.98e-06\\
    \hline
    \end{tabular}
\vspace{1mm}
\caption{\centering MSE($\hat{T}, T^*$) between learned IT maps $\hat{T}$ ($w\in\{1,\nicefrac{3}{2},2,32\}$) \protect\linebreak and ground-truth ET map $T^*$.}
\label{table-gt-solution}
}
\vspace{-4mm}
\end{table}

\subsection{Solving Fake Solutions Issue with Weak Kernel Cost}
As we discuss in Appendix \ref{sec-limitations}, saddle-point neural OT methods (including our IT algorithm) may suffer from \textit{fake solutions} issue. As it is proved in \citep{korotin2023kernel}, this issue can be eliminated by considering OT with the so-called \textit{weak kernel cost} functions. In this section, we test the effect of using them in our IT algorithm. Specifically, we demonstrate the example where IT algorithm with the cost function $c(x, y)=\|x-y\|_2$ struggles from fake solutions, while IT with the same cost endowed with weak kernel regularization, i.e., kernel cost \citep[Equation (16)]{korotin2023kernel} with parameters $\alpha=1, \gamma=0.4$, resolves the issue. Following \citep{korotin2023kernel}, we consider stochastic version of IT map $T(x,z)$ using noise $z\sim \mathcal{N}(0,\mathcal{I})$ as an additional input.

We design the \textit{Ball2Circle} example in 2D, where input $\mathbb{P}$ is a uniform distribution on a ball and target $\mathbb{Q}$ $-$ on a ring embracing the ball. The solution of ET problem is an internal circuit of a ring, see Fig. \ref{fig:l2-cost-pq}. We learn IT maps for $c(x, y)=\|x-y\|_2$, with ($\gamma=0.4)$ or without ($\gamma=0$) kernel regularization for weights $w\in\{1,2,32\}$.

\label{sec-kernel-cost}
\begin{figure}[h!]
\hspace{-4mm}
\begin{subfigure}[h!]{0.177\textwidth}
  \begin{center}
    \includegraphics[width=\linewidth]{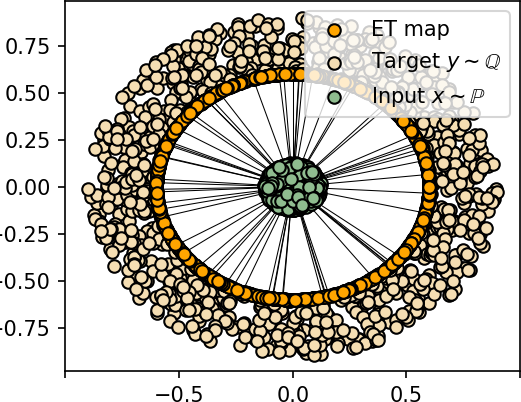}
  \end{center}
  \vspace{-2mm}
  \caption{\centering \small{Input, target measures.}}
  \label{fig:l2-cost-pq}
\end{subfigure}
\hfill
\begin{subfigure}[h!]{0.16\textwidth}
  \begin{center}
    \includegraphics[width=\linewidth]{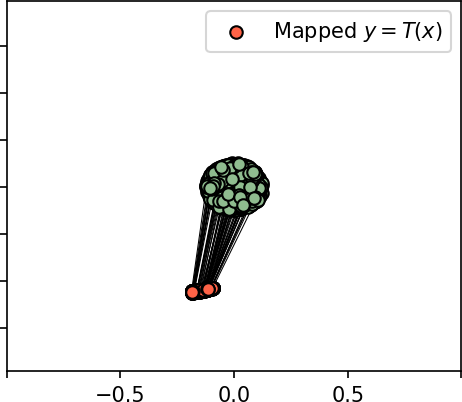}
  \end{center}
  \vspace{-2mm}
  \caption{\centering \small{IT map, \!$w\!=\!1$ (iter=7K)}}
  \label{fig:l2-cost-w1}
\end{subfigure}
\hfill
\begin{subfigure}[h!]{0.16\textwidth}
  \begin{center}
    \includegraphics[width=\linewidth]{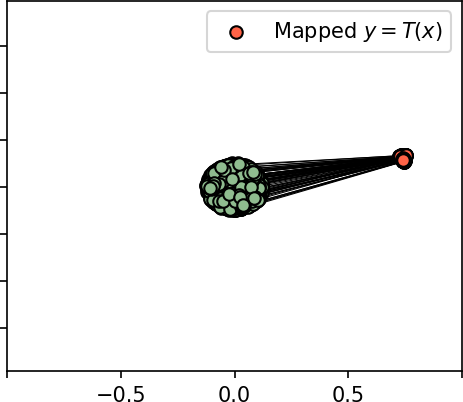}
  \end{center}
  \vspace{-2mm}
  \caption{\centering\small{IT map, $w\!=\!\frac{3}{2}$ (iter=8K)}}
  \label{fig:l2-cost-w15}
\end{subfigure}
\hfill
\begin{subfigure}[h!]{0.17\textwidth}
  \begin{center}
    \includegraphics[width=\linewidth]{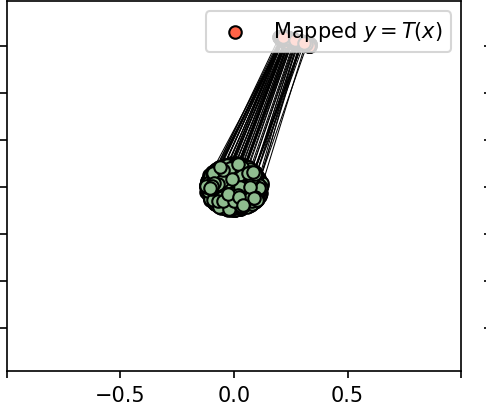}
  \end{center}
  \vspace{-2mm}
  \caption{\centering\small{IT map, $w\!=\!2$ (iter=7K)}}
  \label{fig:l2-cost-w2}
\end{subfigure}
\hfill
\begin{subfigure}[h!]{0.16\textwidth}
  \begin{center}
    \includegraphics[width=\linewidth]{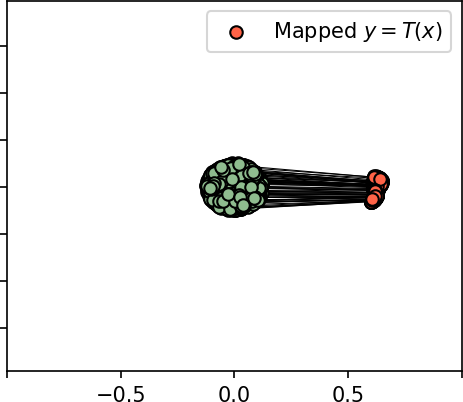}
  \end{center}
  \vspace{-2mm}
  \caption{\centering\small{IT map, $w\!=\!2$ (iter=8K).}}
  \label{fig:l2-cost-w2it}
\end{subfigure}
\hfill
\begin{subfigure}[h!]{0.16\textwidth}
  \begin{center}
    \includegraphics[width=\linewidth]{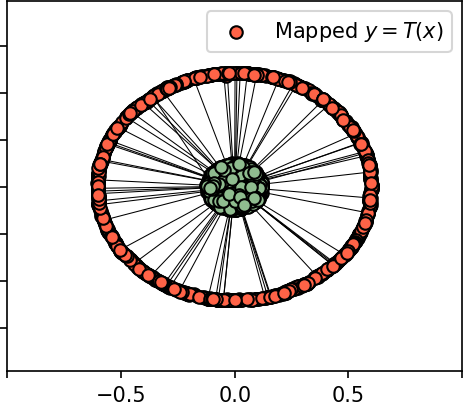}
  \end{center}
  \vspace{-2mm}
  \caption{\centering\small{IT map, $w\!=\!32$}}
  \label{fig:l2-cost-w32}
\end{subfigure}
\vspace{-2mm}
\caption{\centering Incomplete Transport (IT) maps learned with $c(x, y)=\|x-y\|_2$ transport cost in \textit{'Ball2Circle'} experiment. We observe that training is highly unstable for $w\in\{1,2\}$, see the solutions for nearby iterations of training - (b-c) and (d-e) respectively. Increase of the weight $w$ helps to improve the stability (f).}
\vspace{-2mm}
\end{figure}

\begin{figure}[h!]
\begin{subfigure}{0.265\textwidth}
  \begin{center}
    \includegraphics[width=\linewidth]{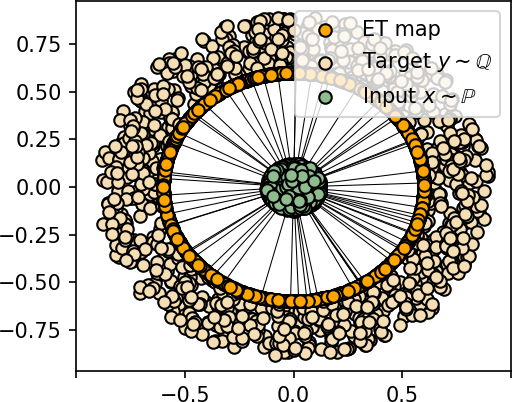}
  \end{center}
  \vspace{-2mm}
  \caption{\centering \normalsize Input, target  measures \protect\linebreak and ET map.}
  \label{fig:kernel-cost-pq}
\end{subfigure}
\hfill
\begin{subfigure}{0.23\textwidth}
  \begin{center}
    \includegraphics[width=\linewidth]{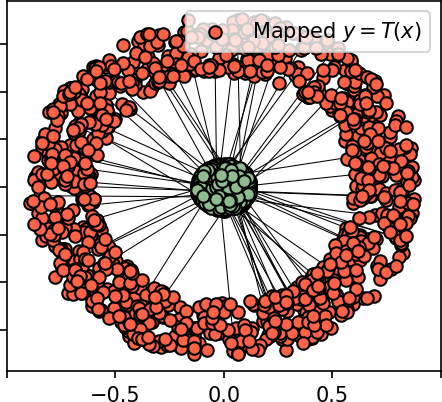}
  \end{center}
  \vspace{-2mm}
  \caption{\centering \normalsize IT map, \protect\linebreak $w\!=\!1$.}
  \label{fig:kernel-cost-w1}
\end{subfigure}
\hfill
\begin{subfigure}{0.23\textwidth}
  \begin{center}
    \includegraphics[width=\linewidth]{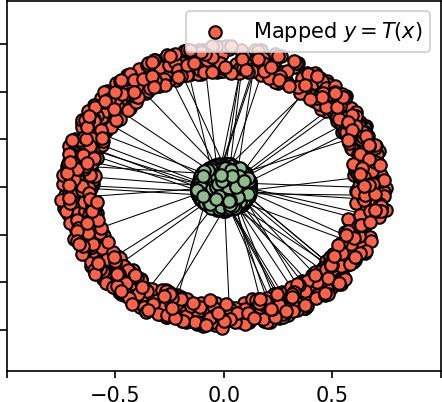}
  \end{center}
  \vspace{-2mm}
  \caption{\centering \normalsize IT map, \protect\linebreak$w\!=\!2$.}
  \label{fig:kernel-cost-w2}
\end{subfigure}
\hfill
\begin{subfigure}{0.23\textwidth}
  \begin{center}
    \includegraphics[width=\linewidth]{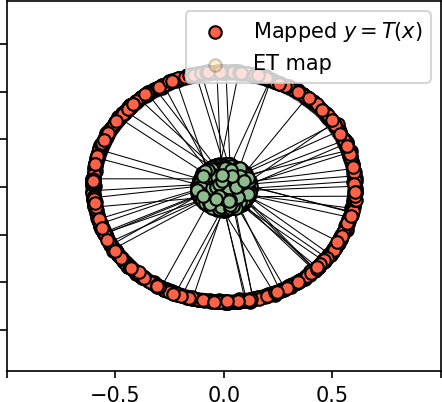}
  \end{center}
  \vspace{-2mm}
  \caption{\centering\normalsize IT map, \protect\linebreak$w\!=\!32$.}
  \label{fig:kernel-cost-w32}
\end{subfigure}
\vspace{-2mm}
\caption{\centering\normalsize Incomplete transport (IT) maps learned with weak kernel cost in \textit{'Ball2Circle'} experiment for a fixed noise $z$. Usage of the kernel regularization + stochastic map $T(x,z)$ helps to overcome instability issues for weights $w\in\{1,2\}$.}
\label{fig:kernel-cost}
\vspace{-1mm}
\end{figure}

\begin{table}[!h]
\centering
{
\small
    \begin{tabular}{c|c|c|c}
    \hline
     method / weight& $w=1$ & $w=2$ & $w=32$  \\
     \hline
    w/o kernel regularization& - & - & 0.0022 \\
    \hline
    with kernel regularization& 0.1377 & 0.0730 & 0.0495 \\
    \hline
    \end{tabular}
\vspace{1mm}
\caption{\centering\normalsize MSE($\hat{T}, T^*$) between IT maps ($w\in\{1,2,32\}$) learned with  $c(x, y)=\|x-y\|_2$  \protect\linebreak (with and without weak \textit{kernel regularization}) and the ground-truth ET map.  MSE for IT map learned with kernel regularization is larger (see $w=32$) than without the regularization since it introduces small bias to the optimization.
}
}
\end{table}
\textbf{Discussion.} We observe that without kernel regularization training of IT method is highly unstable (for $w\in\{1,2\}$), see Fig. \ref{fig:l2-cost-w1}-\ref{fig:l2-cost-w2}. Interestingly, with the increase of the weight $w$, the issue disappears and for $w=32$, IT map is close to the ground-truth ET one (Fig. \ref{fig:l2-cost-w32}).

The usage of kernel regularization helps to improve stability of the method, see Fig. \ref{fig:kernel-cost}. However, while MSE between learned and ground-truth ET map drops with the increase of $w$, for $w=32$ it is bigger than that of learned IT maps without regularization. It is expected, since using regularizations usually leads to some bias in the solutions.

Thus, the example shows that (a) fake solutions may be a problem, (b) kernel regularization from \citep{korotin2023kernel} may help to deal with them. However, further studying this aspect is out of the scope of the paper.

\subsection{Perceptual Cost}
\label{sec-exp-perceptual}
In this section, we show that the stated conclusions hold true for the transport costs other than $\ell^2$. For this purpose, we perform additional experiments using  \textit{perceptual} transport cost from \citep{gazdieva2022unpaired}. We use the same hyperparameters as in our experiments with $\ell^2$ cost.

\begin{figure}[h!]
\vspace{-2mm}
\begin{center}
     \includegraphics[width=\linewidth]{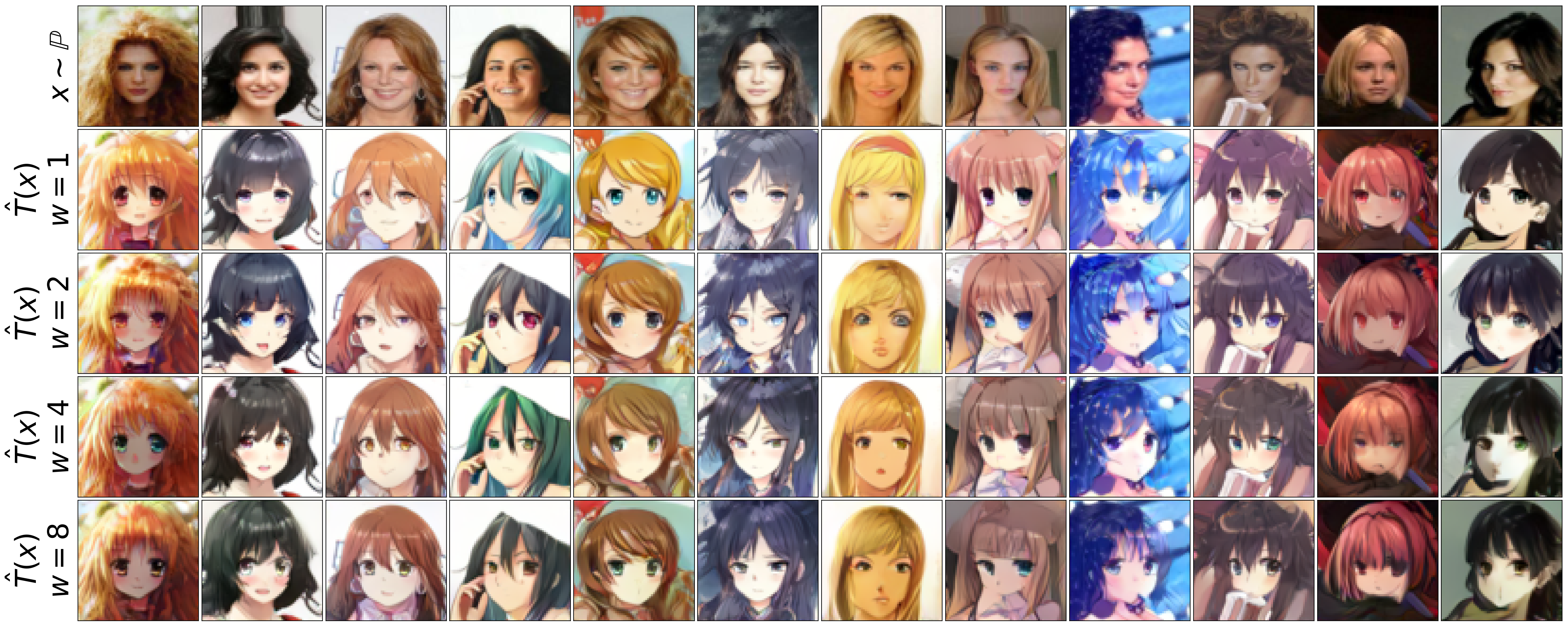}
  \end{center}
  \vspace{-2mm}
  \caption{\centering \textit{Celeba} (female) $\rightarrow$ \textit{anime} (64$\times$64 image size, perceptual cost).}
  \label{fig:celeba-anime-perceptual}
\end{figure}

\begin{table}[!h]
\centering
\small
\begin{tabular}{c|c|c|c|c}\hline
\textbf{Metrics} & $w=1$ & $w=2$ & $w=4$ & $w=8$ \\ \hline 
\textit{Test FID} & 9.21 & 12.98 & 17.24 & 22.08 \\ \hline
\textit{Test perceptual cost} & 0.954 & 0.794 & 0.667 & 0.545\\ \hline
\textit{Test $\ell^{2}$ cost} & 0.303 & 0.209 & 0.153 & 0.103\\ \hline
\end{tabular}
\vspace{1mm}
\caption{\centering Test FID and $\ell^{2}$, perceptual transport costs of our IT maps \linebreak (learned with perceptual transport cost).}
\label{table:perceptual_cost}
\end{table}

\textbf{Experimental results.} Qualitative results show that that similarity of input images $x$ and the images $\hat{T}(x)$ translated by our IT method trained with perceptual cost grows with the increase of the parameter $w$, see Figure \ref{fig:celeba-anime-perceptual}. In \ref{table:perceptual_cost}, we quantitatively verify these observations by showing that both perceptual and $\ell^2$ mean transport costs between input and translated images decrease with the increase of $w$. Interestingly, we see that IT trained with perceptual cost yields smaller FID than IT method trained with $\ell^2$ cost.

\subsection{Bigger Weight Parameters}
\label{sec-exp-bigger-weights}
In this section, we present the results of IT method trained with $\ell^2$ cost and parameters $w=16,32$. We see that in contrast to CycleGAN which does not translate face images to anime images in case of big parameters $\lambda$, our IT method continues to translate faces to anime even for big parameters $w$. We quantify the obtained results in Table \ref{table:bigger_weights}. As expected, mean transport costs are decreasing with the increase of $w$, while FID is slightly increasing. We present the qualitative results for additional weights in Figure \ref{fig:celeba-anime-add-it}.

}

\newpage

\begin{table}[!h]
\centering
\small
\begin{tabular}{c|c|c|c|c|c|c}\hline
\multirow{2}{*}{\textbf{Metrics}} &\multicolumn{4}{c}{Main results}& \multicolumn{2}{|c}{Additional results}   \\ \cline{2-7}
 & $w=1$ & $w=2$ & $w=4$ & $w=8$ & $w=16$ & $w=32$  \\ \hline 
\textit{Test FID} & 14.65 & 20.79 & 22.18 & 22.84 & 24.86 & 28.28 \\ \hline
\textit{Test $\ell^2$ cost} & 0.297 & 0.154 & 0.133 & 0.094 & 0.091 & 0.083 \\ \hline
\end{tabular}
\vspace{1mm}
\caption{\centering Test FID and $\ell^{2}$ transport costs of our IT maps.}
\label{table:bigger_weights}
\end{table}

\subsection{Additional Results for $\ell^2$ Cost}
\label{sec-exp-extras}

\begin{figure}[h!]
\begin{center}
     \includegraphics[width=\linewidth]{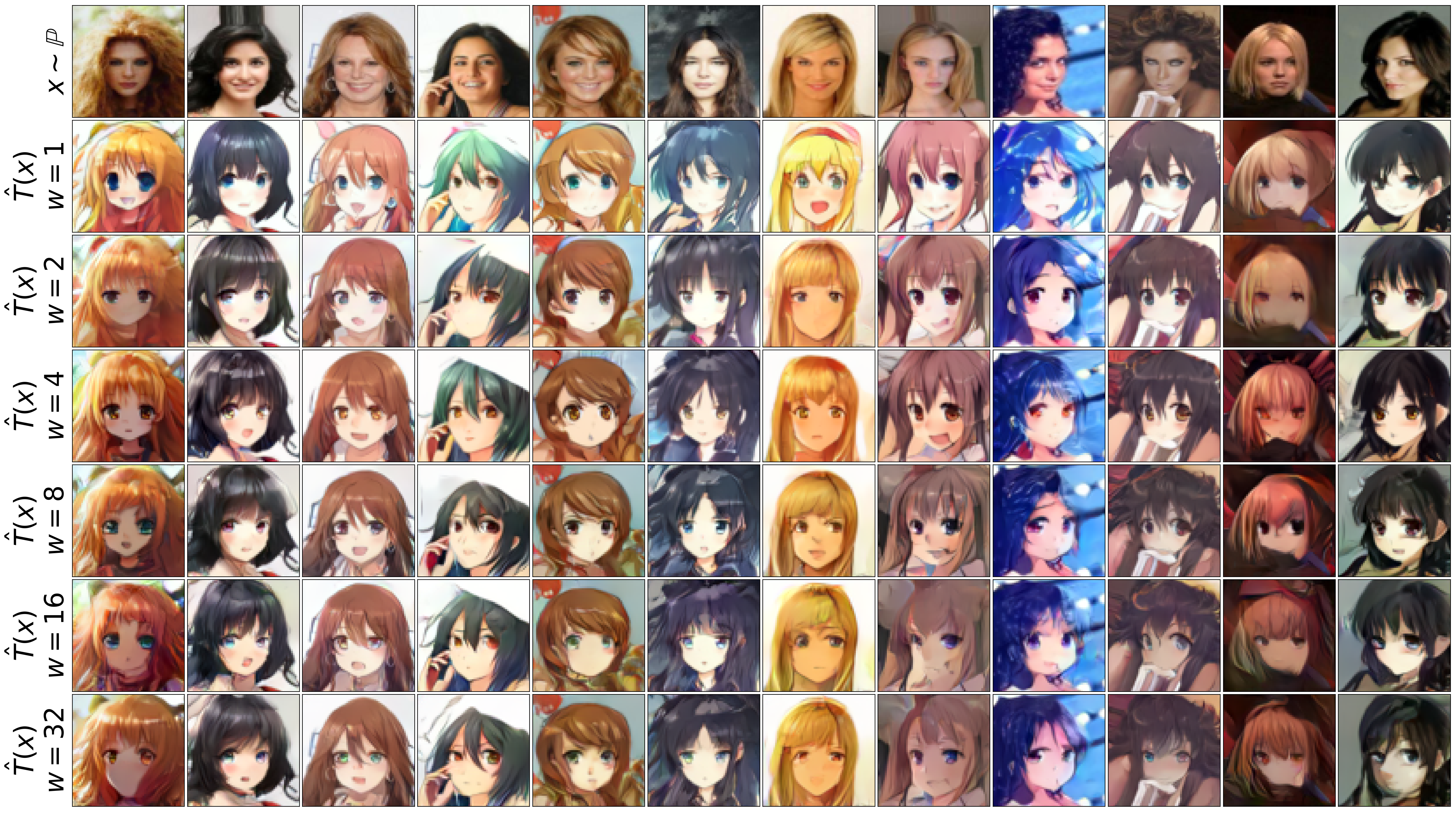}
  \end{center}
  \vspace{-2mm}
  \caption{\centering \textit{Celeba} (female) $\rightarrow$ \textit{anime} (64$\times$64).}
  \label{fig:celeba-anime-add-it}
\end{figure}

\begin{figure}[h!]
\vspace{-2mm}
\begin{center}
     \includegraphics[width=\linewidth]{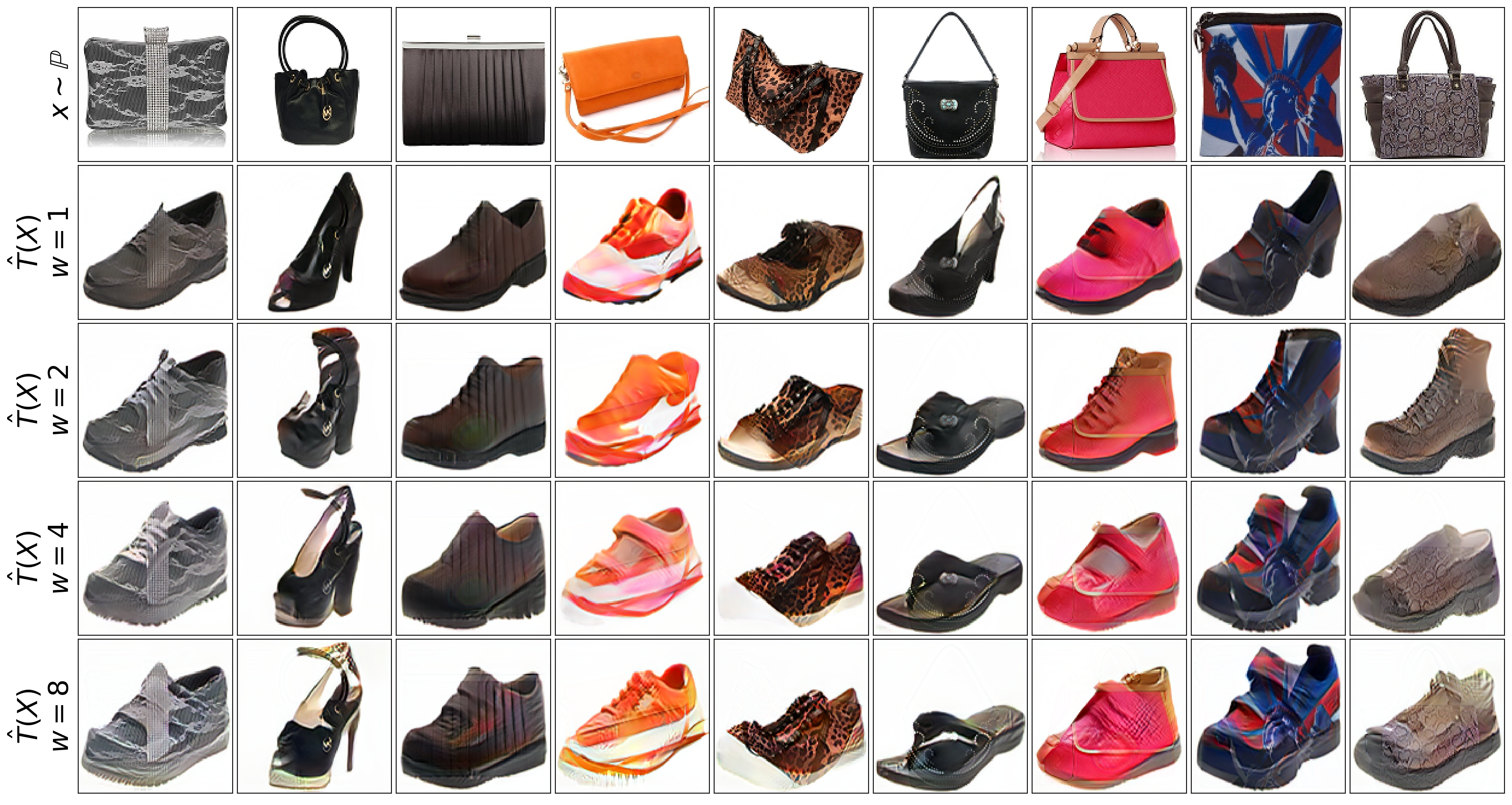}
  \end{center}
  \vspace{-2mm}
  \caption{\centering \textit{Handbag} $\rightarrow$ \textit{shoes} (128$\times$128).}
  \label{fig:handbag-shoes-it-add}
\end{figure}

\newpage

\begin{figure}[h!]
\begin{center}
     \includegraphics[width=\linewidth]{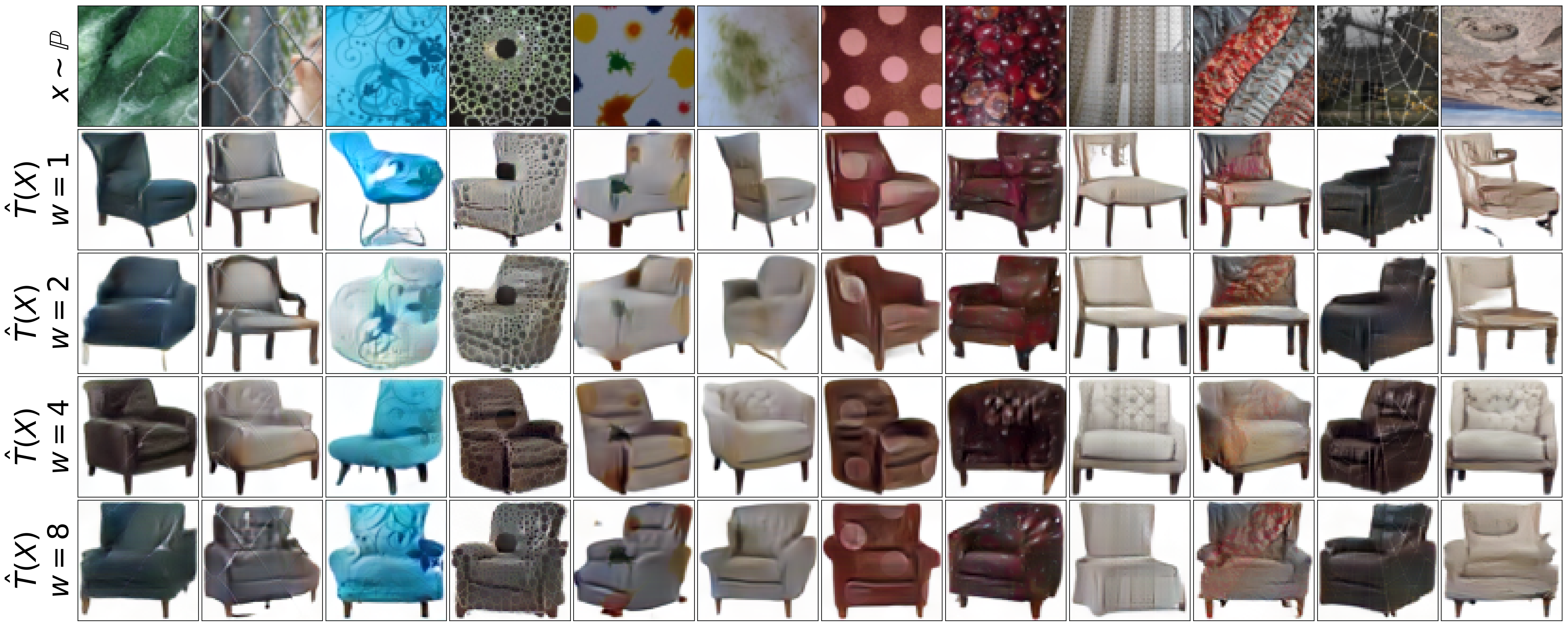}
  \end{center}
  \vspace{-2mm}
  \caption{\centering \textit{Textures} $\rightarrow$ \textit{chairs} (64$\times$64).}
  \label{fig:textures-chairs-it-add}
\end{figure}

\begin{figure}[h!]
\vspace{-2mm}
\begin{center}
     \includegraphics[width=\linewidth]{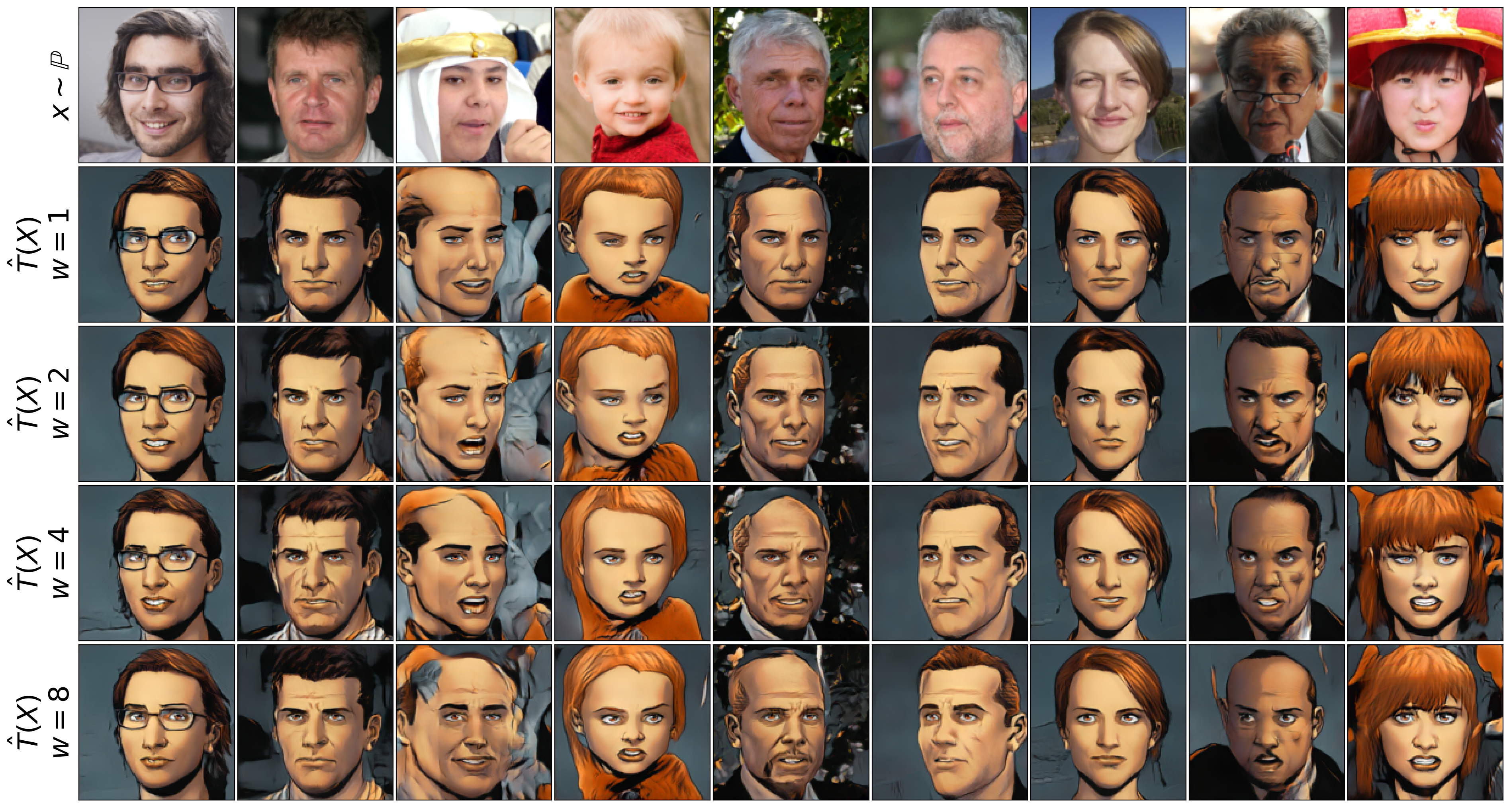}
  \end{center}
  \vspace{-2mm}
  \caption{\centering \textit{Ffhq} $\rightarrow$ \textit{comics} (128$\times$128).}
  \label{fig:ffhq-cimic-it-add}
\end{figure}

\end{document}